\def\hyphenateAndTtWholeString #1{\xHyphenate#1$\wholeString\unskip}
\def\xHyphenate#1#2\wholeString {\if#1$%
    \else\transform{#1}%
    \takeTheRest#2\ofTheString\fi}
\def\takeTheRest#1\ofTheString\fi
\def\transform#1{\url{#1}\hskip 0pt plus 1pt}
\def\urlx #1{\href{#1}{\hyphenateAndTtWholeString{#1}}}
\providecommand{\R}{\ensuremath \mathbb{R}}
\providecommand{\N}{\ensuremath \mathbb{N}}
\newtheorem{defn}{Definition}
\newtheorem{rem}[defn]{Remark}
\newtheorem{lem}[defn]{Lemma}
\newtheorem{prop}[defn]{Proposition}
\newtheorem{assum}[defn]{Assumption}
\newtheorem{thm}[defn]{Theorem}
\providecommand{\methodname}{\text{REFINE}}
\providecommand{\Int}{\texttt{int}}
\providecommand{\Sum}{\texttt{sum}}
\providecommand{\Zaug}{\mathcal Z^{\text{aug}}}
\providecommand{\Zvel}{\mathcal Z^{\text{vel}}}
\providecommand{\zaug}{z^{\text{aug}}}
\providecommand{\zaugp}{z^{\text{aug}+}}
\providecommand{\zhi}{z^{\text{hi}}}
\providecommand{\zlo}{z^{\text{lo}}}
\providecommand{\zvel}{z^{\text{vel}}}
\providecommand{\zpos}{z^{\text{pos}}}
\providecommand{\cvel}{c^{\text{vel}}}
\providecommand{\Gvel}{G^{\text{vel}}}
\providecommand{\dzaug}{\dot z^{\text{aug}}}
\providecommand{\dzhi}{\dot z^{\text{hi}}}
\providecommand{\dzlo}{\dot z^{\text{lo}}}
\providecommand{\vlo}{v^{\text{lo}}}
\providecommand{\rlo}{r^{\text{lo}}}
\providecommand{\whlspd}{\omega_{\text{i}}}
\providecommand{\diag}{\texttt{diag}}
\providecommand{\slice}{\texttt{slice}}
\providecommand{\rot}{\texttt{rot}}
\providecommand{\ROT}{\texttt{ROT}}
\providecommand{\cost}{\texttt{cost}}
\providecommand{\Oego}{\mathcal O^\text{ego}}
\providecommand{\vegomax}{\nu^\text{ego}}
\providecommand{\vobsmax}{\nu^\text{obs}}
\providecommand{\tz}{t_0}
\providecommand{\tplan}{t_\text{plan}}
\providecommand{\tnb}{t_\text{m}}
\providecommand{\tf}{t_\text{f}}
\providecommand{\tb}{t_\text{brake}}
\providecommand{\tstop}{t_\text{stop}}
\providecommand{\tfstop}{t_\text{fstop}}
\providecommand{\tsmall}{t_\text{small}}
\providecommand{\udes}{u^\text{des}}
\providecommand{\dudes}{\dot u^\text{des}}
\providecommand{\ubrk}{u^\text{brake}}
\providecommand{\usmall}{u^\text{small}}
\providecommand{\eu}{e_u}
\providecommand{\deu}{\dot e_u}
\providecommand{\boff}{b_u^\text{off}}
\providecommand{\bpro}{b_u^\text{pro}}
\providecommand{\hdes}{h^\text{des}}
\providecommand{\rdes}{r^\text{des}}
\providecommand{\drdes}{\dot r^\text{des}}
\providecommand{\fhi}{f^{\text{hi}}}
\providecommand{\flo}{f^{\text{lo}}}
\providecommand{\amax}{a^{\text{dec}}}
\providecommand{\hmid}{h^\text{mid}}
\providecommand{\hrad}{h^\text{rad}}
\providecommand{\uc}{u^\text{cri}}
\providecommand{\lambdac}{\lambda^\text{cri}}
\providecommand{\alphac}{\alpha^\text{cri}}
\providecommand{\opt}{(\texttt{Opt}) }
\providecommand{\FRS}{\mathcal{F}_{xy}}
\providecommand{\Psparse}{\mathcal{P}^\text{sparse}}
\providecommand{\Pdense}{\mathcal{P}^\text{dense}}
\providecommand{\W}{\mathcal{W}}
\providecommand{\Z}{\mathcal{Z}}
\renewcommand{\P}{\mathcal{P}}
\providecommand{\B}{\mathcal{B}}
\providecommand{\Z}{\mathcal{Z}}
\providecommand{\T}{\mathcal{T}}
\providecommand{\RR}{\mathcal{R}}
\providecommand{\I}{\mathcal{I}}
\providecommand{\J}{\mathcal{J}}
\providecommand{\F}{\mathcal{F}}
\renewcommand{\SS}{\mathcal{S}}
\providecommand{\W}{\mathcal{W}}
\providecommand{\OO}{\mathcal{O}}
\providecommand{\FRS}{FRS}
\providecommand{\T}{\ensuremath T}
\newcommand{\zonocg}[2]{ \text{\textless} #1,\; #2 \text{\textgreater}}
\newcommand{\stkout}[1]{\ifmmode\text{\sout{\ensuremath{#1}}}\else\sout{#1}\fi}
\begin{document}

\title{\methodname: Reachability-based Trajectory Design using Robust Feedback Linearization and Zonotopes }
\author{Jinsun Liu$^*$, Yifei Shao$^*$, Lucas Lymburner, Hansen Qin, Vishrut Kaushik,\\ Lena Trang, Ruiyang Wang, Vladimir Ivanovic, H. Eric Tseng, and Ram Vasudevan
\thanks{Jinsun Liu, Lucas Lymburner, Vishrut Kaushik, Ruiyang Wang, and Ram Vasudevan are with the Department of Robotics, University of Michigan, Ann Arbor, MI 48109. \texttt{\{jinsunl, llyburn, vishrutk, ruiyangw, ramv\}@umich.edu}.}
\thanks{Yifei Shao is with the Department of Computer and Information Science, University of Pennsylvania, Philadelphia, PA 19104. \texttt{yishao@seas.upenn.edu}.}
\thanks{Hansen Qin is with the Department of Mechanical Engineering, University of Michigan, Ann Arbor, MI 48109. \texttt{qinh@umich.edu}.}
\thanks{Lena Trang is with the College of Engineering, Ann Arbor, MI 48109. \texttt{ltrang@umich.edu}.}
\thanks{Vladimir Ivanovic and Eric Tseng are with Ford Motor Company. \texttt{\{vivanovi, htseng\}@ford.com}. }
\thanks{This work is supported by the Ford Motor Company via the Ford-UM Alliance under award N022977.}
\thanks{$*$These two authors contributed equally to this work.}
}

\maketitle

\begin{abstract}
Performing real-time receding horizon motion planning for autonomous vehicles while providing safety guarantees remains difficult.
This is because existing methods to accurately predict ego vehicle behavior under a chosen controller use online numerical integration that requires a fine time discretization and thereby adversely affects real-time performance.
To address this limitation, several recent papers have proposed to apply offline reachability analysis to conservatively predict the behavior of the ego vehicle.
This reachable set can be constructed by utilizing a simplified model whose behavior is assumed \textit{a priori} to conservatively bound the dynamics of a full-order model.
However, guaranteeing that one satisfies this assumption is challenging. 
This paper proposes a framework named \methodname{} to overcome the limitations of these existing approaches.
\methodname{} utilizes a parameterized robust controller that partially linearizes the vehicle dynamics even in the presence of modeling error.
Zonotope-based reachability analysis is then performed on the closed-loop, full-order vehicle dynamics to compute the corresponding control-parameterized, over-approximate Forward Reachable Sets (FRS).
Because reachability analysis is applied to the full-order model, the potential conservativeness introduced by using a simplified model is avoided.
The pre-computed, control-parameterized FRS is then used online in an optimization framework to ensure safety.
The proposed method is compared to several state of the art methods during a simulation-based evaluation on a full-size vehicle model and is evaluated on a $\frac{1}{10}$th race car robot in real hardware testing.
In contrast to existing methods, \methodname{} is shown to enable the vehicle to safely navigate itself through complex environments. 
\end{abstract}

\begin{IEEEkeywords}
Motion and path planning, robot safety, reachability analysis, control, zonotopes.
\end{IEEEkeywords}


\IEEEpeerreviewmaketitle

\section{Introduction}
\begin{figure}[t]
    \centering
    \includegraphics[{trim=0cm, 10cm, 16.5cm, 0cm}, width=1.0\columnwidth,clip=true]{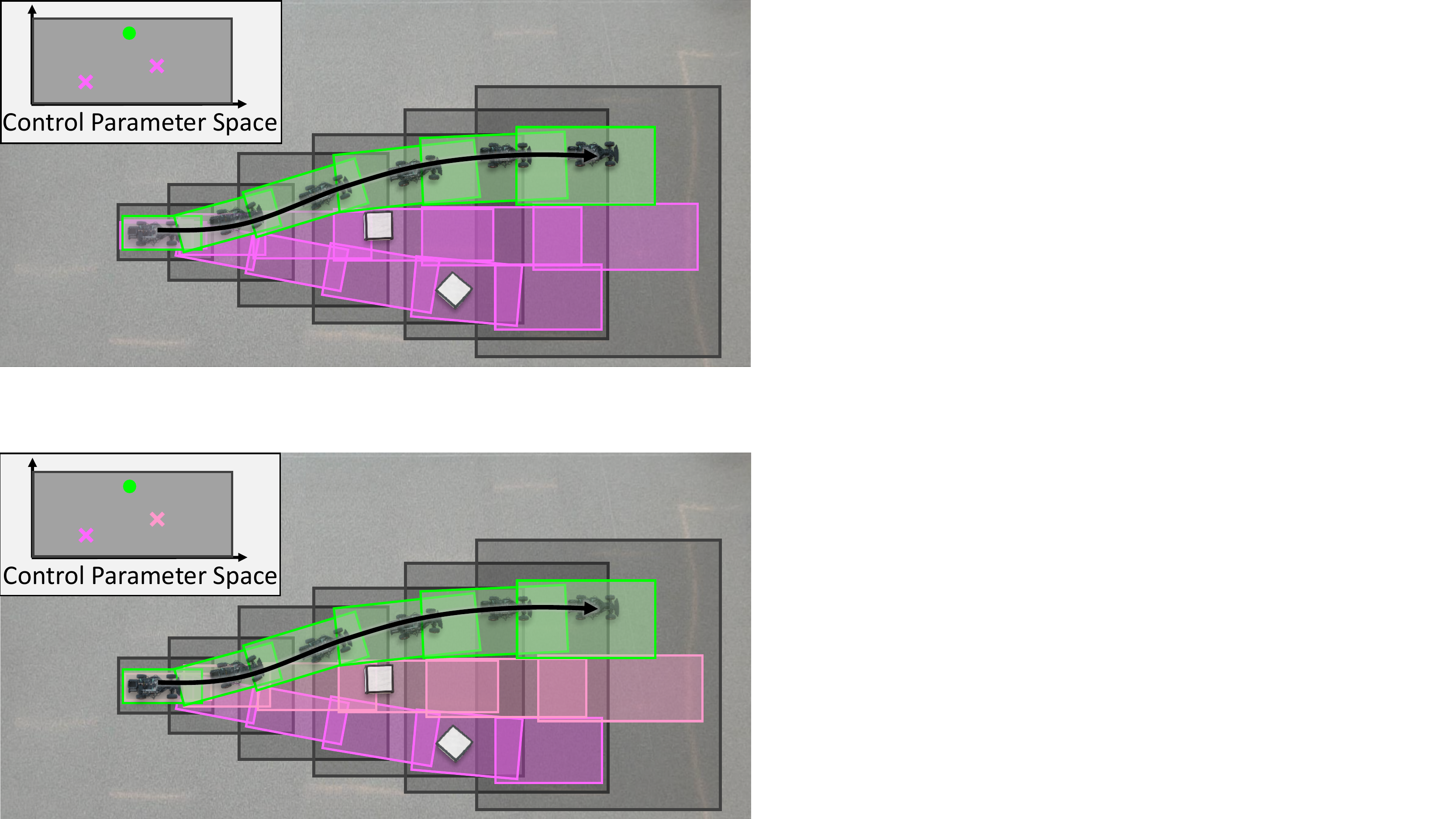}
    \caption{
    \methodname{} first designs a robust controller to track parameterized reference trajectories by feedback linearizing a subset of vehicle states.
    \methodname{} then performs offline reachability analysis using a closed-loop full-order vehicle dynamics to construct a control-parameterized, zonotope reachable sets (shown as grey boxes) that over-approximate all possible behaviors of the vehicle model over the planning horizon. 
    During online planning, \methodname{} computes a parameterized controller that can be safely applied to the vehicle by solving an optimization problem, which selects subsets of pre-computed zonotope reachable sets that are guaranteed to be collision free.  
    In this figure, subsets of grey zonotope reachable sets corresponding to the control parameter shown in green ensures a collision-free path while the other two control parameters shown magenta 
    might lead to collisions with white obstacles.
    }
    \label{fig: high level}
    \vspace*{-0.5cm}
\end{figure}
Autonomous vehicles are expected to operate safely in unknown environments with limited sensing horizons.
Because new sensor information is received while the autonomous vehicle is moving, it is vital to plan trajectories using a receding-horizon strategy in which the vehicle plans a new trajectory while executing the trajectory computed in the previous planning iteration. 
It is desirable for such motion planning frameworks to satisfy three properties:
First, they should ensure that any computed trajectory is dynamically realizable by the vehicle.
Second, they should operate in real time so that they can react to newly acquired environmental information collected. 
Finally, they should verify that any computed trajectory when realized by the vehicle does not give rise to collisions.
This paper develops an algorithm to satisfy these three requirements by designing a robust, partial feedback linearization controller and performing zonotope-based reachability analysis on a full-order vehicle model.

We begin by summarizing related works on trajectory planning and discuss their potential abilities to ensure safe performance of the vehicle in real-time.
To generate safe motion plan in real-time while satisfying vehicle dynamics, it is critical to have accurate predictions of vehicle behavior over the time horizon in which planning is occurring. 
Because vehicle dynamics are nonlinear, closed-form solutions of vehicle trajectories are incomputable and approximations to the vehicle dynamics are utilized.
For example, sampling-based methods typically discretize the system dynamic model or state space to explore the environment and find a path, which reaches the goal location and is optimal with respect to a user-specified cost function \cite{lavalle2001randomized,janson2015fast}.
To model vehicle dynamics during real-time planning, sampling-based methods apply online numerical integration and buffer obstacles to compensate for numerical integration error \cite{elbanhawi2014sampling,lavalle2006planning,kuwata2009real}. 
Ensuring that a numerically integrated trajectory can be dynamically realized and be collision-free can require applying fine time discretization.
This typically results in an undesirable trade-off between these two properties and real-time operation. 
Similarly, Nonlinear Model Predictive Control (NMPC) uses time discretization to generate an approximation of solution to the vehicle dynamics that is embedded in optimization program to compute a control input that is dynamically realizable while avoiding obstacles \cite{howard2007optimal, falcone2008low, urmson2008autonomous, wurts2018collision}. 
Just as in the case of sampling-based methods, NMPC also suffers from the undesirable trade-off between safety and real-time operation.

To avoid this undesirable trade-off, researchers have begun to apply  reachability-based analysis.
Traditionally reachability analysis was applied to verify that a pre-computed trajectory could be executed safely \cite{althoff2014online, pek2020using}. 
More recent techniques apply offline reachable set analysis to compute an over-approximation of the Forward Reachable Set (FRS), which collects all possible behaviors of the vehicle dynamics over a fixed-time horizon.
Unfortunately computing this FRS is challenging for systems that are nonlinear or high dimensional.
To address this challenge, these reachability-based techniques have focused on pre-specifying a set of maneuvers and simplifying the dynamics under consideration.
For instance, the funnel library method \cite{majumdar2017funnel} computes a finite library of funnels for different maneuvers and over approximates the FRS of the corresponding maneuver by applying Sums-of-Squares (SOS) Programming.
Computing a rich enough library of maneuvers and FRS to operate in complex environments can be challenging and result in high memory consumption.
To avoid using a finite number of maneuvers, a more recent method called Reachability-based Trajectory Design (RTD) was proposed \cite{kousik2020bridging} that considers a continuum of trajectories and applies SOS programming to represent the FRS of a dynamical system as a polynomial level set.
This polynomial level set representation can be formulated as functions of time for collision checking \cite{kousik2017safe,vaskov2019not,vaskov2019towards}.
Although such polynomial approximation of the FRS ensures strict vehicle safety guarantees while maintaining online computational efficiency, SOS optimization still struggles with high dimensional systems.
As a result, RTD still relies on using a simplified, low-dimensional nonlinear model that is assumed to bound the behavior of a full-order vehicle model. 
Unfortunately it is difficult to ensure that this assumption is satisfied. 
More troublingly, this assumption can make the computed FRS overly conservative because the high dimensional properties of the full-order model are treated as disturbances within the simplified model.

These aforementioned reachability-based approaches still pre-specify a set of trajectories for the offline reachability analysis.
To overcome this issue, recent work has applied a Hamilton-Jacobi-Bellman based-approach \cite{bansal2017hamilton}  to pose the offline reachability analysis as a
differential game between a full-order model and
a simplified planning model \cite{herbert2017fastrack}.
The reachability analysis computes the tracking error between the full-order and planning models, and an associated controller to keep the error within the computed bound at run-time.
At run-time, one buffers obstacles by this bound, then ensures that the planning model can only plan outside of the buffered obstacles. 
This approach can be too conservative in practice because
the planning model is treated as if it is trying to escape from the high-fidelity model.

To address the limitations of existing approaches, this paper proposes a real-time, receding-horizon motion planning algorithm  named REchability-based trajectory design using robust Feedback lInearization and zoNotopEs (\methodname{}) depicted in Figure \ref{fig: high level} that builds on the reachability-based approach developed in \cite{kousik2020bridging} by using feedback linearization and zonotopes.
This papers contributions are three-fold:
First, a novel parameterized robust controller that partially linearizes the vehicle dynamics even in the presence of modeling error.
Second, a method to perform zonotope-based reachability analysis on a closed-loop, full-order vehicle dynamics to compute a control-parameterized, over-approximate Forward Reachable Sets (FRS) that describes the vehicle behavior.
Because reachability analysis is applied to the full-order model, potential conservativeness introduced by using a simplified model is avoided.
Finally, an online planning framework that performs control synthesis in a receding horizon fashion by solving optimization problems in which the offline computed FRS approximation is used to check against collisions.
This control synthesis framework applies to All-Wheel, Front-Wheel, or Rear-Wheel-Drive vehicle models.

The rest of this manuscript is organized as follows:
Section \ref{sec: prelim} describes necessary preliminaries and Section \ref{sec: dynamics} describes the dynamics of Front-Wheel-Drive vehicles.
Section \ref{sec: trajectory design and safety} explains the trajectory design and vehicle safety in considered dynamic environments.
Section \ref{sec: controller} formulates the robust partial feedback linearization controller.
Section \ref{sec: rtd} describes Reachability-based Trajectory Design and how to perform offline reachability analysis using zonotopes.
Section \ref{sec: online} formulates the online planning using an optimization program,
and in Section \ref{sec:implementation} the proposed method is extended to various perspectives including All-Wheel-Drive and Rear-Wheel-Drive vehicle models.
Section \ref{sec:experiment} describes how the proposed method is evaluated and compared to other state of the art methods in simulation and in hardware demo on a 1/10th race car model.
And Section \ref{sec: conclusion} concludes the paper.

\section{Preliminaries}
\label{sec: prelim}
This section defines notations and set representations that are used throughout the remainder of this manuscript.
Sets and subspaces are typeset using calligraphic font.
Subscripts are primarily used as an index or to describe an particular coordinate of a vector.

Let $\R$, $\R_+$ and $\N$ denote the spaces of real numbers, real positive numbers, and natural numbers, respectively.
Let $0_{n_1\times n_2}$ denote the $n_1$-by-$n_2$ zero matrix.
The Minkowski sum between two sets $\mathcal A$ and $\mathcal A'$ is $\mathcal A\oplus \mathcal A' = \{a+a'\mid a\in \mathcal A, ~a'\in \mathcal A'\}$.
The power set of a set $\mathcal A$ is denoted by $P(\mathcal A)$.
Given vectors $\alpha,\beta\in\R^n$, let $[\alpha]_i$ denote the $i$-th element of $\alpha$, 
let $\Sum(\alpha)$ denote the summation of all elements of $\alpha$, let $\|\alpha\|$ denote the Euclidean norm of $\alpha$,
let $\diag(\alpha)$ denote the diagonal matrix with $\alpha$ on the diagonal, and let $\Int(\alpha,\beta)$ denote the $n$-dimensional box $\{\gamma \in\R^n\mid [\alpha]_i\leq[\gamma]_i\leq[\beta]_i,~\forall i=1,\ldots,n\}$.
Given $\alpha \in \R^n$ and $\epsilon > 0 $, let $\mathcal{B}(\alpha, \epsilon)$ denote the $n$-dimensional closed ball with center $\alpha$ and radius $\epsilon$ under the Euclidean norm.
Given arbitrary matrix $A\in\R^{n_1\times n_2}$, let $A^\top$ be the transpose of $A$, let $[A]_{i:}$ and $[A]_{:i}$ denote the $i$-th row and column of $A$ for any $i$ respectively, and let $|A|$ be the matrix computed by taking the absolute value of every element in $A$.

Next, we introduce a subclass of polytopes, called zonotopes, that are used throughout this paper:
\begin{defn}
\label{def: zonotope}
A \emph{zonotope} $\Z$ is a subset of $\R^n$ defined as
\begin{equation}
    \mathcal Z = \left\{x\in\R^n\mid x= c+\sum_{k=1}^\ell \beta_k g_k, \quad \beta_k \in [-1,1] \right\}
\end{equation}
with \emph{center} $c\in\R^n$ and $\ell$ \emph{generators} $g_1,\ldots,g_\ell\in\R^n$.
For convenience, we denote $\mathcal Z$ as $\zonocg{c}{G}$ where $G = [g_1, g_2, \ldots, g_\ell ]\in\R^{n\times\ell}$.
\end{defn}
\noindent Note that an $n$-dimensional box is a zonotope because
\begin{equation}
\label{eq: interval is zonotope}
    \Int(\alpha,\beta) = \zonocg{\frac{1}{2}(\alpha+\beta)}{\frac{1}{2}\diag(\beta-\alpha)}.    
\end{equation}
By definition the Minkowski sum of two arbitrary zonotopes  $\Z_1 = \zonocg{c_1}{G_1}$ and $\Z_2=\zonocg{c_2}{G_2}$ is still a zonotope as $\Z_1\oplus\Z_2 = \zonocg{c_1+c_2}{[G_1,G_2]}$.
Finally, one can define the multiplication of a matrix $A$ of appropriate size with a zonotope $\Z=\zonocg{c}{G}$ as
\begin{equation}
\label{eq: zono-matrix mult}
   A \Z = \left\{x\in\R^n\mid x= A c+\sum_{k=1}^\ell \beta_k A g_k, ~ \beta_k \in [-1,1] \right\}.
\end{equation}
Note in particular that $A \Z$ is equal to the zonotope $\zonocg{A c}{A G}$.


\section{Vehicle Dynamics}
\label{sec: dynamics}

This section describes the vehicle models that we used in both high-speed and low-speed scenarios throughout this manuscript for autonomous navigation with safety concerns.

\subsection{Vehicle Model}

\begin{figure}[t]
    \centering
    \includegraphics[trim={0cm, 0cm, 20cm, 8cm}, clip,width=0.7\columnwidth,clip=true]{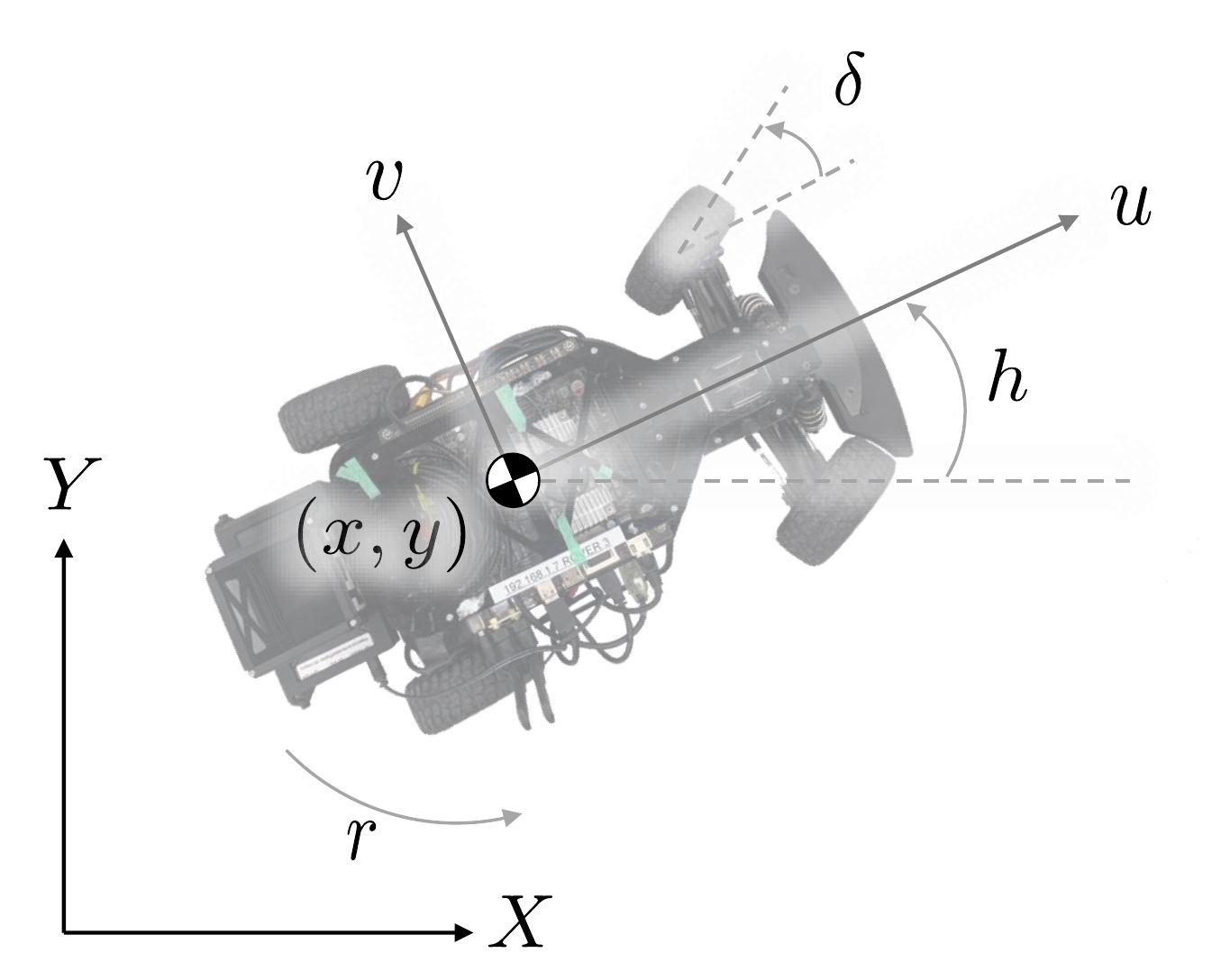}
    \caption{Vehicle model with the global frame shown in black and body frame in gray.}
    \label{fig: vehicle model}
    \vspace*{-0.5cm}
\end{figure}

The approach described in this paper can be applied to a front-wheel-drive (FWD), rear-wheel drive (RWD), or all-wheel drive (AWD) vehicle models. 
However, to simplify exposition, we focus on how the approach applies to FWD vehicles and describe how to extend the approach to AWD or RWD vehicles in Section \ref{sec: AWD}.
To simplify exposition, we attach a body-fixed coordinate frame in the horizontal plane to the vehicle as shown in Fig. \ref{fig: vehicle model}. 
This body frame's origin is the center of mass of the vehicle, and its axes are aligned with the longitudinal and lateral directions of the vehicle.
Let $\zhi(t) = [x(t),y(t),h(t),u(t),v(t),r(t)]^\top\in \R^6$  be the states of the vehicle model at time $t$, where $x(t)$ and $y(t)$ are the position of vehicle's center of mass in the world frame, $h(t)$ is the heading of the vehicle in the world frame, $u(t)$ and $v(t)$ are the longitudinal and lateral speeds of the vehicle in its body frame, $r(t)$ is the yaw rate of the vehicle center of mass, and $\delta(t)$ is the steering angle of the front tire. 
To simplify exposition, we assume vehicle weight is uniformly distributed and ignore the aerodynamic effect while modeling the flat ground motion of the vehicles by the following dynamics \cite[Chapter 10.4]{jazar2008vehicle}: 
\begin{equation}
\renewcommand*{\arraystretch}{1.2} 
    \label{eq:highspeed_perfect}
     \dzhi(t)= \begin{bmatrix} \dot{x}(t) \\ \dot{y}(t) \\ \dot{h}(t) \\ \dot{u}(t) \\ \dot{v}(t) \\ \dot{r}(t)
     \end{bmatrix} = 
    \begin{bmatrix}
        u(t)\cos h(t) - v(t)\sin h(t)\\ u(t)\sin h(t) + v(t)\cos h(t) \\ r(t)\\
        \frac{1}{m}\big(F_\text{xf}(t)+F_\text{xr}(t)\big) + v(t)r(t) \\ 
        \frac{1}{m}\big(F_\text{yf}(t)+F_\text{yr}(t)\big)-u(t)r(t)\\
        \frac{1}{I_\text{zz}} \big(l_\text{f} F_\text{yf}(t) - l_\text{r} F_\text{yr}(t)\big)
    \end{bmatrix}
\end{equation}
where $l_\text{f}$ and $l_\text{r}$ are the distances from center of mass to the front and back of the vehicle, $I_\text{zz}$ is the vehicle's moment of inertia, and $m$ is the vehicle's mass.
Note: $l_\text{f}$, $l_\text{r}$, $I_\text{zz}$ and $m$ are all constants and are assumed to be known.
The tire forces along the longitudinal and lateral directions of the vehicle at time $t$ are $F_\text{xi}(t)$ and $F_\text{yi}(t)$ respectively, where the `i' subscript can be replaced by `f' for the front wheels or `r' for the rear wheels.


To describe the tire forces along the longitudinal and lateral directions, we first define the \emph{wheel slip ratio} as 
\begin{equation}
\label{eq:wheel_slip_ratio}
    \lambda_\text{i}(t) = \begin{dcases}
    \frac{r_\text{w}\whlspd(t)-u(t)}{u(t)} ~\quad \text{during braking}\\
    \frac{r_\text{w}\whlspd(t)-u(t)}{r_\text{w}\whlspd(t)} ~\quad \text{during acceleration}
    \end{dcases}
\end{equation}
where the `i' subscript can be replaced as described above by 'f' for the front wheels or 'r' for the rear wheels, $r_\text{w}$ is the wheel radius, $\whlspd(t)$ is the tire-rotational speed at time $t$, braking corresponds to whenever $r_\text{w}\whlspd(t)-u(t) < 0$, and acceleration corresponds to whenever $r_\text{w}\whlspd(t)-u(t) \geq 0$. 
Then the longitudinal tire forces \cite[Chapter 4]{ulsoy2012automotive} are computed as
\begin{align}
    F_\text{xf}(t) &=  \frac{mgl_\text{r}}{l}\mu(\lambda_\text{f}(t)), \label{eq: Fxf def}\\
    F_\text{xr}(t) &=  \frac{mgl_\text{f}}{l}\mu(\lambda_\text{r}(t)), \label{eq: Fxr def}
\end{align}
where $g$ is the gravitational acceleration constant, $l = l_\text{f}+l_\text{r}$, and $\mu(\lambda_\text{i}(t))$ gives the surface-adhesion coefficient and is a function of the surface being driven on \cite[Chapter 13.1]{ulsoy2012automotive}.
Note that in FWD vehicles, the longitudinal rear wheel tire force has a much simpler expression:
\begin{rem}[\cite{jazar2008vehicle}]
\label{rem: FWD Fxr}
In a FWD vehicle, $F_\text{xr}(t) = 0$ for all $t$.
\end{rem}

For the lateral direction, define \emph{slip angles} of front and rear tires as 
\begin{align}
    \alpha_\text{f}(t) &= \delta(t) - \frac{v(t)+l_\text{f}r(t)}{u(t)}, \label{eq: alpha_f def}\\
    \alpha_\text{r}(t) &= - \frac{v(t)-l_\text{r}r(t)}{u(t)}, \label{eq: alpha_r def}  
\end{align}
then the lateral tire forces \cite[Chapter 4]{ulsoy2012automotive} are real-valued functions of the slip angles:
\begin{align}
    F_\text{yf}(t) &= c_{\alpha\text{f}}(\alpha_\text{f}(t)), \label{eq: Fyf def}\\
    F_\text{yr}(t) &= c_{\alpha\text{r}}(\alpha_\text{r}(t)). \label{eq: Fyr def}
\end{align}

Note $\mu,~c_{\alpha\text{f}}$ and $c_{\alpha\text{r}}$ are all nonlinear functions, but share similar characteristics. 
In particular, they behave linearly when the slip ratio and slip angle are close to zero, but saturate when the magnitudes of the slip ratio and slip angle reach some critical values of $\lambdac$ and $\alphac$, respectively, then decrease slowly \cite[Chapter 4, Chapter 13]{ulsoy2012automotive}.
As we describe in Section \ref{sec: satisfaction of linear regime}, during trajectory optimization we are able to guarantee that $\mu,~c_{\alpha\text{f}}$ and $c_{\alpha\text{r}}$ operate in the linear regime. 
As a result, to simplify exposition until we reach Section \ref{sec: satisfaction of linear regime}, we make the following assumption:
\begin{assum}
\label{ass: small slip} 
The absolute values of the the slip ratio and angle are bounded below their critical values (i.e., $|\lambda_\text{f}(t)|,|\lambda_\text{r}(t)|<\lambdac$ and $|\alpha_\text{f}(t)|, |\alpha_\text{r}(t)|<\alphac$ hold for all time).
\end{assum}
\noindent Assumption \ref{ass: small slip} ensures that the longitudinal tire forces can be described as
\begin{equation}
\label{eq: linear long tire force}
\begin{aligned}
F_\text{xf}(t) &=  \frac{mgl_\text{r}}{l}\bar \mu\lambda_\text{f}(t), \\
F_\text{xr}(t) &=  \frac{mgl_\text{f}}{l}\bar \mu\lambda_\text{r}(t),
\end{aligned}
\end{equation}
and the lateral tire forces can be described as
\begin{equation}
\label{eq: linear lat tire force}
\begin{aligned}
    F_\text{yf}(t) &= \bar c_{\alpha\text{f}}\alpha_\text{f}(t),\\
    F_\text{yr}(t) &= \bar c_{\alpha\text{r}}\alpha_\text{r}(t),
    \end{aligned}
\end{equation}
with constants $\bar\mu,\bar c_{\alpha\text{f}},\bar c_{\alpha\text{r}}\in\R$.
Note $\bar c_{\alpha\text{f}}$ and $\bar c_{\alpha\text{r}}$ are referred to as \emph{cornering stiffnesses}. 

Note that the steering angle of the front wheel, $\delta$, and the tire rotational speed, $\omega_\text{i}$, are the inputs that one is able to control. 
In particular for an AWD vehicle, both $\omega_\text{f}$ and $\omega_\text{r}$ are inputs; whereas, in a FWD vehicle only $\omega_\text{f}$ is an input.
When we formulate our controller in Section \ref{sec: controller} for a FWD vehicle we begin by assuming that we can directly control the front tire forces, $F_\text{xf}$ and $F_\text{yf}$. 
We then illustrate how to compute $\delta$ and $\omega_\text{f}$ when given $F_\text{xf}$ and $F_\text{yf}$.
For an AWD vehicle, we describe in Section \ref{sec: AWD}, how to compute $\delta$, $\omega_\text{f}$, and $\omega_\text{r}$.

In fact, as we describe in Section \ref{sec: controller}, our approach to perform control relies upon estimating the rear tire forces and controlling the front tire forces by applying appropriate tire speed and steering angle. 
Unfortunately, in the real-world our state estimation and models for front and rear tire forces may be inaccurate and aerodynamic-drag force could also affect vehicle dynamics \cite[Section 4.2]{ulsoy2012automotive}.
To account for the inaccuracy, we extend the vehicle dynamic model in \eqref{eq:highspeed_perfect} by introducing a time-varying affine modeling error $\Delta_u,\Delta_v,\Delta_r$ into the dynamics of $u,v,$ and $r$:
\begingroup
\renewcommand*{\arraystretch}{1.2} 
\begin{align}
    \dzhi(t) & = 
    \begin{bmatrix}
        u(t)\cos h(t) - v(t)\sin h(t)\\ u(t)\sin h(t) + v(t)\cos h(t) \\ r(t)\\
        \frac{1}{m}\big(F_\text{xf}(t)+F_\text{xr}(t)\big) + v(t)r(t) + \Delta_u(t) \\ 
        \frac{1}{m}\big(F_\text{yf}(t)+F_\text{yr}(t)\big)-u(t)r(t)+ \Delta_v(t)\\
        \frac{1}{I_\text{zz}} \big(l_\text{f} F_\text{yf}(t) - l_\text{r} F_\text{yr}(t)\big)+ \Delta_r(t)
    \end{bmatrix}. \label{eq:highspeed_noise}
\end{align}
\endgroup
Note, we have abused notation and redefined $\dzhi$ which was originally defined in \eqref{eq:highspeed_perfect}. 
For the remainder of this paper, we assume that the dynamics $\dzhi$ evolves according to \eqref{eq:highspeed_noise}.
To ensure that this definition is well-posed (i.e. their solution exists and is unique) and to aid in the development of our controller as described in Section \ref{sec: controller}, we make the following assumption:
\begin{assum}
\label{ass: dyn error bnd}
    $\Delta_u, \Delta_v,\Delta_r$ are all square integrable functions and are bounded (i.e., there exist real numbers $M_u, M_v, M_r\in [0,+\infty)$ such that $\|\Delta_u(t)\|_\infty\leq M_u, ~\|\Delta_v(t)\|_\infty\leq M_v, ~\|\Delta_r(t)\|_\infty \leq M_r$ for all $t$).
\end{assum}
\noindent Note in Section \ref{subsubsec: sysid}, we explain how to compute $\Delta_u,\Delta_v,\Delta_r$ using real-world data.

\subsection{Low-Speed Vehicle Model}

When the vehicle speed lowers below some critical value $\uc>0$, the denominator of the wheel slip ratio \eqref{eq:wheel_slip_ratio}  and tire slip angles \eqref{eq: alpha_f def} and \eqref{eq: alpha_r def} approach zero which makes applying the model described in \eqref{eq:highspeed_perfect} intractable. 
As a result, in this work when $u(t)\leq \uc$ the dynamics of a vehicle are modeled using a steady-state cornering model \cite[Chapter 6]{gillespie1992fundamentals}, \cite[Chapter 5]{balkwill2017performance}, \cite[Chapter 10]{dieter2018vehicle}.
Note that the critical velocity $\uc$ can be found according to \cite[(5) and (18)]{kim2019advanced}.

The steady-state cornering model or low-speed vehicle model is described using four states, $\zlo(t) = [x(t),y(t),h(t),u(t)]^\top\in\R^4$ at time $t$.
This model ignores transients on lateral velocity and yaw rate. 
Note that the dynamics of $x$, $y$, $h$ and $u$ are the same as in the high speed model \eqref{eq:highspeed_noise}; however, the steady-state corning model describes the yaw rate and lateral speed as
\begin{align}
    \vlo(t) =&l_\text{r}\rlo(t) - \frac{ml_\text{f}}{\bar c_{\alpha\text{r}}l}u(t)^2\rlo(t)   \label{eq: v lo} \\
    \rlo(t) =& \frac{\delta(t)u(t)}{l+C_\text{us} u(t)^2} \label{eq: r lo}
\end{align}
with understeer coefficient
\begin{equation}
    C_\text{us} = \frac{m}{l}\left( \frac{l_\text{r}}{\bar c_{\alpha \text{f}}} - \frac{l_\text{f}}{\bar c_{\alpha \text{r}}} \right).
\end{equation}
As a result, $\dzlo$ satisfies the dynamics of the first four states in \eqref{eq:highspeed_perfect} except with $\rlo$ taking the role of $r$ and $\vlo$ taking the role of $v$.

Notice when $u(t)=v(t)=r(t)=0$ and the longitudinal tire forces are zero, $\dot u(t)$ could still be nonzero due to a nonzero $\Delta_u(t)$.
To avoid this issue, we make a tighter assumption on $\Delta_u(t)$ without violating Assumption \ref{ass: dyn error bnd}:
\begin{assum}
\label{ass: dyn error bnd - low speed}
For all $t$ such that $u(t)\in[0,\uc]$, $|\Delta_u(t)|$ is bounded from above by a linear function of $u(t)$ (i.e.,
\begin{equation}
   |\Delta_u(t)| \leq \bpro\cdot u(t) + \boff, \text{ if } u(t)\in[0,\uc],
\end{equation}
where $\bpro$ and $\boff$ are constants satisfying $\bpro\cdot\uc+\boff\leq M_u$).
In addition, $\Delta_u(t)=0$ if $u(t) = 0$.
\end{assum}

As we describe in detail in Section \ref{subsec:hybrid_desc}, the high-speed and low-speed models can be combined together as a hybrid system to describe the behavior of the vehicle across all longitudinal speeds. 
In short, when $u$ transitions past the critical speed $\uc$ from above at time $t$, the low speed model's states are initialized as:
\begin{equation}
\label{eq: reset h2l}
    \zlo(t) = \pi_{1:4}(\zhi(t))
\end{equation}
where $\pi_{1:4}:\R^6\rightarrow\R^4$ is the projection operator that projects $\zhi(t)$ onto its first four dimensions via the identity relation.
If $u$ transitions past the critical speed from below at time $t$, the high speed model's states are initialized as
\begin{equation}
\label{eq: reset l2h}
    \zhi(t) = [\zlo(t)^\top, \vlo(t), \rlo(t)]^\top.
\end{equation}


\section{Trajectory Design and Safety}
\label{sec: trajectory design and safety}

This section describes the space of trajectories that are optimized over at run-time within \methodname{}, how this paper defines safety during motion planning via the notion of not-at-fault behavior, and what assumptions this paper makes about the environment surrounding the ego-vehicle. 

\subsection{Trajectory Parameterization}

Each trajectory plan is specified over a compact time interval. 
Without loss of generality, we let this compact time interval have a fixed duration $\tf$.
Because $\methodname{}$ performs receding-horizon planning, we make the following assumption about the time available to construct a new plan:
\begin{assum} \label{assum:tplan}
During each planning iteration starting from time $\tz$, the ego vehicle has $\tplan$ seconds to find a control input.
This control input is applied during the time interval $[\tz+\tplan, \tz+\tplan+\tf]$ where $\tf \geq 0$ is a user-specified constant. 
In addition, the state of the vehicle at time $\tz+\tplan$ is known at time $\tz$. 
\end{assum}

In each planning iteration, \methodname{} chooses a trajectory to be followed by the ego vehicle.
These trajectories are chosen from a pre-specified continuum of trajectories, with each uniquely determined by a \textit{trajectory parameter} $p \in \P$.
Let $\P \subset \R^{n_p}$, $n_p \in \N$ be a n-dimensional box $\Int(\underline p, \overline p)$ where $\underline p, \overline p\in\R^{n_p}$ indicate the element-wise lower and upper bounds of $p$, respectively.
We define these desired trajectories as follows:

\begin{defn}\label{def:traj_param}
For each $p \in \P$, a \emph{desired trajectory} is a function for the longitudinal speed, $\udes(\cdot,p):[\tz+\tplan,t_\text{f}] \to \R$, a function for the heading, $\hdes(\cdot,p):[\tz+\tplan,t_\text{f}] \to \R$, and a function for the yaw rate, $\rdes(\cdot,p): [\tz+\tplan,t_\text{f}] \to \R$, that satisfy the following properties.
\begin{enumerate}
\item For all $p\in\P$, there exists a time instant $\tnb\in[\tz+\tplan,\tf)$ after which the desired trajectory begins to brake (i.e., $|\udes(t,p)|$, $|\hdes(t,p)|$ and $|\rdes(t,p)|$ are non-increasing for all $t\in[\tnb,\tf]$).
\item The desired trajectory eventually comes to and remains stopped (i.e., there exists a $\tstop \in [\tz+\tplan,\tf]$ such that $\udes(t, p) = \hdes(t, p) = \rdes(t, p) = 0$ for all $t \geq \tstop$).
\item $\udes$ and $\hdes$ are piecewise continuously differentiable \cite[Chapter 6, $\S$1.1]{remmert1991theory} with respect to $t$ and $p$.
\item The time derivative of the heading function is equal to the yaw rate function (i.e., $\rdes(t,p)=\frac{\partial}{\partial t}\hdes(t,p)$ over all regions that $\hdes(t,p)$ is continuously differentiable with respect to $t$).
\end{enumerate}
\end{defn}
\noindent The first two properties ensure that a fail safe contingency braking maneuver is always available and the latter two properties ensure that the tracking controller described in Section \ref{sec: controller} is well-defined.
Note that sometimes we abuse notation and evaluate a desired trajectory for $t > \tf$.
In this instance, the value of the desired trajectory is equal to its value at $\tf$.


\subsection{Not-At-Fault}

In dynamic environments, avoiding collision may not always be possible (e.g. a parked car can be run into). 
As a result, we instead develop a trajectory synthesis technique which ensures that the ego vehicle is not-at-fault \cite{shalev2017formal}:
\begin{defn}\label{defn:notatfault}
The ego vehicle is \emph{not-at-fault} if it is stopped, or if it is never in collision with any obstacles while it is moving. 
\end{defn}
\noindent In other words, the ego vehicle is not responsible for a collision if it has stopped and another vehicle collides with it. 
One could use a variant of not-at-fault and require that when the ego-vehicle comes to a stop it leave enough time for all surrounding vehicles to come safely to a stop as well.
The remainder of the paper can be generalized to accommodate this variant of not-at-fault; however, in the interest of simplicity we use the aforementioned definition. 
\begin{rem}
\label{rem:notatfault}
Under Assumption \ref{ass: small slip}, neither longitudinal nor lateral tire forces saturate (i.e., drifting cannot occur).
As a result, if the ego vehicle has zero longitudinal speed, it also has zero lateral speed and yaw rate. 
Therefore in Definition \ref{defn:notatfault}, the ego vehicle being stopped is equivalent to its longitudinal speed being $0$.
\end{rem}

\subsection{Environment and Sensing}

To provide guarantees about vehicle behavior in a receding horizon planning framework and inspired by \cite[Section 3]{vaskov2019not}, we define the ego vehicle's footprint as:
\begin{defn}
\label{def: footprint}
Given $\W\subset\R^2$ as the world space, the ego vehicle is a rigid body that lies in a rectangular $\Oego:=\Int([-0.5L,-0.5W]^T,[0.5L,0.5W]^T )\subset\W$ with width $W>0$, length $L>0$ at time $t=0$. 
Such $\Oego$ is called the \emph{footprint} of the ego vehicle.
\end{defn}
In addition, we define the dynamic environment in which the ego vehicle is operating within as:
\begin{defn}
An \emph{obstacle} is a set $\mathcal O_i(t)\subset \W$ that the ego vehicle cannot intersect with at time $t$, where $i\in\I$ is the index of the obstacle and $\I$ contains finitely many elements. 
\end{defn}
\noindent The dependency on $t$ in the definition of an obstacle allows the obstacle to move as $t$ varies. 
However if the $i$-th obstacle is static, then $\mathcal O_i(t)$ remains constant at all time.
Assuming that the ego vehicle has a maximum speed $\vegomax$ and all obstacles have a maximum speed $\vobsmax$ for all time, we then make the following assumption on planning and sensing horizon. 
\begin{assum}
\label{ass: sense horizon}
The ego vehicle senses all obstacles within a sensor radius $S>(\tf +\tplan)\cdot(\vegomax+\vobsmax)+0.5\sqrt{L^2+W^2}$ around its center of mass.
\end{assum}
\noindent Assumption \ref{ass: sense horizon} ensures that any obstacle that can cause a collision between times $t\in[\tz+\tplan, \tz+\tplan+\tf]$ can be detected by the vehicle \cite[Theorem 15]{vaskov2019not}.
Note one could treat sensor occlusions as a obstacles that travel at the maximum obstacle speed \cite{yu2019occlusion,yu2020risk}.

\section{Controller Design and Hybrid System Model}
\label{sec: controller}

This section describes the control inputs that we use to follow the desired trajectories and describes the closed-loop hybrid system vehicle model. 
 Recall that the control inputs to the vehicle dynamics model are the steering angle of the front wheel, $\delta$, and the tire rotational speed, $\omega_\text{i}$. 
Section \ref{subsec: controller design} describes how to select front tire forces to follow a desired trajectory and Section \ref{subsec:inputs} describes how to compute a steering angle and tire rotational speed input from these computed front tire forces. 
Section \ref{subsec:hybrid_desc} describes the closed-loop hybrid system model of the vehicle under the chosen control input. 
Note that this section focuses on the FWD vehicle model. 


\subsection{Robust Controller}
\label{subsec: controller design}
Because applying reachability analysis to linear systems generates tighter approximations of the system behavior when compared to nonlinear systems,
we propose to develop a feedback controller that linearizes the dynamics. 
Unfortunately, because both the high-speed and low-speed models introduced in Section \ref{sec: dynamics} are under-actuated (i.e., the dimension of control inputs is smaller than that of system state), our controller is only able to partially feedback linearize the vehicle dynamics.
Such controller is also expected to be robust such that it can account for computational errors as described in Assumptions \ref{ass: dyn error bnd} and \ref{ass: dyn error bnd - low speed}.

We start by introducing the controller on longitudinal speed whose dynamics appears in both high-speed and low-speed models.
Recall $\|\Delta_u(t)\|_\infty\leq M_u$ in Assumption \ref{ass: dyn error bnd}.
Inspired by the controller developed in \cite{giusti2016ultimate}, we set the longitudinal front tire force to be
\begin{equation}
    \label{eq: Fxf}
    \begin{split}
    F_\text{xf}(t) = -m K_u (u(t) - \udes(t,p) ) + m\dudes(t,p) + \\ - F_\text{xr}(t)- mv(t)r(t) + m\tau_u(t,p),
    \end{split}
\end{equation}
where
\begin{align}
\tau_u(t,p) =& -\big(\kappa_u(t,p) M_u+\phi_u(t,p)\big)e_u(t,p), \label{eq: tau_u def} \\
    \kappa_u(t,p) =& \kappa_{1,u}+\kappa_{2,u}\int_{\tz}^t\|u(s)-\udes(s,p)\|^2ds,\label{eq: kappa_u def}\\
    \phi_u(t,p) =&\phi_{1,u}+\phi_{2,u}\int_{\tz}^t\|u(s)-\udes(s,p)\|^2ds,\label{eq: phi_u def}\\
    e_u(t,p) =& u(t)-\udes(t,p), \label{eq: e_u def}
\end{align}
with user-chosen constants $\kappa_{1,u},\kappa_{2,u},\phi_{1,u},\phi_{2,u}\in\R_+$.
Note in \eqref{eq: Fxf} we have suppressed the dependence on $p$ in $F_\text{xf}(t)$ for notational convenience.
Using \eqref{eq: Fxf}, the closed-loop dynamics of $u$ become:
\begin{equation}
\begin{split}
    \dot u(t) = \tau_u(t,p) + \Delta_u(t) + \dudes(t,p) + \\ -K_u \left( u(t) - \udes(t,p)\right).
    \end{split}
    \label{eq: u_dot_closed_loop_eq}
\end{equation}

The same control strategy can be applied to vehicle yaw rate whose dynamics only appear in  the high-speed vehicle model.
Let the lateral front tire force be 
\begin{equation}
\label{eq: Fyf}
\begin{split}
     F_\text{yf}(t) = -\frac{I_\text{zz} K_r}{l_\text{f}}\left( r(t) - \rdes(t,p) \right) +  \frac{I_\text{zz}}{l_\text{f}} \drdes(t,p) + \\ -\frac{I_\text{zz} K_h}{l_\text{f}}\left( h(t) - \hdes(t,p)\right) + \frac{l_\text{r}}{l_\text{f}} F_\text{yr}(t) + \frac{I_\text{zz}}{l_\text{f}} \tau_r(t,p),
     \end{split}
\end{equation}
where
\begin{align}
 \tau_r(t,p) =& -\big(\kappa_r(t,p) M_r+\phi_r(t,p)\big)e_r(t,p) \label{eq: tau_r def} \\
    \kappa_r(t,p) =& \kappa_{1,r}+\kappa_{2,r}\int_{\tz}^t\left\| \begin{bmatrix}r(s)\\h(s)\end{bmatrix} - \begin{bmatrix}\rdes(s,p)\\\hdes(s,p)\end{bmatrix}\right\|^2ds\\
    \phi_r(t,p) =&\phi_{1,r}+\phi_{2,r}\int_{\tz}^t\left\| \begin{bmatrix}r(s)\\h(s)\end{bmatrix} - \begin{bmatrix}\rdes(s,p)\\\hdes(s,p)\end{bmatrix}\right\|^2ds\\
    e_r(t,p) =& \begin{bmatrix} K_r & K_h \end{bmatrix} \begin{bmatrix} r(t)-\rdes(t,p) \\ h(t)-\hdes(t,p) \end{bmatrix}
\end{align}
with user-chosen constants $\kappa_{1,r},\kappa_{2,r},\phi_{1,r},\phi_{2,r}\in\R_+$.
Note in \eqref{eq: Fyf} we have again suppressed the dependence on $p$ in $F_\text{yf}(t)$ for notational convenience.
Using \eqref{eq: Fyf}, the closed-loop dynamics of $r$ become:
\begin{equation}
    \begin{split}
    \dot r(t) = &\tau_r(t,p)+ \Delta_r(t)+ \drdes(t,p) + \\ & -K_r\big(r(t) - \rdes(t,p)\big)+  \\  &-K_h\big(h(t) - \hdes(t,p)\big).
    \end{split}
    \label{eq: r_dot_closed_loop_eq}
\end{equation}
Using \eqref{eq: Fyf}, the closed-loop dynamics of $v$ become:
\begin{equation}
    \begin{split}
        \dot v(t) = \frac{1}{m}\Bigg(\frac{l}{l_\text{f}} F_\text{yr}(t) +\frac{I_\text{zz}}{l_\text{f}}\Big(\tau_r(t,p) + \drdes(t,p) + \\
        - u(t)r(t)+\Delta_v(t) -K_r\big(r(t) - \rdes(t,p)\big) + \\ -K_h\big(h(t) - \hdes(t,p)\big) \Big) \Bigg).
    \end{split}
    \label{eq: v_dot_closed_loop_eq}
\end{equation}

Because $\udes$, $\rdes$, and $\hdes$ depend on trajectory parameter $p$, one can rewrite the closed loop high-speed and low-speed vehicle models as
\begin{align}
    \dzhi(t) &= \fhi(t,\zhi(t),p), \label{eq: hi_veh_closeloop}\\
    \dzlo(t) &= \flo(t,\zlo(t),p), \label{eq: lo_veh_closeloop}
\end{align}
where dynamics of $x$, $y$ and $h$ are stated as the first three dimensions in \eqref{eq:highspeed_perfect}, closed-loop dynamics of $u$ is described in \eqref{eq: u_dot_closed_loop_eq}, and closed-loop dynamics of $v$ and $r$ in the high-speed model are presented in \eqref{eq: v_dot_closed_loop_eq} and \eqref{eq: r_dot_closed_loop_eq}.
Note that the lateral tire force could be defined to simplify the dynamics on $v$ instead of $r$, but the resulting closed loop system may differ.
Controlling the yaw rate may be easier in real applications, because $r$ can be directly measured by an IMU unit.

\subsection{Extracting Wheel Speed and Steering Inputs}\label{subsec:inputs}

Because we are unable to directly control tire forces, it is vital to compute wheel speed and steering angle such that the proposed controller described in \eqref{eq: Fxf} and \eqref{eq: Fyf} is viable.
Under Assumption \ref{ass: small slip}, wheel speed and steering inputs can be directly computed in closed form.
The wheel speed to realize longitudinal front tire force \eqref{eq: Fxf} can be derived from \eqref{eq:wheel_slip_ratio} and \eqref{eq: linear long tire force} as
\begin{equation}
\label{eq: compute w_cmd}
    \omega_\text{f}(t) = \begin{dcases}
    \left(\frac{lF_\text{xf}(t)}{\bar\mu mgl_\text{r}} + 1\right)\frac{u(t)}{r_\text{w}} ~\quad \text{during braking},\\
    \frac{u(t)}{\left(1-\frac{lF_\text{xf}(t)}{\bar\mu mgl_\text{r}}\right)r_\text{w}} \hspace{1.05cm} \text{during acceleration}.
    \end{dcases}
\end{equation}
Similarly according to \eqref{eq: alpha_f def} and \eqref{eq: linear lat tire force}, the steering input 
\begin{equation}
    \delta(t) = \frac{F_\text{yf}(t)}{\bar c_{\alpha\text{f}}} + \frac{v(t)+l_\text{f}r(t)}{u(t)}
\end{equation}
achieves the lateral front tire force in \eqref{eq: Fyf} when $u(t)>\uc$.

Notice lateral tire forces does not appear in the low-speed dynamics, but one is still able to control the lateral behavior of the ego vehicle.
Based on \eqref{eq: v lo} and \eqref{eq: r lo}, yaw rate during low-speed motion is directly controlled by steering input $\delta(t)$ and lateral velocity depends on yaw rate.
Thus to achieve desired behavior on the lateral direction, one can set the steering input to be
\begin{equation}
    \label{eq: low speed lateral control}
    \delta(t) = \frac{\rdes(t)(l+C_\text{us}u(t)^2)}{u(t)}.
\end{equation}

\subsection{Augmented State and Hybrid Vehicle Model}
\label{subsec:hybrid_desc}

To simplify the presentation throughout the remainder of the paper, we define a hybrid system model of the vehicle dynamics that switches between the high and low speed vehicle models when passing through the critical longitudinal velocity. 
In addition, for computational reasons that are described in subsequent sections, we augment the initial condition of the system to the state vector while describing the vehicle dynamics.
In particular, denote $z_0 = [(\zpos_0)^\top, (\zvel_0)^\top]^\top\in\Z_0\subset\R^6$ the initial condition of the ego vehicle where $\zpos_0=[x_0,y_0,h_0]^\top\in\R^3$ gives the value of $[x(t),y(t),h(t)]^\top$ and $\zvel_0=[u_0,v_0,r_0]^\top\in\R^3$ gives the value of $[u(t),v(t),r(t)]^\top$ at time $t=0$.
Then we augment the initial velocity condition $\zvel_0\in\R^3$ of the vehicle model and trajectory parameter $p$ into the vehicle state vector as $\zaug(t) = [x(t),y(t),h(t),u(t),v(t),r(t), (\zvel_0)^\top, p^\top]^\top\in\R^{9}\times\P\subset\R^{9 + n_p}$.
Note the last $3 + n_p$ states are static with respect to time.
As a result, the dynamics of the augmented vehicle state during high-speed and low-speed scenarios can be written as
\begin{equation}
\label{eq: dyn tilde_z}
    \dzaug(t) = \begin{dcases}
        \begin{bmatrix}
            \fhi(t,\zhi(t),p) \\ 0_{ (3 + n_p) \times1}
        \end{bmatrix} , \text{ if } u(t) > \uc, \\ \\
        \begin{bmatrix}
            \flo(t,\zlo(t),p) \\ 0_{(5 + n_p)\times1}
        \end{bmatrix} , \text{ if } u(t) \leq \uc,
    \end{dcases}
\end{equation}
which we refer to as the \emph{hybrid vehicle dynamics model}.
Notice when $u(t)\leq \uc$, assigning zero dynamics to $v$ and $r$ in \eqref{eq: dyn tilde_z} does not affect the evolution of the vehicle's dynamics because the lateral speed and yaw rate are directly computed via longitudinal speed as in \eqref{eq: v lo} and  \eqref{eq: r lo}.

Because the vehicle's dynamics changes depending on $u$, it is natural to model the ego vehicle as a hybrid system $HS$ \cite[Section 1.2]{lunze2009handbook}. 
The hybrid system has $\zaug$ as its state and consists of a high-speed mode and a low-speed mode with dynamics in \eqref{eq: dyn tilde_z}.
Instantaneous transition between the high and low speed models within $HS$ are described using the notion of a \emph{guard} and \emph{reset map}.
The guard triggers a transition and is defined as $\{\zaug(t)\in\R^{9}\times\P\mid u(t) =\uc\}$. 
Once a transition happens, the reset map maintains the last $3+n_p$ dimensions of $\zaug(t)$, but resets the first $6$ dimensions of $\zaug(t)$ via \eqref{eq: reset h2l} if $u(t)$ approaches $\uc$ from above and via \eqref{eq: reset l2h} if $u(t)$ approaches $\uc$ from below. 

We next prove that for desired trajectory defined as in Definition \ref{def:traj_param} under the controllers defined in Section \ref{sec: controller}, the vehicle model eventually comes to a stop. 
To begin note that experimentally, we observed that the vehicle quickly comes to a stop during braking once its longitudinal speed becomes $u(t) \leq 0.15$[m/s].
Thus we make the following assumption:
\begin{assum}
\label{ass: small speed can stop}
Suppose $u(t) = 0.15$ for some $t \geq \tstop$. 
Then under the control inputs \eqref{eq: Fxf} and \eqref{eq: Fyf} and while tracking any desired trajectory as in Definition \ref{def:traj_param}, the ego vehicle takes at most $\tfstop$ seconds after $\tstop$ to come to a complete stop.
\end{assum}

We use this assumption to prove that the vehicle can be brought to a stop within a specified amount of time in the following lemma whose proof can be found in Appendix \ref{app:braking}:
\begin{lem}\label{lem:braking}
Let $\Z_0 \subset \R^6$ be a compact subset of initial conditions for the vehicle dynamic model and $\P$ be a compact set of trajectory parameters. 
Let $\Delta_u(t)$ be bounded for all $t$ as in Assumptions \ref{ass: dyn error bnd} and \ref{ass: dyn error bnd - low speed} with constants $M_u$, $\bpro$ and $\boff$.
Let $\zaug$ be a solution to the hybrid vehicle dynamics model \eqref{eq: dyn tilde_z} beginning from $z_0 \in \Z_0$ under trajectory parameter $p \in \P$ while applying the control inputs \eqref{eq: Fxf} and \eqref{eq: Fyf} to track some desired trajectory satisfying Definition \ref{def:traj_param}.
Assume the desired longitudinal speed satisfies the following properties: $\udes(0,p) = u(0)$, $\udes(t,p)$ is only discontinuous at time $\tstop$, and $\udes(t,p)$ converges to $\uc$ as $t$ converges to $\tstop$ from below.
If $K_u$, $\kappa_{1,u}$ and $\phi_{1,u}$ are chosen such that $\frac{M_u}{\kappa_{1,u}M_u+\phi_{1,u}}\in(0.15,\uc]$ and $\frac{(\boff)^2}{4(\kappa_{1,u}M_u+\phi_{1,u}-\bpro)} < 0.15^2 K_u$ hold, then for all $p \in \P$ and $z_0 \in \Z_0$ satisfying $u(0)>0$, there exists $\tb$ such that $u(t)=0$ for all $t \geq \tb$. 
\end{lem}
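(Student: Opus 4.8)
The first thing to notice is that the guard/reset structure of the hybrid system \eqref{eq: dyn tilde_z} is irrelevant to this claim: the $x,y,h,u$ components of $\fhi$ and $\flo$ coincide, so along the (unique, by Assumptions~\ref{ass: dyn error bnd}--\ref{ass: dyn error bnd - low speed}) closed-loop solution the longitudinal speed always obeys the scalar equation \eqref{eq: u_dot_closed_loop_eq}, no matter which mode is active. Writing $e_u(t)=u(t)-\udes(t,p)$ and substituting $\tau_u$ from \eqref{eq: tau_u def}, this reads $\dot e_u = -\big(\kappa_u(t,p)M_u+\phi_u(t,p)+K_u\big)e_u + \Delta_u(t)$. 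Since the integral terms in \eqref{eq: kappa_u def}--\eqref{eq: phi_u def} are nonnegative and nondecreasing, $\kappa_u(t,p)\ge\kappa_{1,u}$ and $\phi_u(t,p)\ge\phi_{1,u}$ for all $t$, so every estimate below may replace the time-varying gains by the constants $\kappa_{1,u},\phi_{1,u}$ -- the adaptive terms only strengthen the bounds. The workhorse is the function $V=\tfrac12 e_u^2$ (equivalently $\tfrac12 u^2$ once $\udes\equiv0$) together with elementary comparison arguments.

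Phase~1 (tracking, $t\in[\tz,\tstop)$): here $\udes(\cdot,p)$ is continuous, agrees with $u$ at the initial time (so $e_u$ vanishes there), and by the braking structure of Definition~\ref{def:traj_param} and the hypothesis $\udes(t,p)\to\uc$ as $t\to\tstop^-$ it stays near $\uc$ as $t\to\tstop^-$. Differentiating $V$ and using $|\Delta_u|\le M_u$ gives $\dot V\le -(\kappa_{1,u}M_u+\phi_{1,u}+K_u)e_u^2+M_u|e_u|$, which is negative whenever $|e_u|>\tfrac{M_u}{\kappa_{1,u}M_u+\phi_{1,u}+K_u}$. Because $e_u$ vanishes initially and this ultimate bound is $\le \tfrac{M_u}{\kappa_{1,u}M_u+\phi_{1,u}}\le\uc$ by hypothesis, $|e_u(t)|\le\uc$ throughout; hence $u$ stays bounded and strictly positive on $[\tz,\tstop)$, and $u(\tstop^-)$ lies in a neighborhood of $\uc$, in particular above $0.15$ (establishing this last point sharply is where one must additionally invoke the tighter bound of Assumption~\ref{ass: dyn error bnd - low speed} once $u$ dips below $\uc$). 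So the vehicle has not stopped before $\tstop$ and, for $t\ge\tstop$, must descend through the level $0.15$ starting from a value above $\uc>0.15$.

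Phases~2--3 (finite-time descent to $0.15$, $t\ge\tstop$): now $\udes=\dudes=0$, so $e_u=u$ and $\dot u=-(\kappa_u M_u+\phi_u+K_u)u+\Delta_u$. While $u>\uc$ we keep $|\Delta_u|\le M_u$, and since $\uc\ge\tfrac{M_u}{\kappa_{1,u}M_u+\phi_{1,u}}$ we get $\dot u\le -(\kappa_{1,u}M_u+\phi_{1,u}+K_u)u+M_u\le -K_u\uc<0$ uniformly, so $u$ reaches $\uc$ in finite time. For $u\in(0,\uc]$ we switch to the tighter estimate $|\Delta_u|\le\bpro u+\boff$, yielding $\dot u\le -(a+K_u)u+\boff$ with $a:=\kappa_{1,u}M_u+\phi_{1,u}-\bpro>0$ (positivity from $\bpro\uc+\boff\le M_u\le(\kappa_{1,u}M_u+\phi_{1,u})\uc$). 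The hypothesis $\tfrac{(\boff)^2}{4a}<0.15^2K_u$ together with $4aK_u\le(a+K_u)^2$ gives $\tfrac{\boff}{a+K_u}<0.15$, whence $\dot u\le-(a+K_u)(0.15)+\boff=:-\rho<0$ on the whole interval $u\in[0.15,\uc]$; therefore $u$ reaches $0.15$ at some finite time $t^\star\ge\tstop$. Assumption~\ref{ass: small speed can stop} then forces a complete stop by $\tb:=t^\star+\tfstop$; finally, $u=0$ is invariant for $t\ge\tstop$, since the closed-loop right-hand side there is $-(\kappa_uM_u+\phi_u+K_u)u+\Delta_u(t)$ and $\Delta_u(t)=0$ whenever $u(t)=0$ by Assumption~\ref{ass: dyn error bnd - low speed}, so $u(t)=0$ for all $t\ge\tb$.

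The main obstacle is the regime bookkeeping: one must (i) make the three disturbance regimes ($|\Delta_u|\le M_u$ for $u>\uc$, $|\Delta_u|\le\bpro u+\boff$ for $0<u\le\uc$, $\Delta_u=0$ at $u=0$) match up along the trajectory, rule out $u$ going negative or stopping prematurely, and show $u$ arrives at the level $0.15$ from above at a time $\ge\tstop$ so that Assumption~\ref{ass: small speed can stop} is applicable; and (ii) convert the opaque gain inequality into the uniform strict drift $\dot u\le-\rho$ on $[0.15,\uc]$ -- the step $4aK_u\le(a+K_u)^2$ is exactly what turns $\tfrac{(\boff)^2}{4a}<0.15^2K_u$ into $\tfrac{\boff}{a+K_u}<0.15$, which also explains the factor $\tfrac14$ in the hypothesis.
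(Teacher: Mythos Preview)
Your outline follows the paper's proof closely: both use $V=\tfrac12 e_u^2$, show the tracking error stays inside the ball of radius $\usmall:=\tfrac{M_u}{\kappa_{1,u}M_u+\phi_{1,u}}$ on $[0,\tstop)$ so that $u(\tstop)\in[\uc-\usmall,\uc+\usmall]$, then argue finite-time descent to the level $0.15$ for $t\ge\tstop$ and finish with Assumption~\ref{ass: small speed can stop} and the invariance of $u=0$. Your algebra in the low-speed regime is a pleasant variant: the paper completes the square on $\dot V$ to obtain $\dot V\le -K_u u^2+\tfrac{(\boff)^2}{4a}$ and invokes the gain hypothesis directly, whereas you bound $\dot u$ linearly and convert the hypothesis via $4aK_u\le(a+K_u)^2$ into $\tfrac{\boff}{a+K_u}<0.15$; both yield the same uniform negative drift on $[0.15,\uc]$.

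There is, however, a genuine gap in your case bookkeeping. You assert that $u(\tstop^-)>0.15$ (indeed ``above $\uc$'') and then argue descent \emph{through} the level $0.15$, but Phase~1 only yields $u(\tstop)\in[\uc-\usmall,\uc+\usmall]$, and since the hypothesis allows $\usmall$ as large as $\uc$, one can have $u(\tstop)\le 0.15$ (even $u(\tstop)=0$). Your parenthetical about invoking Assumption~\ref{ass: dyn error bnd - low speed} does not rescue this, because that tighter disturbance bound still cannot force $u(\tstop)>0.15$. The paper closes this by an explicit three-way split on $u(\tstop)$: (i) $u(\tstop)\le0.15$ goes straight to Assumption~\ref{ass: small speed can stop}; (ii) $0.15<u(\tstop)\le\usmall$ is exactly your Phase~3 argument; (iii) $\usmall<u(\tstop)\le\uc+\usmall$ prepends your Phase~2 high-speed descent. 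Your side remark that $u$ stays strictly positive on $[\tz,\tstop)$ is neither needed nor established; the paper makes no such claim.
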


Note, the proof of Lemma \ref{lem:braking} includes an explicit formula for $\tb$ in \eqref{eq: tb computation}. 
This lemma is crucial because it specifies the length of time over which we should construct FRS, so that we can verify that not-at-fault behavior can be satisfied based on Definition \ref{defn:notatfault} and Remark \ref{rem:notatfault}.

\section{Computing and Using the FRS}
\label{sec: rtd}

This section describes how \methodname{} operates at a high-level.
It then describes the offline reachability analysis of the ego vehicle as a state-augmented hybrid system using zonotopes and illustrates how the ego vehicle's footprint can be accounted for during reachability analysis.

\methodname{} conservatively approximates a control-parameterized FRS of the full-order vehicle dynamics.
The FRS includes all behaviors of the ego vehicle over a finite time horizon and is mathematically defined in Section \ref{subsec: FRS computation}.
To ensure the FRS is a tight representation, \methodname{} relies on the controller design described in Section \ref{sec: controller}.
Because this controller partially linearizes the dynamics, \methodname{} relies on a zonotope-based reachable set representation which behave well for nearly linear systems.


During online planning, \methodname{} performs control synthesis by solving optimization problems in a receding horizon fashion, where the optimization problem computes a trajectory parameter to navigate the ego vehicle to a waypoint while behaving in a not-at-fault manner. 
As in Assumption \ref{assum:tplan}, each planning iteration in \methodname{} is allotted $\tplan>0$ to generate a plan.
As depicted in Figure \ref{fig:timeline}, if a particular planning iteration begins at time $t_0$, its goal is to find a control policy by solving an online optimization within $\tplan$ seconds so that the control policy can be applied during $[t_0+\tplan, t_0+\tplan+ \tf]$.
Because any trajectory in Definition \ref{def:traj_param} brings the ego vehicle to a stop, we partition $[t_0+\tplan, t_0+\tplan+ \tf]$ into $[t_0+\tplan,t_0+\tplan+\tnb)$ during which a driving maneuver is tracked and $[t_0+\tplan+\tnb,t_0+\tplan+\tf]$ during which a contingency braking maneuver is activated.
Note $\tnb$ is not necessarily equal to $\tstop$.
As a result of Lemma \ref{lem:braking}, by setting $\tf$ equal to $\tb$ one can guarantee that the ego vehicle comes to a complete stop by $\tf$.

\begin{figure}[t]
    \centering
    \includegraphics[trim={0cm, 16.2cm, 11.7cm, 0cm},clip,width=\columnwidth]{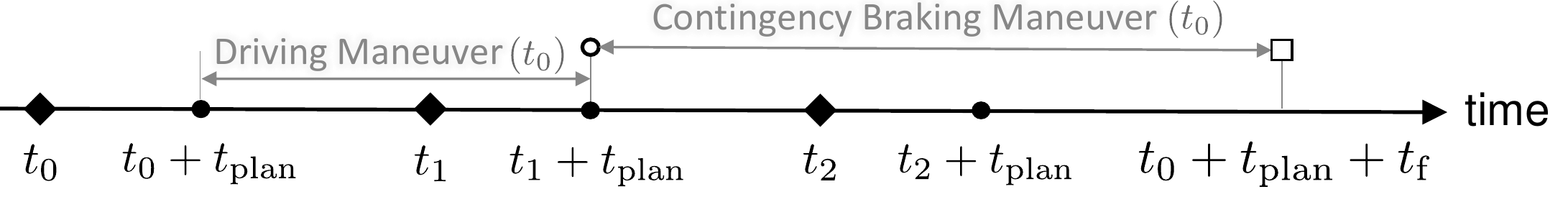}
    \caption{An illustration of 3 successive planning/control iterations.
             $\tplan$ seconds are allotted to compute a planned trajectory.
             Each plan is of duration $\tf$ and consists of a driving maneuver of duration $\tnb$ and a contingency braking maneuver.
             Diamonds denote the time instances where planning computations begin and $t_2-t_1=t_1-t_0=\tnb$.
             Filled-in circles denote the instances where feasible driving maneuvers are initiated.
             If the planning phase between $[t_1,t_1+\tplan]$ is infeasible, the contingency braking maneuver whose feasibility is verified during the planning phase between $[t_0, t_0+\tplan]$ is applied.}
    \label{fig:timeline}
    \vspace*{-.5cm}
\end{figure}
If the planning iteration at time $\tz$ is feasible (i.e., not-at-fault), then the entire feasible planned driving maneuver is applied during $[\tz+\tplan, \tz+\tplan+\tnb)$.
If the planning iteration starting at time $\tz$ is infeasible, then the braking maneuver, whose safe behavior was verified in the previous planning iteration, can be applied starting at $\tz+\tplan$ to bring the ego vehicle to a stop in a not-at-fault manner.
To ensure real-time performance, $\tplan\leq\tnb$.
To simplify  notation, we reset time to $0$ whenever a feasible control policy is about to be applied.

\subsection{Offline FRS Computation}
\label{subsec: FRS computation}

The \emph{FRS} of the ego vehicle is
\begin{equation}
 \label{eq: FRS def}
\begin{split}
    \FRS([0,&\tf])=\Bigg\{(x,y)\in \W\mid \exists t\in[0,\tf], p\in\P,  \\
    & z_0 = \begin{bmatrix} \zpos_0 \\ \zvel_0\end{bmatrix}\in\Z_0 \text{ s.t. }\begin{bmatrix} x\\ y \end{bmatrix} = \pi_{xy}(\zaug(t)),   \\
    & \zaug \text{ is a solution of } HS \text{ with } \zaug(0)= \begin{bmatrix} z_0 \\  \zvel_0 \\ p \end{bmatrix} \Bigg\},
\end{split}
 \end{equation}
where $\pi_{xy}:\R^{9+n_p}\rightarrow \R^2$ is the projection operator that outputs the first two coordinates from its argument.
$\FRS([0,\tf])$ collects all possible behavior of the ego vehicle while following the dynamics of $HS$ in the $xy$-plane over time interval $[0,\tf]$ for all possible $p\in\P$ and initial condition $z_0\in\Z_0$.
Computing $\FRS([0,\tf])$ precisely is numerically challenging because the ego vehicle is modeled as a hybrid system with nonlinear dynamics, thus we aim to compute an outer-approximation of $\FRS([0,\tf])$ instead.

To outer-approximate $\FRS([0,\tf])$, we start by making the following assumption:
\begin{assum}
\label{ass: initial 0 pos condition}
The initial condition space $\Z_0 = \{0_{3\times1}\}\times \Zvel_0$ where $\Zvel_0=\text{int}(\underline{\zvel_0}, \overline{\zvel_0})\subset\R^3$ is a 3-dimensional box representing all possible initial velocity conditions $\zvel_0$ of the ego vehicle.
\end{assum}
\noindent Because vehicles operate within a bounded range of speeds, this assumption is trivial to satisfy.
Notice in particular that $\Zvel_0$ is a zonotope $\zonocg{\cvel_0}{\Gvel_0}$ where $\cvel_0 = \frac{1}{2}(\underline{\zvel_0}+\overline{\zvel_0})$ and $\Gvel_0 = \frac{1}{2}\diag(\overline{\zvel_0}-\underline{\zvel_0})$.
We assume a zero initial position condition $\zpos_0$ in the first three dimensions of $\Z_0$ for simplicity, and nonzero $\zpos_0$ can be dealt with via coordinate transformation online as described in Section \ref{subsec: nonzero pos}.

Recall that because $\P$ is a compact n-dimensional box, it can also be represented as a zonotope as $\zonocg{c_p}{G_p}$ where $c_p = \frac{1}{2}(\underline p+\overline p)$ and $G_p = \frac{1}{2}\diag(\overline p - \underline p)$. 
Then the set of  initial conditions for $\zaug(0)$ can be represented as a zonotope $\Zaug_0=\zonocg{c_{\zaug}}{G_{\zaug}} \subset \Z_0\times\Zvel_0\times\P$ where 
\begin{equation}
    c_{\zaug} = \begin{bmatrix}
    0_{3\times 1}\\ \cvel_0\\ \cvel_0\\ c_p
    \end{bmatrix}, ~ G_{\zaug} = \begin{bmatrix}
    0_{3\times3} & 0_{3\times n_p}\\ \Gvel_0 & 0_{3\times n_p}\\ \Gvel_0 & 0_{3\times n_p} \\ 0_{n_p \times3} & G_p
    \end{bmatrix}.
\end{equation}
Observe that by construction each row of $G_{\zaug}$ has at most $1$ nonzero element. 
Without loss of generality, we assume $\Gvel_0$ and $G_p$ has no zero rows.
If there was a zero row it would mean that the corresponding dimension can only take one value and does not need to be augmented or traced in $\zaug$ for reachability analysis.

Next we pick a time step $\Delta_t\in\R_+$ such that $\tf/\Delta_t\in\N$, and partition the time interval $[0,\tf]$ into $\tf/\Delta_t$ \emph{time segments} as $T_j=[(j-1)\Delta_t, j\Delta_t]$ for each $j\in \J=\{1,2,\cdots,\tf/\Delta_t\}$.
Finally we use an open-source toolbox CORA \cite{althoff2015introduction}, which takes $HS$ and the initial condition space $\Zaug_0$, to over-approximate the FRS in \eqref{eq: FRS def} by a collection of zonotopes $\{\RR_j\}_{j\in\J}$ over all time intervals where $\RR_j\subset\R^{9+n_p}$.
As a direct application of Theorem 3.3, Proposition 3.7 and the derivation in Section 3.5.3 in \cite{althoff2010reachability}, one can conclude the following theorem:
\begin{thm}
\label{thm: FRS over-approximation}
Let $\RR_j\subset\R^{9+n_p}$ be the zonotopes computed by CORA under the hybrid vehicle dynamics model beginning from $\Zaug_0$.
Let $\zaug$ be a solution to hybrid system $HS$ starting from an initial condition in $\Zaug_0$.
Then $\zaug(t) \in\RR_j$ for all $j\in\J$ and $t\in T_j$ and
\begin{equation}
\label{eq: FRS inside Zono}
   \FRS([0,\tf]) \subset \bigcup_{j\in\J}\pi_{xy}(\RR_j).
\end{equation}
\end{thm}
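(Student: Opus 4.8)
The plan is to treat the statement as a verification-of-hypotheses exercise: it asserts nothing beyond the soundness of the zonotope reachability scheme that CORA implements, so the work is to confirm that the state-augmented hybrid vehicle model satisfies the conditions under which Theorem 3.3, Proposition 3.7, and the construction of Section 3.5.3 in \cite{althoff2010reachability} apply, and then to invoke them. First I would establish well-posedness of $HS$: within each mode the last $3+n_p$ coordinates of $\zaug$ are static, the modeling errors $\Delta_u,\Delta_v,\Delta_r$ are bounded and square integrable by Assumptions \ref{ass: dyn error bnd} and \ref{ass: dyn error bnd - low speed}, and the closed-loop vector fields \eqref{eq: hi_veh_closeloop}--\eqref{eq: lo_veh_closeloop} are piecewise continuously differentiable in the state by Definition \ref{def:traj_param}, so a unique solution exists in each mode; the guard $\{\zaug : u=\uc\}$ together with the reset maps \eqref{eq: reset h2l}--\eqref{eq: reset l2h} makes each hybrid execution well defined, and because every tracked desired trajectory brakes to a stop (Lemma \ref{lem:braking}), $u$ crosses $\uc$ only finitely many times on $[0,\tf]$, so each execution has finitely many discrete transitions.

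Next I would handle the continuous evolution one mode at a time. Theorem 3.3 of \cite{althoff2010reachability} guarantees that the CORA algorithm --- conservative linearization of the nonlinear dynamics plus a zonotope/interval enclosure of the Lagrange remainder --- returns, for every time segment $T_j$, a zonotope that contains the exact reachable set of the ODE from the supplied zonotope initial set. Applied with the initial set $\Zaug_0=\zonocg{c_{\zaug}}{G_{\zaug}}$, this yields, over each time segment contained in a single mode, a zonotope $\RR_j\subset\R^{9+n_p}$ with the pointwise containment $\zaug(t)\in\RR_j$ for all $t\in T_j$ and all solutions of that mode starting in $\Zaug_0$.

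Then I would splice in the discrete transitions. When the reachable zonotope meets the guard, Proposition 3.7 and Section 3.5.3 of \cite{althoff2010reachability} produce an over-approximating enclosure of the guard intersection and of its image under the reset map, again as a (union of) zonotope(s); for the projection reset \eqref{eq: reset h2l} this image is computed exactly via \eqref{eq: zono-matrix mult}, and for \eqref{eq: reset l2h} it is over-approximated by the same conservative-linearization machinery. Chaining the continuous over-approximations and the discrete over-approximations by induction over the finitely many time segments and mode switches gives $\zaug(t)\in\RR_j$ for all $j\in\J$ and $t\in T_j$ and all hybrid solutions with $\zaug(0)\in\Zaug_0$. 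Finally I would compare with \eqref{eq: FRS def}: any $(x,y)\in\FRS([0,\tf])$ equals $\pi_{xy}(\zaug(t))$ for such a solution and some $t\in[0,\tf]=\bigcup_{j\in\J}T_j$, hence $(x,y)\in\pi_{xy}(\RR_j)$ for the corresponding $j$, which is exactly \eqref{eq: FRS inside Zono}.

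The main obstacle is bookkeeping rather than estimation: one has to make sure the closed-loop augmented dynamics genuinely fit the regularity and structural requirements of \cite{althoff2010reachability}. In particular the gains $\kappa_u,\phi_u,\kappa_r,\phi_r$ depend on running integrals of the tracking error, so strictly one should add these integrator terms to the state (alongside $\zvel_0$ and $p$) before the per-mode dynamics become a bona fide time-augmented ODE; and the points where the desired trajectory is only piecewise $C^1$ (at $\tnb$ and $\tstop$) should be treated as extra trivial guard surfaces so that CORA only ever integrates a $C^1$ vector field between transitions. Once these points are addressed, the theorem is immediate from the cited soundness results.
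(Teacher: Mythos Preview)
Your proposal is correct and matches the paper's approach: the paper does not give a standalone proof but simply states the theorem ``as a direct application of Theorem 3.3, Proposition 3.7 and the derivation in Section 3.5.3 in \cite{althoff2010reachability},'' which is exactly the invocation you carry out in more detail. Your additional bookkeeping remarks (finitely many mode switches, the need to augment with the integrator states $\varepsilon_u,\varepsilon_r,\varepsilon_h$, and treating the piecewise-$C^1$ breakpoints as trivial guards) are valid refinements the paper either leaves implicit or defers to Section \ref{sec: satisfaction of linear regime}.
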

\noindent Notice in \eqref{eq: FRS inside Zono} we have abused notation by extending the domain of  $\pi_{xy}$ to any zonotope $\Z = \zonocg{c}{G}$ in $\R^{9+n_p}$ as
\begin{equation}
   \pi_{xy}(\Z) = \left<\begin{bmatrix}
    [c]_1 \\ [c]_2
    \end{bmatrix},~\begin{bmatrix}
    [G]_{1:} \\ [G]_{2:}
    \end{bmatrix}\right>.
\end{equation}

\subsection{Slicing}

The FRS computed in the previous subsection contain the behavior of the hybrid vehicle dynamics model for all initial conditions belonging to $\Z_0$ and $\P$. 
To use this set during online optimization, \methodname{} plugs in the predicted initial velocity of the vehicle dynamics at time $\tz + \tplan$ and then optimizes over the space of trajectory parameters. 
Recall the hybrid vehicle model is assumed to have zero initial position condition during the computation of $\{\RR_j\}_{j\in\J}$ by Assumption \ref{ass: initial 0 pos condition}.
This subsection describes how to plug in the initial velocity into the pre-computed FRS. 

We start by describing the following useful property of the zonotopes $\RR_j$ that make up the FRS, which follows from Lemma 22 in \cite{holmes2020reachable}:
\begin{prop}
\label{prop: sliceable}
    Let $\{\RR_j = \zonocg{c_{\RR_j}}{G_{\RR_j}}\}_{j \in \J}$ be the set of zonotopes computed by CORA under the hybrid vehicle dynamics model beginning from $\Zaug_0$.
    Then for any $j\in\J$, $G_{\RR_j}=[g_{\RR_j,1},g_{\RR_j,2},\ldots,g_{\RR_j,\ell_j}]$ has only one generator, $g_{\RR_j,b_k}$, that has a nonzero element in the $k$-th dimension for each $k \in \{7, \ldots, (9 + n_p) \}$.
    In particular, $b_{k}\neq b_{k'}$ for $k \neq k'$. 
\end{prop}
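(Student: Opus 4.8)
The plan is to track how the structure of the initial-condition zonotope $\Zaug_0$ is preserved (in a controlled way) through the reachability computation, and then invoke the cited structural result from \cite{holmes2020reachable}. First I would recall the key observation made just before the statement: by construction each row of $G_{\zaug}$ has at most one nonzero element, and moreover the dimensions $k \in \{7, \ldots, 9+n_p\}$ (the augmented, time-invariant states $\zvel_0$ and $p$) correspond exactly to the rows where $\Gvel_0$ and $G_p$ sit, each of which has nonzero entries in only one generator column. So at time $0$ the claim holds: there is a unique generator with a nonzero $k$-th component for each such $k$, and these generators are distinct across $k$ because $\Gvel_0$ and $G_p$ are (assumed to be) diagonal with no zero rows.

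Next I would argue that this ``one generator per augmented dimension'' property is an invariant of the CORA reachability algorithm applied to the hybrid vehicle dynamics. The crucial fact is that dimensions $7$ through $9+n_p$ are \emph{static} — their dynamics in \eqref{eq: dyn tilde_z} are identically zero, and the reset map of $HS$ leaves these coordinates untouched. Consequently, when CORA propagates a zonotope through one time step, the new generators produced by the linearization/Taylor-model step, by the error-set inflation, and by the hybrid guard intersection all have zero entries in dimensions $7,\ldots,9+n_p$: these coordinates neither appear in the vector field of the first six states in a way that would be differentiated into those rows, nor do they themselves evolve. Only the single generator that already carried the $k$-th dimension at the previous step retains a nonzero $k$-th entry (its value is simply carried forward, possibly with the first six rows of that generator being modified by the flow). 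Hence the property that exactly one generator is nonzero in dimension $k$, and that these generators are pairwise distinct across $k$, is maintained inductively over all $j \in \J$. This is essentially the content of Lemma 22 in \cite{holmes2020reachable}, which I would cite to avoid re-deriving the internals of CORA; the work here is to check that the hypotheses of that lemma — static augmented dimensions, block-diagonal initial generator matrix — are met by our $\Zaug_0$ and $HS$.

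The main obstacle I anticipate is making the inductive step rigorous in the presence of the \emph{hybrid} transition: one must verify that CORA's guard-intersection and reset-map handling does not introduce spurious generators with support in dimensions $7,\ldots,9+n_p$, and does not merge or split the distinguished generators in a way that breaks the bijection $k \mapsto b_k$. Since the guard $\{u(t)=\uc\}$ constrains only the fourth coordinate and the reset map \eqref{eq: reset h2l}--\eqref{eq: reset l2h} acts as the identity on the last $3+n_p$ coordinates (and, in the low-to-high direction, writes $\vlo,\rlo$ into the fifth and sixth coordinates using \eqref{eq: v lo}--\eqref{eq: r lo}, which again does not touch rows $7$ through $9+n_p$), the distinguished generators pass through unchanged in those rows. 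A secondary, more bookkeeping-level point is the ``no zero rows'' caveat: if $\Gvel_0$ or $G_p$ had a zero row, the corresponding dimension would be a fixed constant and need not be carried in $\zaug$ at all, so the without-loss-of-generality reduction stated in the text justifies assuming each of dimensions $7,\ldots,9+n_p$ genuinely has a (unique) associated generator. Putting these together gives the claim; I would then remark that the index set $\{b_k\}_{k=7}^{9+n_p}$ is exactly the set of ``sliceable'' generators that the subsequent subsection uses to plug in the initial velocity.
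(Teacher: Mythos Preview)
Your proposal is correct and matches the paper's approach: the paper does not prove this proposition from scratch but simply states that it ``follows from Lemma 22 in \cite{holmes2020reachable},'' which is exactly the lemma you invoke. Your write-up goes further than the paper by explicitly checking that the hypotheses of that lemma are satisfied---in particular that the augmented dimensions are static, that the initial generator matrix has the block-diagonal structure, and that the hybrid guard and reset do not disturb rows $7,\ldots,9+n_p$---but this is elaboration on the same argument, not a different route.
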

\noindent We refer to the generators with a nonzero element in the $k$-th dimension for each $k \in \{7, \ldots, (9 + n_p) \}$ as a \emph{sliceable generator} of $\RR_j$ in the $k$-th dimension.
In other words, for each $\RR_j= \zonocg{c_{\RR_j}}{G_{\RR_j}}$, there are exactly $3 + n_p$ nonzero elements in the last  $3 + n_p$ rows of $G_{\RR_j}$, and none of these nonzero elements appear in the same row or column. 
By construction $\Zaug_0$ has exactly $3 + n_p$ generators, which are each sliceable.
Using Proposition \ref{prop: sliceable}, one can conclude that $\RR_j$ has no less than $3 + n_p$ generators (i.e., $\ell\geq 3 + n_p$). 

Proposition \ref{prop: sliceable} is useful because it allows us to take a known $\zvel_0 \in \Zvel_0$ and $p \in \P$ and plug them into the computed $\{\RR_j\}_{j\in\J}$ to generate a \emph{slice} of the conservative approximation of the FRS that includes the evolution of the hybrid vehicle dynamics model beginning from $\zvel_0$ under trajectory parameter $p$.
In particular, one can plug the initial velocity into the sliceable generators as described in the following definition:

\begin{defn} \label{defn:slice}
Let $\{\RR_j = \zonocg{c_{\RR_j}}{G_{\RR_j}}\}_{j \in \J}$ be the set of zonotopes computed by CORA under the hybrid vehicle dynamics model beginning from $\Zaug_0$ where $G_{\RR_j}=[g_{\RR_j,1},g_{\RR_j,2},\ldots,  g_{\RR_j,{\ell_j}}]$.
Without loss of generality, assume that the sliceable generators of each $\RR_j$ are the first $3+n_p$ columns of $G_{\RR_j}$.
In addition, without loss of generality assume that the sliceable generators are ordered so that the dimension in which the non-zero element appears is increasing. 
The \emph{slicing operator} $\slice: P(\R^{9 + n_p})\times\Zvel_0\times\P\rightarrow  P(\R^{9 + n_p})$ is defined as
\begin{equation}
\label{eq: slice def}
   \hspace*{-0.2cm} \slice( \RR_j,\zvel_0,p) = \zonocg{c^\text{slc}}{[g_{\RR_j,(4 +n_p)},\ldots, g_{\RR_j,{\ell_j}}]}
\end{equation}
where 
\begin{equation}
\label{eq: slice center}
\begin{split}
    c^\text{slc} = c_{\RR_j} + \sum_{k=7}^{9}&\frac{[\zvel_0]_{(k-6)}-[c_{\RR_j}]_k}{[g_{\RR_j,(k-6)}]_k}g_{\RR_j,(k-6)} + \\ &+\sum_{k=10}^{9+n_p}\frac{[p]_{(k-9)}-[c_{\RR_j}]_k}{[g_{\RR_j,(k-6)}]_k}g_{\RR_j,(k-6)}.
\end{split}
\end{equation}
\end{defn}
\noindent Note, that in the interest of avoiding introducing novel notation, we have abused notation and assumed that the domain of $\slice$ is $P(\R^{9 + n_p})$ rather than the space of zonotopes in $P(\R^{9 + n_p})$. 
However, throughout this paper we only plug in zonotopes belonging to $P(\R^{9 + n_p})$ into the first argument of $\slice$.
Using this definition, one can show the following useful property whose proof can be found in Appendix \ref{app:slicing}:
\begin{thm}
\label{thm: slice_is_good}
Let $\{\RR_j \}_{j \in \J}$ be the set of zonotopes computed by CORA under the hybrid vehicle dynamics model beginning from $\Zaug_0$ and satisfy the statement of Definition \ref{defn:slice}.
    Then for any $j\in\J$, $z_0=[0,0,0,(\zvel_0)^\top]^\top\in\Z_0$, and $p\in\P$,  $\slice( \RR_j,\zvel_0,p)\subset \RR_j$.
    In addition, suppose $\zaug$ is a solution to $HS$ with initial condition $z_0$ and control parameter $p$.
    Then for each $j \in \J$ and $t \in T_j$
    \begin{equation}
        \zaug(t)\in \slice( \RR_j,\zvel_0,p ).
    \end{equation}
\end{thm}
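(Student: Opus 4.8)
The plan is to prove Theorem~\ref{thm: slice_is_good} in two stages: first the static set-containment claim $\slice(\RR_j,\zvel_0,p)\subset\RR_j$, and then the trajectory-membership claim $\zaug(t)\in\slice(\RR_j,\zvel_0,p)$ for $t\in T_j$.

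For the containment claim, I would argue directly from Definition~\ref{def: zonotope} and Definition~\ref{defn:slice}. By Proposition~\ref{prop: sliceable}, for each $k\in\{7,\ldots,9+n_p\}$ the only generator of $\RR_j$ with a nonzero $k$-th component is the $(k-6)$-th sliceable generator, and these are distinct across $k$. Write a generic point of $\RR_j$ as $c_{\RR_j}+\sum_{m=1}^{\ell_j}\beta_m g_{\RR_j,m}$ with $\beta_m\in[-1,1]$. The point $c^\text{slc}$ of \eqref{eq: slice center} is obtained by fixing, for each sliceable index $m=k-6$ with $k\in\{7,\ldots,9+n_p\}$, the coefficient $\beta_m$ to the specific value $\beta_m^\star := \frac{[\zvel_0]_{k-6}-[c_{\RR_j}]_k}{[g_{\RR_j,k-6}]_k}$ (with $[p]$ replacing $[\zvel_0]$ in the range $k\in\{10,\ldots,9+n_p\}$). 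The key arithmetic point is that $\beta_m^\star\in[-1,1]$: since $\Zaug_0$ has sliceable generators exactly spanning the augmented initial-condition box and CORA's reachability map is the (affine-plus-error) propagation described in \cite{althoff2010reachability}, the $k$-th component of $\RR_j$ ranges over an interval centered at $[c_{\RR_j}]_k$ of half-width $|[g_{\RR_j,k-6}]_k|$, and this interval must contain $[\zvel_0]_{k-6}$ (resp.\ $[p]_{k-9}$) because the initial box $\Zvel_0\times\P$ does. Hence $\beta_m^\star\in[-1,1]$, so $c^\text{slc}\in\RR_j$, and since the remaining generators $g_{\RR_j,(4+n_p)},\ldots,g_{\RR_j,\ell_j}$ are a subset of those of $\RR_j$ with coefficients still free in $[-1,1]$, every point of $\slice(\RR_j,\zvel_0,p)$ lies in $\RR_j$.

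For the trajectory claim, I would combine Theorem~\ref{thm: FRS over-approximation} with the observation that the last $3+n_p$ coordinates of $\zaug$ are static. By Theorem~\ref{thm: FRS over-approximation}, $\zaug(t)\in\RR_j$ for all $t\in T_j$, so there exist coefficients $\beta_m(t)\in[-1,1]$ with $\zaug(t)=c_{\RR_j}+\sum_m\beta_m(t)g_{\RR_j,m}$. Projecting onto coordinate $k\in\{7,\ldots,9+n_p\}$ and using Proposition~\ref{prop: sliceable}, only the sliceable generator $g_{\RR_j,k-6}$ contributes, so $[\zaug(t)]_k = [c_{\RR_j}]_k + \beta_{k-6}(t)[g_{\RR_j,k-6}]_k$. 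But $[\zaug(t)]_k$ equals the corresponding component of $[(\zvel_0)^\top,p^\top]^\top$ (constant in $t$), which forces $\beta_{k-6}(t)=\beta_{k-6}^\star$, exactly the value used to build $c^\text{slc}$. Substituting back, $\zaug(t)=c^\text{slc}+\sum_{m\ge 4+n_p}\beta_m(t)g_{\RR_j,m}$ with $\beta_m(t)\in[-1,1]$, which is precisely a point of $\slice(\RR_j,\zvel_0,p)$.

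The main obstacle is the first stage, specifically justifying rigorously that $\beta_m^\star\in[-1,1]$, i.e.\ that the sliceable generators retain the property that the $k$-th coordinate range of $\RR_j$ faithfully covers the $k$-th coordinate range of $\Zaug_0$. This is really a statement about how CORA propagates zonotopes through the hybrid dynamics: the static dimensions are passed through the linearization/error-bounding steps unchanged, so their generators neither grow nor shrink and pick up no contributions from other dimensions. I would lean on Lemma~22 of \cite{holmes2020reachable} (already cited for Proposition~\ref{prop: sliceable}) and the structure of $G_{\zaug}$ (each row has at most one nonzero entry) to make this precise; once the sliceable-generator structure and the exact reconstruction of $\beta_m^\star$ from the static coordinates are in hand, both halves of the theorem follow by the same coordinate-projection bookkeeping.
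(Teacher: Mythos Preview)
Your proposal is correct and follows essentially the same approach as the paper. One minor simplification: the paper resolves your ``main obstacle'' by observing that the augmented coordinates $(\zvel_0,p)$ have zero dynamics in $HS$, so the projection of $\RR_j$ onto its last $3+n_p$ coordinates equals $\Zvel_0\times\P$ exactly for every $j$, which immediately yields $\beta_m^\star\in[-1,1]$ without invoking CORA's internal propagation mechanics.
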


\subsection{Accounting for the Vehicle Footprint in the FRS}

The conservative representation of the FRS generated by CORA only accounts for the ego vehicle's center of mass because $HS$ treats the ego vehicle as a point mass.
To ensure not-at-fault behavior while planning using \methodname{}, one must account for the footprint of the ego vehicle, $\Oego$, as in Definition \ref{def: footprint}.

To do this, define a projection operator $\pi_h: \{R_j\}_{j\in\J}\rightarrow P(\R)$ as $\pi_h(R_j) \mapsto \zonocg{[c_{R_j}]_3}{[G_{R_j}]_{3:}}$ where $R_j=\zonocg{c_{R_j}}{G_{R_j}}$ is a zonotope computed by CORA as described in Section \ref{subsec: FRS computation}.
Then by definition $\pi_h(\RR_j)$ is a zonotope and it conservatively approximates of the ego vehicle's heading during $T_j$.
Moreover, because $\pi_h(\RR_j)$ is a 1-dimensional zonotope, it can be rewritten as a 1-dimensional box $\Int(\hmid-\hrad, \hmid+\hrad)$ where $\hmid = [c_{\RR_j}]_3$ and $\hrad = \Sum(|[G_{\RR_j}]_{3:}|)$.
We can then use $\pi_h$ to define a map to account for vehicle footprint within the FRS:
\begin{defn} \label{defn:footprint}
Let $\RR_j$ be the zonotope computed by CORA under the hybrid vehicle dynamics model beginning from $\Zaug_0$ for arbitrary $j\in\J$, and denote $\pi_h(\RR_j)$ as $\Int(\hmid-\hrad, \hmid+\hrad)$.
Let $\SS \subset \W$ be a 2-dimensional box centered at the origin with length $\sqrt{L^2+W^2}$ and width $\frac{1}{2}L|\sin(\hrad)|+\frac{1}{2}W|\cos(\hrad)|$.
Define the \emph{rotation map} $\rot:P(\R)\rightarrow P(\W)$ as
\begin{equation}
\label{eq: def rot}
    \rot\big(\pi_h(\RR_j)\big) = \begin{bmatrix}
        \cos(\hmid)& -\sin(\hmid)\\\sin(\hmid) & \cos(\hmid)
    \end{bmatrix} \SS.
\end{equation}
\end{defn}
\noindent Note that in the interest of simplicity, we have abused notation and assumed that the argument to $\rot$ is any subset of $\R$.
In fact, it must always be a $1$-dimensional box. 
In addition note that $\rot\big(\pi_h(\RR_j)\big)$ is a zonotope because  the 2-dimenional box $\SS$ is equivalent to a 2-dimensional zonotope and it is multiplied by a matrix via \eqref{eq: zono-matrix mult}.
By applying geometry, one can verify that by definition $\SS$ bounds the area that $\Oego=\Int([-0.5L, -0.5W]^\top, [0.5L, 0.5W]^\top)$ travels through while rotating within the range $[-\hrad, \hrad]$.
As a result, $\rot\big(\pi_h(\RR_j)\big)$ over-approximates the area over which $\Oego$ sweeps according to $\pi_h(\RR_j)$ as shown in Fig. \ref{fig: ego rotate}.

Because $\SS$ can be represented as a zonotope with 2 generators, one can denote $\rot(\pi_h(\RR_j))$ as  $\zonocg{c_\rot}{G_\rot}\subset\R^2$ where $G_\rot\in\R^{2\times2}$.
Notice $\rot(\pi_h(\RR_j))$ in \eqref{eq: def rot} is a set in $\W$ rather than the higher dimensional space where $\RR_j$ exists.
We extend $\rot(\pi_h(\RR_j))$ to $\R^{9+n_p}$ as
\begin{equation}
    \ROT(\pi_h(\RR_j)) := \left< \begin{bmatrix} c_\rot \\ 0_{(7+n_p)\times 1}\end{bmatrix},~
    \begin{bmatrix} G_\rot \\ 0_{(7+n_p)\times 2}\end{bmatrix}\right>.
\end{equation}
Using this definition, one can extend the FRS to account for the vehicle footprint as in the following lemma whose proof can be found in Appendix \ref{app:footprint}:

\begin{figure}[t]
    \centering
    \includegraphics[trim={0cm, 1.5cm, 18cm, 6.5cm}, clip,width=0.7\columnwidth,clip=true]{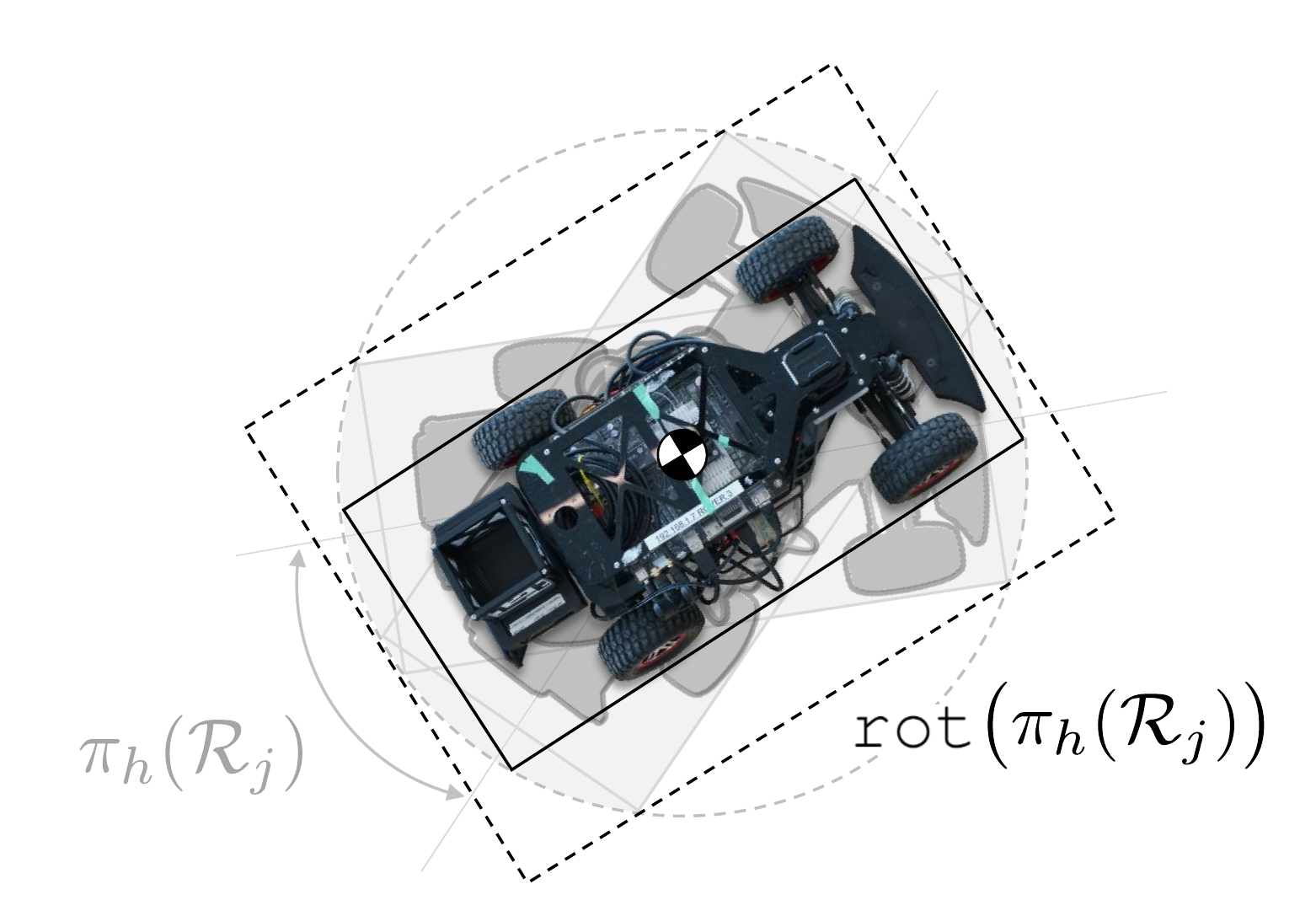}
    \caption{Rotation of the ego vehicle and its footprint within range $\pi_h(\RR_j)$. The ego vehicle with heading equals to the mean value of $\pi_h(\RR_j)$ is bounded by the box with solid black boundaries.
    The range of rotated heading is indicated by the grey arc.
    The area the ego vehicle's footprint sweeps is colored in grey, and is bounded by box $\rot\big(\pi_h(\RR_j)\big)$ with dashed black boundaries.}
    \label{fig: ego rotate}
\end{figure}

\begin{lem} \label{lem:footprint}
Let $\{\RR_j\}_{j \in \J}$ be the set of zonotopes computed by CORA under the hybrid vehicle dynamics model beginning from $\Zaug_0$.
Let $\zaug$ be a solution to $HS$ with initial velocity $\zvel_0$ and control parameter $p$ and 
let $\xi:P(\R^{9 + n_p})\times \Zvel_0\times\P\rightarrow P(\W)$ be defined as
\begin{equation}
\label{eq: xi def}
   \hspace*{-0.35cm} \xi(\RR_j,\zvel_0,p) = \pi_{xy}\Big(\slice\big(\RR_j\oplus\ROT(\pi_h(\RR_j)), \zvel_0, p\big)\Big).
\end{equation}
Then $\xi(\RR_j,\zvel_0,p)$ is a zonotope and for all $j \in \J$ and $t \in T_j$, the vehicle footprint oriented and  centered according to $\zaug(t)$ is contained within $\xi(\RR_j,\zvel_0,p)$.
\end{lem}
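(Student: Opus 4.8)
\textbf{Proof plan for Lemma~\ref{lem:footprint}.}
The plan is to unwind the definition of $\xi$ in \eqref{eq: xi def} from the inside out, using the three facts already established: Theorem~\ref{thm: FRS over-approximation} (so $\zaug(t)\in\RR_j$ for $t\in T_j$), Theorem~\ref{thm: slice_is_good} (so the slice of $\RR_j$ at $(\zvel_0,p)$ still contains $\zaug(t)$ and is itself a zonotope), and the geometric fact recorded after Definition~\ref{defn:footprint} that $\rot(\pi_h(\RR_j))$ over-approximates the region swept by the footprint $\Oego$ as the heading ranges over the interval $\pi_h(\RR_j)$. First I would argue that $\xi(\RR_j,\zvel_0,p)$ is a zonotope: $\RR_j$ is a zonotope by Theorem~\ref{thm: FRS over-approximation}; $\ROT(\pi_h(\RR_j))$ is a zonotope by the remark following its definition (it is the box $\SS$, a 2-generator zonotope, padded with zero rows); their Minkowski sum is a zonotope by the closure property stated in Section~\ref{sec: prelim} after Definition~\ref{def: zonotope}; $\slice$ of a zonotope is a zonotope by Definition~\ref{defn:slice} (it merely shifts the center and drops the sliceable generators); and $\pi_{xy}$ of a zonotope is a zonotope by the extension of $\pi_{xy}$ defined after Theorem~\ref{thm: FRS over-approximation}. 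So the first assertion follows by chaining these four closure facts.

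For the containment claim, fix $j\in\J$ and $t\in T_j$. Let $R(t)\in SO(2)$ denote the planar rotation by the heading $h(t)$, and let $p_{xy}(t)=[x(t),y(t)]^\top$ denote the planar position; the footprint oriented and centered according to $\zaug(t)$ is by Definition~\ref{def: footprint} the set $p_{xy}(t)\oplus R(t)\Oego$. The key step is to show this set lies inside $\pi_{xy}\big(\slice(\RR_j\oplus\ROT(\pi_h(\RR_j)),\zvel_0,p)\big)$. I would do this in two moves. \emph{Move one: absorb the rotation.} By Theorem~\ref{thm: FRS over-approximation} the third coordinate $h(t)$ lies in $\pi_h(\RR_j)=\Int(\hmid-\hrad,\hmid+\hrad)$, so writing $h(t)=\hmid+\theta$ with $|\theta|\le\hrad$ and factoring $R(t)=R(\hmid)R(\theta)$, the geometric bound behind Definition~\ref{defn:footprint} gives $R(\theta)\Oego\subset\SS$, hence $R(t)\Oego = R(\hmid)R(\theta)\Oego \subset R(\hmid)\SS = \rot(\pi_h(\RR_j))$, using the matrix--zonotope multiplication identity \eqref{eq: zono-matrix mult}. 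Therefore $p_{xy}(t)\oplus R(t)\Oego \subset p_{xy}(t)\oplus \rot(\pi_h(\RR_j))$. \emph{Move two: lift to $\R^{9+n_p}$ and use the slice.} By Theorem~\ref{thm: slice_is_good}, $\zaug(t)\in\slice(\RR_j,\zvel_0,p)$, and since the first two coordinates of $\zaug(t)$ are exactly $p_{xy}(t)$, we get $p_{xy}(t)\in\pi_{xy}(\slice(\RR_j,\zvel_0,p))$. Adding the zero-padded footprint zonotope $\ROT(\pi_h(\RR_j))$ (whose $xy$-projection is $\rot(\pi_h(\RR_j))$ and whose remaining coordinates are zero), and using that $\pi_{xy}$ distributes over Minkowski sums and that $\slice$ and $\oplus$ commute in the relevant way, yields $p_{xy}(t)\oplus\rot(\pi_h(\RR_j))\subset \pi_{xy}\big(\slice(\RR_j\oplus\ROT(\pi_h(\RR_j)),\zvel_0,p)\big) = \xi(\RR_j,\zvel_0,p)$. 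Combining the two moves gives the containment.

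The main obstacle I anticipate is the bookkeeping in Move two: one must check that $\slice$ applied to $\RR_j\oplus\ROT(\pi_h(\RR_j))$ behaves as expected, i.e.\ that adding $\ROT(\pi_h(\RR_j))$ does not disturb the sliceable-generator structure that Proposition~\ref{prop: sliceable} and Definition~\ref{defn:slice} rely on. This is true because $\ROT(\pi_h(\RR_j))$ has nonzero entries only in the first two coordinates, so its two generators are not sliceable (they vanish in dimensions $7,\dots,9+n_p$), the sliceable generators of the sum are still exactly those of $\RR_j$, and the slice-center formula \eqref{eq: slice center} is unaffected; consequently $\slice(\RR_j\oplus\ROT(\pi_h(\RR_j)),\zvel_0,p) = \slice(\RR_j,\zvel_0,p)\oplus\ROT(\pi_h(\RR_j))$. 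The secondary subtlety is verifying the geometric claim $R(\theta)\Oego\subset\SS$ for $|\theta|\le\hrad$ carefully, including the edge cases where $\hrad$ is large; but this is exactly the elementary trigonometric computation sketched in the text accompanying Fig.~\ref{fig: ego rotate} (bounding the extent of a rotated rectangle by $\tfrac12 L|\sin(\hrad)|+\tfrac12 W|\cos(\hrad)|$ in one direction and $\sqrt{L^2+W^2}$ in the other), so I would state it and cite that discussion rather than re-derive it in full.
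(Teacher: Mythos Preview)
Your proposal is correct and follows essentially the same route as the paper: the paper first isolates as a separate lemma exactly the commutation identity you anticipate in your ``main obstacle'' paragraph, namely $\slice(\RR_j\oplus\ROT(\pi_h(\RR_j)),\zvel_0,p)=\slice(\RR_j,\zvel_0,p)\oplus\ROT(\pi_h(\RR_j))$, proved via the same observation that $\ROT$ contributes no sliceable generators, and then combines it with $\pi_{xy}$ distributing over $\oplus$ and Theorem~\ref{thm: slice_is_good} just as you do. Your Move~one (factoring $R(t)=R(\hmid)R(\theta)$) is in fact more explicit than the paper's appeal to ``by construction,'' but the logical structure is identical.
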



\noindent Again note that in the interest of simplicity we have abused notation and assumed that the first argument to $\xi$ is any subset of $\R^{9+n_p}$.
This argument is always a zonotope in  $\R^{9+n_p}$.


\section{Online Planning}
\label{sec: online}
This section begins by taking nonzero initial position condition into account and formulating the optimization for online planning in \methodname{} to search for a safety guaranteed control policy in real time.
It then explains how to represent each of the constraints of the online optimization problem in a differentiable fashion, and concludes by describing the performance of the online planning loop.

Before continuing we make an assumption regarding predictions of surrounding obstacles.
Because prediction is not the primary emphasis of this work, we assume that the future position of any sensed obstacle within the sensor horizon during $[\tz, \tz+\tplan+\tf]$ is conservatively known at time $t_0$:
\begin{assum}
\label{ass: obs in T}
    There exists a map $\vartheta:\J\times\I\rightarrow P(\W)$ such that $\vartheta(j,i)$ is a zonotope and
    \begin{equation}
        \cup_{t\in T_j}\OO_i(t) \cap \mathcal{B}\left( (x(t_0),y(t_0)), S \right) \subseteq \vartheta(j,i).
    \end{equation}
\end{assum}



\subsection{Nonzero Initial Position}
\label{subsec: nonzero pos}
Recall that the FRS computed in Section \ref{sec: rtd} is computed offline while assuming that the initial position of the ego vehicle is zero (i.e., Assumption \ref{ass: initial 0 pos condition}).
The zonotope collection $\{\RR_j\}_{j\in\J}$ can be understood as a local representation of the FRS in the local frame.
This local frame is oriented at the ego vehicle's location $[x_0,y_0]^\top\in\R^2$ with its $x$-axis aligned according to the ego vehicle's heading $h_0\in\R$, where $\zpos_0 = [x_0,y_0,h_0]^\top$ gives the ego vehicle's position $[x(t),y(t),h(t)]^\top$ at time $t=0$ in the world frame.
Similarly, $\xi(\RR_j,\zvel_0,p)$ is a local representation of the area that the ego vehicle may occupy during $T_j$ in the same local frame.

Because obstacles are defined in the world frame, to generate not-at-fault trajectories, one has to either transfer $\xi(\RR_j,\zvel_0,p)$ from the local frame to the world frame, or transfer the obstacle position $\vartheta(j,i)$ from the world frame to the local frame using a 2D rigid body transformation.
This work utilizes the second option and transforms $\vartheta(j,i)$ into the local frame as
\begin{equation}
\label{eq: vartheta local}
    \hspace*{-0.35cm}\vartheta^\text{loc}(j,i,\zpos_0)= \begin{bmatrix}
    \cos(h_0) & \sin(h_0) \\ -\sin(h_0) & \cos(h_0)
    \end{bmatrix}(\vartheta(j,i)-\begin{bmatrix} x_0\\y_0 \end{bmatrix}).
\end{equation}

\subsection{Online Optimization}
\label{subsec: online opt}

Given the predicted initial condition of the vehicle at $t = 0$ as $z_0 = [(\zpos_0)^\top, (\zvel_0)^\top]^\top\in\R^3\times\Zvel_0$, \methodname{} computes a not-at-fault trajectory by solving the following optimization problem at each planning iteration:
\begin{align*}
    \min_{p \in \P} & \quad \cost(z_0,p) \hspace{4cm} \opt\\
    \text{s.t.}
    & \quad \xi(\RR_j, \zvel_0,p) \cap \vartheta^\text{loc} (j,i,\zpos_0 )=\emptyset, \hspace{0.5cm} \forall j\in\J, \forall i\in\I
\end{align*}
where $\cost:\R^3\times\Zvel_0\times\P \to \R$ is a user-specified cost function and $\xi$ is defined as in Lemma \ref{lem:footprint}.
Note that the constraint in \opt is satisfied if for a particular trajectory parameter $p$, there is no intersection between any obstacle and the reachable set of the ego vehicle with its footprint considered during any time interval while following $p$.

\subsection{Representing the Constraint and its Gradient in \opt}
\label{subsec: constraint}

The following theorem, whose proof can be found in Appendix \ref{app:constraint}, describes how to represent the set intersection constraint in \opt and how to compute its derivative with respect to $p \in \P$:



\begin{thm}\label{thm:constraint}
  There exists matrices $A$ and $B$ and a vector $b$ such that $\xi(\RR_j,\zvel_0,p)\cap\vartheta^\text{loc}(j,i,\zpos_0)=\emptyset$ if and only if $\max(BA\cdot p - b) > 0$.
  In addition, the subgradient of $\max(BA\cdot p - b)$ with respect to $p$ is $\max_{k \in \hat{K}} [BA]_{k:}$,
 where $\hat{K} = \{k \mid [BA\cdot p - b]_k = \max(BA\cdot p - b)\}$.
\end{thm}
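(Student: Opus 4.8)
The plan is to reduce the set-intersection constraint $\xi(\RR_j,\zvel_0,p)\cap\vartheta^\text{loc}(j,i,\zpos_0)=\emptyset$ to a statement about a single fixed zonotope, then apply the standard zonotope--point separation criterion. First I would observe, using Lemma~\ref{lem:footprint} and Definition~\ref{defn:slice}, that $\xi(\RR_j,\zvel_0,p)$ is obtained from $\RR_j\oplus\ROT(\pi_h(\RR_j))$ by slicing out the initial-velocity dimensions (which are fixed, since $\zvel_0$ is known at planning time) and the parameter dimensions, and then projecting to the $xy$-plane. Because of Proposition~\ref{prop: sliceable}, the generators that get "used up'' by slicing in the parameter dimensions $k\in\{10,\dots,9+n_p\}$ are distinct sliceable generators $g_{\RR_j,(k-6)}$, each contributing a term $\frac{[p]_{(k-9)}-[c_{\RR_j}]_k}{[g_{\RR_j,(k-6)}]_k}g_{\RR_j,(k-6)}$ to the sliced center in \eqref{eq: slice center}. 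After also plugging in the known $\zvel_0$ and adding the Minkowski term $\ROT(\pi_h(\RR_j))$, the net effect is that $\xi(\RR_j,\zvel_0,p)$ is a zonotope whose \emph{generator matrix does not depend on $p$} and whose \emph{center is an affine function of $p$}. Writing $\pi_{xy}$ of the sliced center, this center has the form $c_0 + A\,p$ for a suitable constant vector $c_0\in\R^2$ and constant matrix $A\in\R^{2\times n_p}$ (the columns of $A$ are the $xy$-projections of the scaled sliceable generators). Let $G\in\R^{2\times m}$ denote the fixed generator matrix of $\xi(\RR_j,\zvel_0,p)$.

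Next I would handle the obstacle side. Since $\vartheta^\text{loc}(j,i,\zpos_0)$ is itself a zonotope (it is $\vartheta(j,i)$, a zonotope by Assumption~\ref{ass: obs in T}, rigidly transformed as in \eqref{eq: vartheta local}), the emptiness of the intersection of the two zonotopes $\xi(\RR_j,\zvel_0,p)$ and $\vartheta^\text{loc}(j,i,\zpos_0)$ is equivalent, by the Minkowski-sum trick, to the center-difference point not lying in the buffered zonotope $\Z^\text{buf} := \zonocg{c_{\vartheta}-c_0}{[\,G,\;G_{\vartheta}\,]}$, where $c_{\vartheta}, G_{\vartheta}$ are the center and generators of $\vartheta^\text{loc}(j,i,\zpos_0)$. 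Equivalently, $A\,p \notin \Z^\text{buf}$. Now I invoke the standard halfspace representation of a $2$-D zonotope: there is a matrix $B$ and vector $b$ (computed from the generators of $\Z^\text{buf}$ by taking, for each generator, the outward normal to the line it spans together with the appropriate offset; this is the classical zonotope-to-polytope conversion) such that a point $q\in\R^2$ satisfies $q\in\Z^\text{buf}$ iff $Bq \le b$ componentwise. Hence $A\,p\notin\Z^\text{buf}$ iff $\max(B A\,p - b) > 0$, which is exactly the claimed criterion. Collecting $B$, $A$, $b$ gives the asserted matrices and vector.

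For the subgradient claim, I would note that $p\mapsto \max(BA\,p - b)$ is a pointwise maximum of the finitely many affine functions $p\mapsto [BA]_{k:}\,p - [b]_k$, each of which has constant gradient $[BA]_{k:}$. By the standard subdifferential-of-a-max formula (Danskin / Rockafellar), the subdifferential at $p$ is the convex hull of the gradients of the active affine pieces, i.e.\ of $\{[BA]_{k:} : k\in\hat K\}$ with $\hat K$ as defined; any selection from this set is a valid subgradient, and in particular $\max_{k\in\hat K}[BA]_{k:}$ (interpreted componentwise as the paper writes it, or simply any active row) serves as the subgradient used in the optimization.

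The main obstacle I anticipate is the bookkeeping in the first step: carefully tracking which generators of $\RR_j$ are consumed by slicing versus retained, verifying via Proposition~\ref{prop: sliceable} that slicing in the $p$-dimensions really does leave the remaining generator matrix independent of $p$, and confirming that the $p$-dependence enters \eqref{eq: slice center} \emph{affinely} (it does, since each scaling coefficient $\frac{[p]_{(k-9)}-[c_{\RR_j}]_k}{[g_{\RR_j,(k-6)}]_k}$ is affine in $[p]$ and the corresponding generator $g_{\RR_j,(k-6)}$ is a constant vector). Once that linearity is nailed down, the remaining steps -- the Minkowski-sum reduction of zonotope intersection to point membership, the zonotope halfspace conversion, and the subgradient of a max of affine functions -- are standard.
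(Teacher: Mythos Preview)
Your proposal is correct and follows essentially the same approach as the paper: the paper likewise first establishes (as a separate lemma) that $\xi(\RR_j,\zvel_0,p)=\zonocg{c_\xi+A\,p}{G_\xi}$ with $G_\xi$ independent of $p$, then reduces zonotope intersection to the point-in-zonotope test $A\,p\in\zonocg{c_\vartheta-c_\xi}{[G_\vartheta,G_\xi]}$ via the Minkowski-sum trick, converts that zonotope to halfspace form to obtain $B$ and $b$, and finally cites a standard max-of-affine subgradient result (Polak rather than Danskin/Rockafellar) for the second claim. Your anticipated ``main obstacle'' is exactly what the paper isolates as its preparatory Lemma, and your bookkeeping sketch matches theirs.
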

\noindent Formulas for the matrices $A$ and $B$ and vector $b$ in the previous theorem can be found in \eqref{eq: A def}, \eqref{eq:Bdef}, and \eqref{eq:bdef}, respectively.

\subsection{Online Operation}



 Algorithm \ref{alg:methodname} summarizes the online operations of  \methodname{}.
In each planning iteration, the ego vehicle executes the feasible control parameter that is computed in the previous planning iteration (Line 3).
Meanwhile, \texttt{SenseObstacles} senses and predicts obstacles as in Assumption \ref{ass: obs in T} (Line 4) in local frame decided by $\zpos_0$.
\opt is then solved to compute a control parameter $p^*$ using $z_0$ and $\{\vartheta^\text{loc}(j,i,\zpos_0)\}_{(j,i)\in\J\times\I}$ (Line 5). 
If \opt fails to find a feasible solution within $\tplan$, the contingency braking maneuver whose safety is verified in the last planning iteration is executed, and \methodname{} is terminated (Line 6).
In the case when \opt is able to find a feasible $p^*$, \texttt{StatePrediction} predicts the state value at $t=\tnb$ based on $z_0$ and $p^*$ as in Assumption \ref{assum:tplan} (Lines 7 and 8).
If the predicted velocity value does not belong to $\Zvel_0$, then its corresponding FRS is not available and the planning has to stop while executing a braking maneuver (Line 9).
Otherwise we reset time to 0 (Line 10) and start the next planning iteration.
Note Lines 4 and 7 are assumed to execute instantaneously, but in practice the time spent for these steps can be subtracted from $\tplan$ to ensure real-time performance.
By iteratively applying Definition \ref{defn:notatfault}, Lemmas \ref{lem:braking} and \ref{lem:footprint}, Assumption \ref{ass: obs in T} and \eqref{eq: vartheta local}, the following theorem holds:
\begin{thm}
Suppose the ego vehicle can sense and predict surrounding obstacles as in Assumption \ref{ass: obs in T}, and starts with a not-at-fault control parameter $p_0\in\P$.
Then by performing planning and control as in Algorithm \ref{alg:methodname}, the ego vehicle is not-at-fault for all time.
\end{thm}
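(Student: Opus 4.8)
The plan is to argue by induction on the planning iterations, using the receding-horizon structure depicted in Figure~\ref{fig:timeline} together with the guarantees already established for a single iteration. The key observation is that the definition of not-at-fault (Definition~\ref{defn:notatfault}) only requires that the ego vehicle is collision-free \emph{while moving}, so it suffices to show that at every instant of time the vehicle is either executing a previously-verified safe maneuver or has come to a stop.

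\textbf{Base case.} By hypothesis the ego vehicle starts with a not-at-fault control parameter $p_0 \in \P$. By the construction in Section~\ref{sec: rtd}, this means that the plan associated with $p_0$ — a driving maneuver of duration $\tnb$ followed by a contingency braking maneuver verified over $[\tz+\tplan+\tnb, \tz+\tplan+\tf]$ — has been checked against the constraint in \opt, i.e.\ $\xi(\RR_j,\zvel_0,p_0)\cap\vartheta^\text{loc}(j,i,\zpos_0)=\emptyset$ for all $j\in\J$ and $i\in\I$. By Lemma~\ref{lem:footprint} the vehicle footprint oriented and centered according to $\zaug(t)$ is contained in $\xi(\RR_j,\zvel_0,p_0)$ for all $t\in T_j$, and by Assumption~\ref{ass: obs in T} together with the coordinate transformation \eqref{eq: vartheta local}, $\vartheta^\text{loc}(j,i,\zpos_0)$ over-approximates obstacle $i$ during $T_j$ in the local frame. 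Hence the vehicle footprint does not intersect any obstacle over $[\tz, \tz+\tplan+\tf]$, so the vehicle is not-at-fault during that interval.

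\textbf{Inductive step.} Suppose the vehicle is not-at-fault up through the start of the $n$-th planning iteration at time $\tz$ (reset to $0$ per the convention in Section~\ref{sec: rtd}), and that a not-at-fault control parameter $p_{n-1}$ — including its verified braking maneuver — is in hand. During $[\tz, \tz+\tplan]$ the vehicle executes the already-verified driving portion of $p_{n-1}$ (Line~3 of Algorithm~\ref{alg:methodname}), which by the base-case argument keeps it not-at-fault. Now two cases arise. If \opt finds a feasible $p^*$ within $\tplan$ seconds and \texttt{StatePrediction} returns a predicted velocity in $\Zvel_0$ (so the corresponding FRS is available), then by Theorem~\ref{thm:constraint} the feasibility of \opt is exactly the guarantee that $p^*$'s full plan (driving plus braking) is collision-free against all predicted obstacles; applying $p^*$ over $[\tz+\tplan, \tz+\tplan+\tnb)$ and reindexing time to $0$ (Line~10) re-establishes the inductive hypothesis for iteration $n+1$. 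If instead \opt is infeasible, or the predicted velocity leaves $\Zvel_0$ (Lines~6 and~9), then the contingency braking maneuver of $p_{n-1}$ — whose safety was verified in the previous iteration over the tail interval — is applied starting at $\tz+\tplan$; by Lemma~\ref{lem:braking} (with $\tf = \tb$) this brings the vehicle to a complete stop by $\tf$, and by the footprint containment of Lemma~\ref{lem:footprint} the vehicle remains collision-free while still moving. Once stopped, the vehicle is not-at-fault by Definition~\ref{defn:notatfault} and Remark~\ref{rem:notatfault} for all subsequent time. In every branch the vehicle is not-at-fault on $[\tz+\tplan, \tz+\tplan+\tf]$, closing the induction.

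\textbf{Main obstacle.} The delicate point is the seam between successive iterations: one must check that the interval over which iteration $n$ relies on iteration $n-1$'s verified braking maneuver genuinely overlaps the interval on which that maneuver was certified, and that the time-reset bookkeeping ($t_2 - t_1 = t_1 - t_0 = \tnb$ in Figure~\ref{fig:timeline}, together with $\tplan \le \tnb$) leaves no uncovered gap in the timeline. This is exactly where Assumption~\ref{assum:tplan} (the state at $\tz+\tplan$ is known at $\tz$), Assumption~\ref{ass: sense horizon} (any obstacle that can collide during $[\tz+\tplan,\tz+\tplan+\tf]$ is sensed in time), and the partition of $\tf$ into $\tnb$ plus a braking tail are all used; the rest of the argument is a straightforward chaining of the per-iteration guarantees already proved.
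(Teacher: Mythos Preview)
Your proposal is correct and takes essentially the same approach as the paper. In fact, the paper does not give a detailed proof of this theorem at all: it simply states that the result follows ``by iteratively applying Definition~\ref{defn:notatfault}, Lemmas~\ref{lem:braking} and~\ref{lem:footprint}, Assumption~\ref{ass: obs in T} and \eqref{eq: vartheta local},'' and your inductive argument over planning iterations is exactly the intended unpacking of that sentence, invoking the same ingredients in the same roles.
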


\begin{algorithm}[t]
    \caption{\methodname{} Online Planning}
    \label{alg:methodname}
    \begin{algorithmic}[1]
        \REQUIRE $p_0\in\P$ and $z_0 = [(\zpos_0)^\top,(\zvel_0)^\top]^\top\in\R^3\times\Zvel_0$
        \STATE \textbf{Initialize:} $p^*=p_0$, $t=0$
        \STATE \textbf{Loop:} // {\it Line 3 executes at the same time as Line 4-8}
            \STATE \quad \textbf{Execute} $p^*$ during $[0, \tnb)$
            \STATE \quad  $\{\vartheta^\text{loc}(j,i,\zpos_0)\}_{(j,i)\in\J\times\I}\gets\texttt{SenseObstacles}()$
            \STATE \quad \textbf{Try} $p^*\gets\texttt{OnlineOpt}(z_0,\{\vartheta^\text{loc}(j,i,\zpos_0)\}_{(j,i)\in\J\times\I})$ \\ \quad // {\it within $\tplan$ seconds}
            \STATE \quad  \textbf{Catch} execute $p^*$ during $[\tnb,\tf]$, then \textbf{break}
            \STATE \quad  $(\zpos_0,\zvel_0)\gets\texttt{StatePrediction}(z_0,p^*,\tnb)$
            \STATE \quad $z_0\gets [(\zpos_0)^\top,(\zvel_0)^\top]^\top$
            \STATE \quad \textbf{If} ($\zvel_0\notin\Zvel_0$), execute $p^*$ during $[\tnb,\tf]$ and \textbf{break}
            \STATE \quad  \textbf{Reset} $t$ to 0
        \STATE \textbf{End}
    \end{algorithmic}
\end{algorithm}




\section{Extensions} \label{sec:implementation}

This section describes how to extend various components of \methodname{}.
This section begins by describing how to apply CORA to compute tight, conservative approximations of the FRS.
Next, it illustrates how to verify the satisfaction of Assumption \ref{ass: small slip}.
The section concludes by describing how to apply \methodname{} to AWD and RWD vehicles.

\subsection{Subdivision of Initial Set and Families of Trajectories}
\label{subsec: bin idea}

In practice, CORA may generate overly conservative representations for the FRS if the initial condition set is large. 
To address this challenge, one can instead partition $\Z_0$ and $\P$ and compute a FRS beginning from each element in this partition. 
Note one could then still apply \methodname{} as in Algorithm \ref{alg:methodname}.
However in Line 5 must solve multiple optimizations of the form \opt in parallel.
Each of these optimizations optimizes over a unique partition element that contains initial condition $z_0$, then $p^*$ is set to be the feasible control parameter that achieves the minimum cost function value among these optimizations.
Similarly note if one had multiple classes of  desired trajectories (e.g. lane change, longitudinal speed changes, etc.) that were each parameterized in distinct ways, then one could extend \methodname{} just as in the instance of having a partition of the initial condition set. 
In this way one could apply \methodname{} to optimize over multiple families of desired trajectories to generate not-at-fault behavior.
Note, that the planning horizon $\tf$ is constant within each element of the partition, but can vary between different elements in the partition.

\subsection{Satisfaction of Assumption \ref{ass: small slip}}
\label{sec: satisfaction of linear regime}

Throughout our analysis thus far, we assume that the slip ratios and slip angles stay within the linear regime as described in Assumption \ref{ass: small slip}. 
This subsection describes how to ensure that Assumption \ref{ass: small slip} is satisfied by performing an offline verification on the computed reachable sets. 


Recall that in an FWD vehicle model, $F_\text{xr}(t) = 0$ for all $t$ as in Remark \ref{rem: FWD Fxr}.
By plugging \eqref{eq: linear long tire force} in \eqref{eq: Fxf}, one can derive:
\begin{equation}
\label{eq: lambda_f computation}
    \begin{split}
        \lambda_\text{f}(t) = & \frac{l}{gl_\text{r}\bar\mu}\big(  - K_u u(t) +  K_u \udes(t,p) +  \\
         & + \dudes(t,p) - v(t)r(t) + \tau_u(t,p)\big).
    \end{split}
\end{equation}
Similarly by plugging \eqref{eq: linear lat tire force} in \eqref{eq: Fyf} one can derive:
\begin{equation}
\label{eq: alpha_f computation}
    \begin{split}
       \alpha_\text{f}(t) = &  -\frac{I_\text{zz} K_r}{l_\text{f}\bar c_{\alpha \text{f}}}\left( r(t) - \rdes(t,p) \right) + \\
        &  -\frac{I_\text{zz} K_h}{l_\text{f}\bar c_{\alpha \text{f}}}\left( h(t) - \hdes(t,p)\right)+\\
        &  +\frac{I_\text{zz}}{l_\text{f}\bar c_{\alpha \text{f}}} \drdes(t,p) + \frac{l_\text{r}}{l_\text{f}\bar c_{\alpha \text{f}}} F_\text{yr}(t) + \frac{I_\text{zz}}{l_\text{f}\bar c_{\alpha \text{f}}} \tau_r(t,p).
    \end{split}
\end{equation}
If the slip ratio and slip angle computed in \eqref{eq: lambda_f computation} and \eqref{eq: alpha_f computation} satisfy Assumption \ref{ass: small slip}, they achieve the expected tire forces as introduced in Section \ref{subsec: controller design}.

By Definition \ref{def: zonotope} any $\RR_j = \zonocg{c_{\RR_j}}{G_{\RR_j}}$ that is computed by CORA under the hybrid vehicle dynamics model from a partition element in Section \ref{subsec: bin idea}, can be bounded by a multi-dimensional box $\Int(c_{\RR_j}-|G_{\RR_j}|\cdot\mathbf{1}, c_{\RR_j}+|G_{\RR_j}|\cdot\mathbf{1})$ where $\mathbf{1}$ is a column vector of ones.
This multi-dimensional box gives interval ranges of all elements in $\zaug$ during $T_j$, which allows us to conservatively estimate  $\{|\alpha_\text{r}(t)|\}_{t\in T_j}$, $\{\F_\text{yr}(t)\}_{t\in T_j}$ and $\{|\lambda_\text{f}(t)|\}_{t\in T_j}$ via \eqref{eq: alpha_r def}, \eqref{eq: linear lat tire force} and \eqref{eq: lambda_f computation} respectively using Interval Arithmetic \cite{hickey2001interval}.
The approximation of $\{\F_\text{yr}(t)\}_{t\in T_j}$ makes it possible to over-approximate $\{|\alpha_\text{f}(t)|\}_{t\in T_j}$ via \eqref{eq: alpha_f computation}.

Note in \eqref{eq: lambda_f computation} and \eqref{eq: alpha_f computation} integral terms are embedded in $\tau_u(t,p)$ and $\tau_r(t,p)$ as described in \eqref{eq: tau_u def} and \eqref{eq: tau_r def}.
Because it is nontrivial to perform Interval Arithmetic over integrals, we extend $\zaug$ to $\zaugp$ by appending three more auxiliary states $\varepsilon_u(t) := \int_{t_0}^t \|u(s)-\udes(s,p)\|^2ds$, $\varepsilon_r(t) := \int_{t_0}^t \|r(s)-\rdes(s,p)\|^2ds$ and $\varepsilon_h(t) := \int_{t_0}^t \|h(s)-\hdes(s,p)\|^2ds$. 
Notice
\begin{equation}
    \begin{bmatrix}\dot\varepsilon_u(t) \\ \dot\varepsilon_r(t) \\ \dot\varepsilon_h(t)\end{bmatrix} = \begin{bmatrix}
        \|u(t)-\udes(t,p)\|^2 \\ \|r(t)-\rdes(t,p)\|^2 \\  \|h(t)-\hdes(t,p)\|^2
    \end{bmatrix},
\end{equation}
then we can compute a higher-dimensional FRS of $\zaugp$ during $[0,\tf]$ through the same process as described in Section \ref{sec: rtd}.
This higher-dimensional FRS makes over-approximations of $\{\varepsilon_u(t)\}_{t\in\T_j}$, $\{\varepsilon_r(t)\}_{t\in\T_j}$ and $\{\varepsilon_h(t)\}_{t\in\T_j}$ available for computation in \eqref{eq: lambda_f computation} and \eqref{eq: alpha_f computation}.

If the supremum of $\{|\lambda_\text{f}(t)|\}_{t\in T_j}$ exceeds $\lambdac$ or any supremum of $\{|\alpha_\text{f}(t)|\}_{t\in T_j}$ and $\{|\alpha_\text{r}(t)|\}_{t\in T_j}$ exceeds $\alphac$, then the corresponding partition section of $\Z_0\times\P$ may result in a system trajectory that violates Assumption \ref{ass: small slip}.
Therefore to ensure not-at-fault, we only run optimization over partition elements whose FRS outer-approximations satisfy Assumption \ref{ass: small slip}.
Finally we emphasize that such verification of Assumption \ref{ass: small slip} over each partition element that is described in Section \ref{subsec: bin idea} can be done offline.

\subsection{Generalization to All-Wheel-Drive and Rear-Wheel-Drive}
\label{sec: AWD}

This subsection describes how \methodname{} can be extended to AWD and RWD vehicles. 
AWD vehicles share the same dynamics as \eqref{eq:highspeed_noise} in Section \ref{sec: dynamics} with one exception.
In an AWD vehicle, only the lateral rear tire force is estimated and all the other three tire forces are controlled by using wheel speed and steering angle.
In particular, computations related to the lateral tire forces as \eqref{eq: Fyf} and \eqref{eq: alpha_f computation} are identical to the FWD case .
However, both the front and rear tires contribute nonzero longitudinal forces, and 
they can be specified by solving the following system of linear equations:
\begin{align}
    l_\text{f}F_\text{xf}(t) &= l_\text{r}F_\text{xr}(t) \nonumber \\
    F_\text{xf}(t) + F_\text{xr}(t) &=  -m K_u u(t) + m K_u \udes(t,p) + 
\label{eq: solve Fx in AWD}\\
    &  + m\dudes(t,p) - mv(t)r(t) + m\tau_u(t,p) \nonumber
\end{align}
Longitudinal tire forces $F_\text{xf}(t)$ and $F_\text{xr}(t)$ computed from \eqref{eq: solve Fx in AWD} can then be used to compute wheel speed $\omega_\text{f}(t) = \omega_\text{r}(t)$ as in \eqref{eq: compute w_cmd}.
In this formulation, \eqref{eq: lambda_f computation} also needs to be modified to
\begin{equation}
\begin{split}
    \lambda_\text{f}(t) = \lambda_\text{r}(t) = &  \frac{1}{g\bar\mu}\big(  - K_u u(t) +  K_u \udes(t,p) +  \\
         & + \dudes(t,p) - v(t)r(t) + \tau_u(t,p)\big)
\end{split}
\end{equation}
to verify Assumption \ref{ass: small slip} along the longitudinal direction.
Compared to FWD, in RWD the longitudinal front tire force is $0$ and the longitudinal rear tire force is controlled.
Thus one can generalize to RWD by switching all related computations on $F_\text{xf}(t)$ and $F_\text{xr}(t)$ from the FWD case.



\section{Experiments}
\label{sec:experiment}
This section describes the implementation and evaluation of \methodname{} in simulation using a FWD, full-size vehicle model and on hardware using an AWD, $\frac{1}{10}$th size race car model.
Readers can find a link to the software implementation\footnote{\urlx{https://github.com/roahmlab/REFINE}} and videos\footnote{\urlx{https://drive.google.com/drive/folders/1bXl07gTnaA3rJBl7J05SL0tsfIJEDfKy?usp=sharing}, \urlx{https://drive.google.com/drive/folders/1FvGHuqIRQpDS5xWRgB30h7exmGTjRyel?usp=sharing}} online.

\subsection{Desired Trajectories}
\label{subsec: desired traj}


As detailed in Section \ref{subsec: controller design}, the proposed controller relies on desired trajectories of vehicle longitudinal speed and yaw rate satisfying Definition \ref{def:traj_param}.
To test the performance of the proposed controller and planning framework, we selected $3$ families of desired trajectories that are observed during daily driving.
Each desired trajectory is the concatenation of a driving maneuver and a contingency braking maneuver. 
The  driving maneuver is either a \emph{speed change}, \emph{direction change}, or \emph{lane change} (i.e. each option corresponds to one of the $3$ families of desired trajectories).
Moreover, each desired trajectory is parameterized by $p=[p_u,p_y]^\top\in\P\subset\R^2$ where $p_u$ denotes desired longitudinal speed, and $p_y$ decides desired lateral displacement. 

Assuming that the ego vehicle has initial longitudinal speed $u_0\in\R$ at time $0$, the desired trajectory for longitudinal speed is the same for each of the $3$ families of desired trajectories:
\begin{equation}
\label{eq: u_des}
    \udes(t,p) = \begin{cases}
        u_0+\frac{p_u - u_0}{\tnb}t, \hspace{0.2cm} \text{ if } 0<t<\tnb\\
        \ubrk(t,p), \hspace{0.74cm} \text{ if } t\geq\tnb
    \end{cases}
\end{equation}
where 
\begin{equation}
\label{eq: ubrk}
    \ubrk(t,p) = \begin{cases}
        p_u + (t-\tnb)\amax, \\
        \hspace{0.5cm}\text{if } p_u>\uc \text{ and } \tnb\leq t<\tnb+\frac{\uc-p_u}{\amax}\\
        0,\hspace{0.15cm}\text{if }p_u>\uc \text{ and }t\geq\tnb+\frac{\uc-p_u}{\amax}\\
        0,\hspace{0.15cm}\text{if }p_u\leq\uc \text{ and }t\geq\tnb
    \end{cases}
\end{equation}
with some deceleration $\amax<0$.
Note by Definition \ref{def:traj_param} $\tstop$ can be specified as
\begin{equation}
    \tstop = \begin{cases} \tnb+\frac{\uc-p_u}{\amax}, ~\text{ if }p_u>\uc\\
    \tnb, \hspace{1.55cm}\text{ if }p_u\leq\uc.
    \end{cases}
\end{equation}
The desired longitudinal speed approaches $p_u$ linearly from $u_0$ before braking begins at time $\tnb$, then decreases to $\uc$ with deceleration $\amax$ and immediately drops down to 0 at time $\tstop$.
Moreover, one can verify that the chosen $\udes(t,p)$ in \eqref{eq: u_des} satisfies the assumptions on desired longitudinal speed in Lemma \ref{lem:braking}.



\begin{figure*}[!htb]
    \centering
    \includegraphics[trim={0cm, 6cm, 0cm, 0.4cm},clip,width=\textwidth]{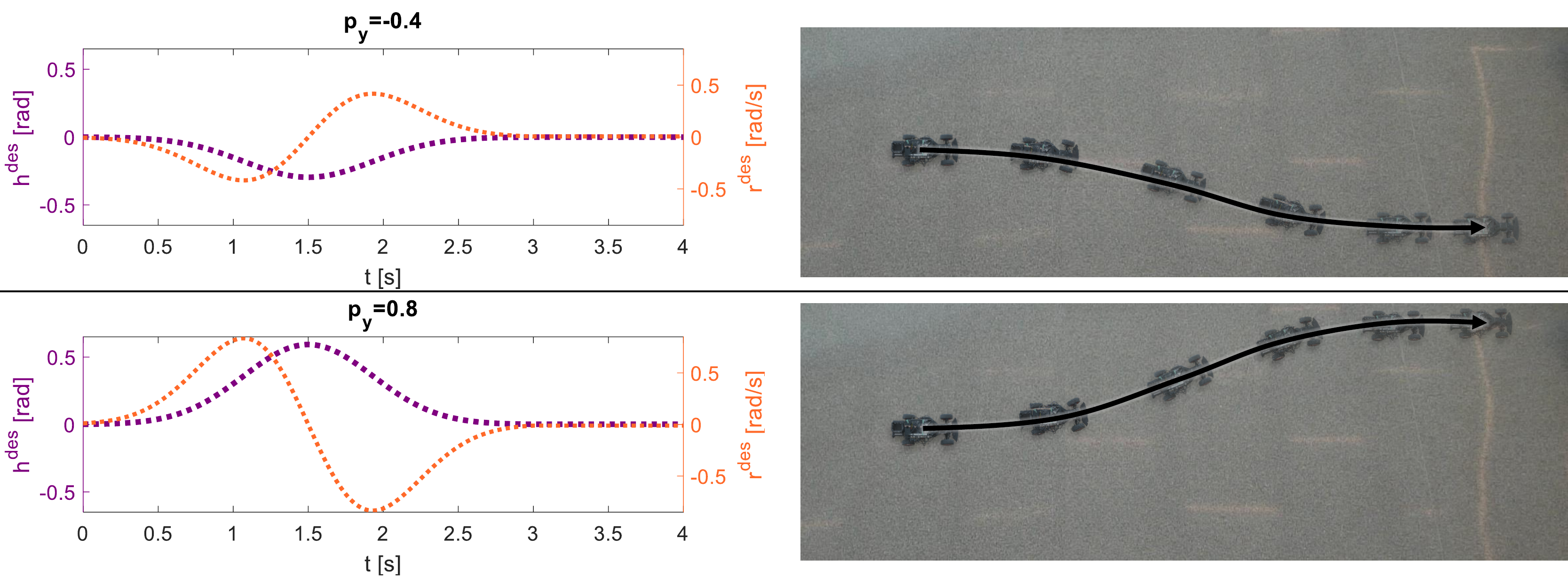}
    \caption{Examples of $\hdes(t,p)$ and $\rdes(t,p)$ to achieve lane changes with $u_0=1.0$ [m/s], $\tnb=3.0$ [s], $\hdes_1=\frac{20}{27}$, $\hdes_2=\frac{27}{10}$, and $p_y$ taking values of -0.4 and 0.8 from top to bottom.
    Note $p_u$ is set as $u_0$ to maintain the vehicle longitudinal speed before $\tnb$ among both examples.}
    \label{fig: des lan change}
    \vspace*{-0.5cm}
\end{figure*}

Assuming the ego vehicle has initial heading $h_0\in[-\pi,\pi]$ at time $0$, the desired heading trajectory varies among the different trajectory families.
Specifically, for the trajectory family associated with speed change:
\begin{equation}
    \hdes(t,p) = h_0, ~ \forall t\geq0.
\end{equation}
Desired heading trajectory for the trajectory family associated with direction change:
\begin{equation}
    \hdes(t,p) = \begin{cases}
        h_0+\frac{p_yt}{2}-\frac{p_y\tnb}{4\pi}\sin\left(\frac{2\pi t}{\tnb} \right), \text{ if } 0\leq t<\tnb\\
        h_0+\frac{p_y\tnb}{2}, \hspace{2.45cm}\text{ if } t\geq\tnb
    \end{cases}
\end{equation}
and for the trajectory family associated with lane change:
\begin{equation}
    \hdes(t,p) = \begin{cases}
        h_0 + \hdes_1 p_y \cdot \mathrm{e}^{-\hdes_2(t-0.5\tnb)^2},\\ \hspace{3.8cm}\text{ if } 0\leq t<\tnb\\
        h_0, \hspace{2.9cm}~~~ \text{ if } t\geq\tnb
    \end{cases}
\end{equation}
where $\mathrm{e}$ is Euler's number, and $\hdes_1$ and $\hdes_2$ are user-specified auxiliary constants that adjust the desired heading amplitude.
Illustrations of speed change and direction change maneuvers can be found in the software repository\footnote{\urlx{https://github.com/roahmlab/REFINE/blob/main/Rover_Robot_Implementation/README.md\#1-desired-trajectories}}.
As shown in Figure \ref{fig: des lan change}, $\hdes(t,p)$ remains constant for all $t\geq\tnb$ among all families of desired trajectories.
By Definition \ref{def:traj_param}, desired trajectory of yaw rate is set as $\rdes(t,p)=\frac{d}{dt}\hdes(t,p)$ among all trajectory families.

In this work, $\tnb$ for the speed change and direction change trajectory families are set equal to one another.
$\tnb$ for the lane change trajectory family is twice what it is for the direction change and speed change trajectory families.
This is because a lane change can be treated as a concatenation of two direction changes. 
Because we do not know which desired trajectory ensures not-at-fault {\textit a priori}, during each planning iteration, to guarantee real-time performance, $\tplan$ should be no greater than the smallest duration of a driving maneuver, i.e. speed change or direction change.


\subsection{Simulation on a FWD Model}
\label{subsec: simulation}

This subsection describes the evaluation of \methodname{} in simulation. 
In particular, this section describes the simulation environment, how we implement \methodname{}, the methods we compare it to, and the results of the evaluation. 

\subsubsection{Simulation Environment}

We evaluate the performance on $1000$ randomly generated $3$-lane highway scenarios in which the same full-size, FWD vehicle as the ego vehicle is expected to autonomously navigate through dynamic traffic for $1$[km] from a fixed initial condition.
All lanes of all highway scenario share the same lane width as $3.7$[m].
Each highway scenario contains up to $24$ moving vehicles and up to $5$ static vehicles that start from random locations and are all treated as obstacles to the ego vehicle.
Moreover, each moving obstacle maintains its randomly generated highway lane and initial speed up to $25$[m/s] for all time.
Because each highway scenario is randomly generated, there is no guarantee that the ego vehicle has a path to navigate itself from the start to the goal.
Such cases allow us to verify if the tested methods can still keep the ego vehicle safe even in infeasible scenarios. 
Parameters of the ego vehicle can be found in the software implementation readme\footnote{\urlx{https://github.com/roahmlab/REFINE/blob/main/Full_Size_Vehicle_Simulation/README.md\#vehicle-and-control-parameters}}.

During each planning iteration, all evaluated methods use the same high level planner.
This high level planner generates waypoints by first choosing the lane on which the nearest obstacle ahead has the largest distance from the ego vehicle.
Subsequently it picks a waypoint that is ahead of the ego vehicle and stays along the center line of the chosen lane.
The cost function in \opt or in any of the evaluated optimization-based motion planning algorithms is set to be the Euclidean distance between the waypoint generated by the high level planner and the predicted vehicle location based on initial state $z_0$ and decision variable $p$.
All simulations are implemented and evaluated in MATLAB R2022a on a laptop with an Intel i7-9750H processor and 16GB of RAM.

\subsubsection{\methodname{} Simulation Implementation}
\methodname{} invokes C++ for the online optimization using IPOPT \cite{wachter2006implementation}.
Parameters of \methodname{}'s controller are chosen to satisfy the conditions in Lemma \ref{lem:braking} and can be found in the software implementation readme\footnote{\urlx{https://github.com/roahmlab/REFINE/blob/main/Full_Size_Vehicle_Simulation/README.md\#vehicle-and-control-parameters}}.
\methodname{} tracks families of desired trajectories as described in Section \ref{subsec: desired traj} with $\P= \{(p_u,p_y)\in[5,30]\times[-0.8,0.8]\mid p_u=u_0 \text{ if } p_y\neq0\}$,  $\amax=-5.0[\text{m}/\text{s}^2]$, $\hdes_1=\frac{6\sqrt{2\mathrm{e}}}{11}$ and $\hdes_2 = \frac{121}{144}$.
The duration $\tnb$ of driving maneuvers for each trajectory family is 3[s] for speed change, 3[s] for direction change and 6[s] for lane change, therefore $\tplan$ is set to be 3[s].
As discussed in Section \ref{subsec: bin idea}, during offline computation, we evenly partition the first and second dimensions of $\P$ into intervals of lengths $0.5$ and $0.4$, respectively.
For each partition element, $\tf$ is assigned to be the maximum possible value of $\tb$ as computed in \eqref{eq: tb computation} in which $\tfstop$ is by observation no greater than 0.1[s].
An outer-approximation of the FRS is computed for every partition element of $\P$ using CORA with $\Delta t$ as $0.015$[s], $0.010$[s], $0.005$[s] and $0.001$[s].
Note, that we choose these different values of $\Delta t$ to highlight how this choice affects the performance of \methodname{}. 

\subsubsection{Other Implemented Methods}
We compare \methodname{} against several state of the art trajectory planning methods: a baseline zonotope reachable set method \cite{manzinger2020using}, a Sum-of-Squares-based RTD (SOS-RTD) method \cite{kousik2020bridging}, and an NMPC method using GPOPS-II \cite{patterson2014gpops}. 



The first trajectory planning method that we implement is a baseline zonotope based reachability method that selects a finite number of possible trajectories rather than a continuum of possible trajectories as \methodname{} does. 
This baseline method is similar to the classic funnel library approach to motion planning \cite{majumdar2017funnel} in that it chooses a finite number of possible trajectories to track. 
The baseline method computes zonotope reachable sets using CORA with $\Delta t=0.010$[s] over a sparse discrete control parameter space $\Psparse:=\{(p_u,p_y)\in\{5,5.1,5.2,\ldots,30\}\times\{0,0.4\} \mid p_u = u_0 \text{ if }p_y\neq 0\}$ and a dense discrete control parameter space $\Pdense:=\{(p_u,p_y)\in\{5,5.1,5.2,\ldots,30\}\times\{0,0.04,0.08,\ldots,0.8\} \mid p_u = u_0 \text{ if }p_y\neq 0\}$.
We use $\Psparse$ and $\Pdense$ to illustrate the challenges associated with applying this baseline method in terms of computation time, memory consumption, and the ability to robustly travel through complex simulation environments. 
During each planning iteration, the baseline method searches through the discrete control parameter space until a feasible solution is found such that the corresponding zonotope reachable sets have no intersection with any obstacles over the planning horizon.
The search procedure over this discrete control space is biased to select the same trajectory parameter that worked in the prior planning iteration or to search first from trajectory parameter that are close to one that worked in the previous planning iteration.

The SOS-RTD plans a controller that also tracks families of trajectories to achieve speed change, direction change and lane change maneuvers with braking maneuvers as described in Section \ref{subsec: desired traj}.
SOS-RTD offline approximates the FRS by solving a series of polynomial optimizations using Sum-of-Squares so that the FRS can be over-approximated as a union of superlevel sets of polynomials over successive time intervals of duration 0.1[s] \cite{kousik2020bridging}.
Computed polynomial FRS are further expanded to account for footprints of other vehicles offline in order to avoid buffering each obstacle with discrete points online \cite{vaskov2019towards}. 
During online optimization, SOS-RTD plans every $3$[s] and uses the same cost function as \methodname{} does, but checks collision against obstacles by enforcing that no obstacle has its center stay inside the FRS approximation during any time interval.

The NMPC method does not perform offline reachability analysis. 
Instead, it directly computes the control inputs that are applicable for $\tnb$ seconds by solving an optimal control problem. 
This optimal control problem is solved using GPOPS-II in a receding horizon fashion.
The NMPC method conservatively ensures collision-free trajectories by covering the footprints of the ego vehicle and all obstacles with two overlapping balls, and requiring that no ball of the ego vehicle intersects with any ball of any obstacle.
Notice during each online planning iteration, the NMPC method does not need pre-defined desired trajectories for solving control inputs.
Moreover, it does not require the planned control inputs to stop the vehicle by the end of planned horizon as the other three methods do.

\begin{table*}[!htb]
    \centering
    \begin{tabular}{|c||c|c|c|c|c|c|}
    \hline 
    \multirow{2}{*}{Method} & \multirow{2}{*}{Safely Stop} & \multirow{2}{*}{Crash} & \multirow{2}{*}{Success} & \multirow{2}{*}{Average Travel Speed} & Solving Time of Online Planning & \multirow{2}{*}{Memory} \\ 
    & & & & & (Average, Maximum) &\\\hline
    Baseline (sparse, $\Delta t=0.010$) & 38\% & {\bf 0\%} & 62\% & 22.3572[m/s] &  (2.03[s], 4.15[s]) &  980 MB \\ \hline
    Baseline (dense, $\Delta t=0.010$) & 30\% & {\bf 0\%} & 70\% & 23.6327[m/s] &  (12.42[s], 27.74[s]) &  9.1 GB\\ \hline
    SOS-RTD & 36\% & {\bf 0\%} & 64\% & 24.8049[m/s] &  ({\bf 0.05[s]}, 1.58[s]) & 2.4 GB\\ \hline
    NMPC & 3\% & 29\% & 68\% & 27.3963[m/s] &  (40.89[s], 534.82[s]) & N/A \\ \hline
    \methodname{} ($\Delta t=0.015$) & 27\% & {\bf 0\%} & 73\% & 23.2452[m/s] &  (0.34[s], {\bf 0.95[s]}) &  {\bf 488 MB}\\ \hline
    \methodname{} ($\Delta t=0.010$) & 17\% & {\bf 0\%}& 83\% & 24.8311[m/s] &  (0.52[s], 1.57[s]) &  703 MB\\ \hline
    \methodname{} ($\Delta t=0.005$) & 16\% & {\bf 0\%} & {\bf 84}\% & 24.8761[m/s] &  (1.28[s], 4.35[s]) &  997 MB\\ \hline
    \methodname{} ($\Delta t=0.001$) & 16\% & {\bf 0\%} & {\bf 84}\% & 24.8953[m/s] &  (6.48[s], 10.78[s]) &  6.4 GB\\ \hline
    \end{tabular}
    \caption{Summary of performance of various tested techniques on the same $1000$ simulation environments.}
    \label{table: simulation result}
    \vspace*{-0.5cm}
\end{table*}





\subsubsection{Evaluation Criteria}
We evaluate each implemented trajectory planning method in several ways as summarized in Table \ref{table: simulation result}
First, we report the percentage of times that each planning method either came safely to a stop (in a not-at-fault manner), crashed, or successfully navigated through the scenario. 
Note a scenario is terminated when one of those three conditions is satisfied.
Second, we report the average travel speed during all scenarios.
Third, we report the average and maximum planning time over all scenarios.
Finally, we report on the size of the pre-computed reachable set.

\subsubsection{Results}
 
\methodname{} achieves the highest success rate among all evaluated methods and has no crashes. 
The success rate of \methodname{} converges to 84\% as the value of $\Delta t$ decreases because the FRS approximation becomes tighter with denser time discretization. 
However as the time discretization becomes finer, memory consumption grows larger because more zonotopes are used to over-approximate FRS.
Furthermore, due to the increasing number of zonotope reachable sets, the solving time also increases and begins to exceed the allotted planning time. 
According to our simulation, we see that $\Delta t=0.010$[s] results in high enough successful rate while maintaining a planning time no greater than $3$[s].

The baseline method with $\Psparse$ shares almost the same memory consumption as \methodname{} with $\Delta t=0.005$[s], but results in a much lower successful rate and smaller average travel speed.
When the baseline method runs over $\Pdense$, its success rate is increased, but still smaller than that of \methodname{}.
More troublingly, its memory consumption increases to $9.1$ GB.
Neither evaluated baseline is able to finish online planning within $3$[s].
Compared to \methodname, SOS-RTD completes online planning faster and can also guarantee vehicle safety with a similar average travel speed.
However SOS-RTD needs a memory of $2.4$ GB to store its polynomial reachable sets, and its success rate is only $64$\% because the polynomial reachable sets are more conservative than zonotope reachable sets.

When the NMPC method is utilized for motion planning, the ego vehicle achieves a similar success rate as SOS-RTD, but crashes occur $29$\% of the time.
Note the NMPC method achieves a higher average travel speed of the ego vehicle when compared to the other three methods.
More aggressive operation can allow the ego vehicle drive closer to obstacles, but can make subsequent obstacle avoidance difficult.
The NMPC method uses $40.8906$[s] on average to compute a solution, which makes real-time path planning untenable.

\begin{figure*}[!htb]
    \centering
    \begin{subfigure}[b]{0.95\textwidth}
         \centering
         \includegraphics[trim={0cm, 0cm, 0cm, 0cm},clip,width=\textwidth]{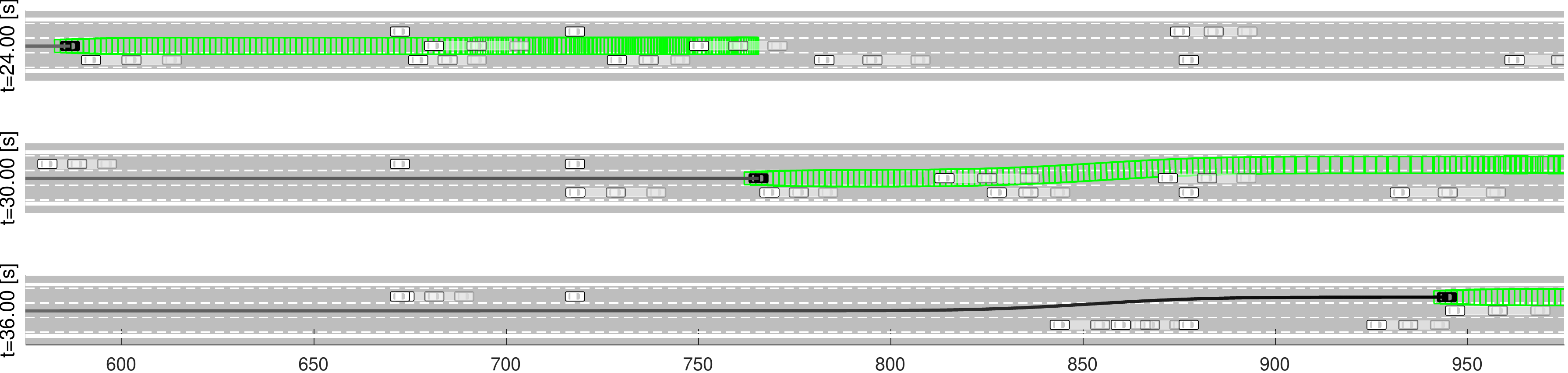}
         \caption{\methodname{} utilized.}
         \label{fig: simulation result 2-flzono}
     \end{subfigure}
    \begin{subfigure}[b]{0.95\textwidth}
         \centering
         \includegraphics[trim={0cm, 0cm, 0cm, -0.5cm},clip,width=\textwidth]{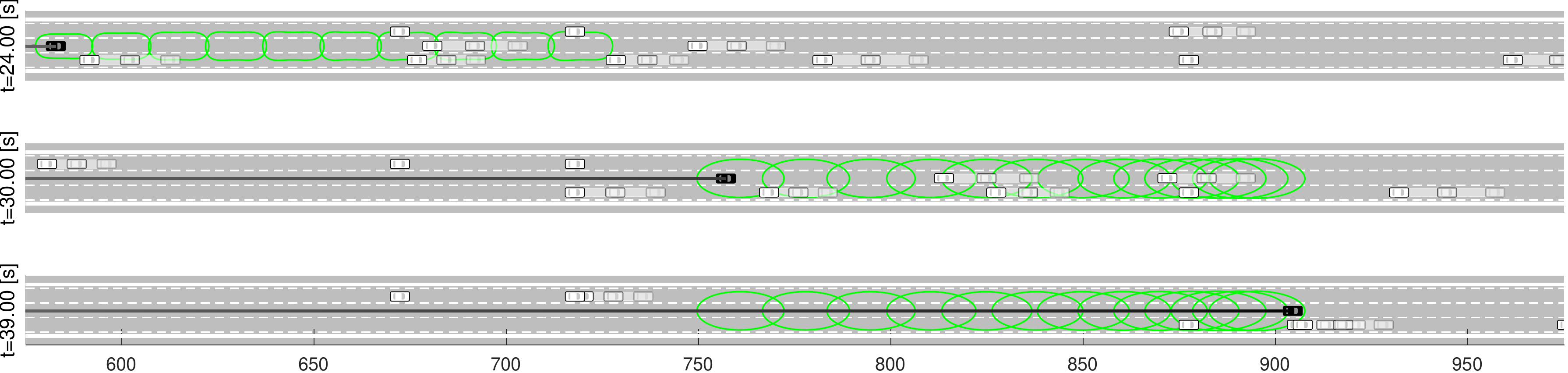}
         \caption{SOS-RTD utilized.}
         \label{fig: simulation result 2-sos}
     \end{subfigure}
    \begin{subfigure}[b]{0.95\textwidth}
         \centering
         \includegraphics[trim={0cm, 0cm, 0cm, -0.5cm},clip,width=\textwidth]{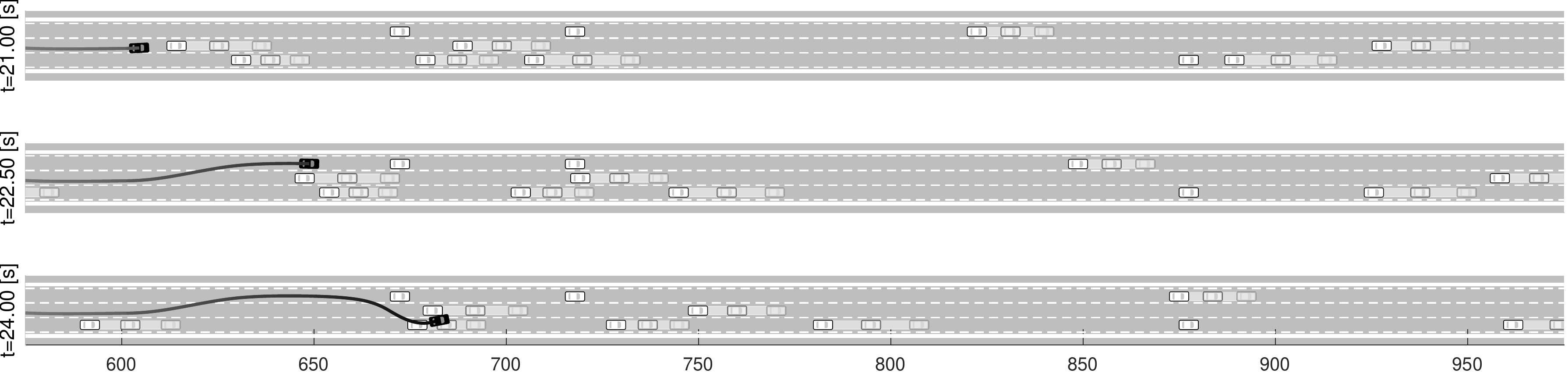}
         \caption{NMPC utilized.}
         \label{fig: simulation result 2-gpops}
     \end{subfigure}
    \caption{An illustration of the performance of \methodname{}, SOS-RTD, and NMPC on the same simulated scenario. 
    In this instance \methodname{} successfully navigates the ego vehicle through traffic (top three images), SOS-RTD stops the ego vehicle to avoid collision due to the conservatism of polynomial reachable sets (middle three images), and NMPC crashes the ego vehicle even though its online optimization claims that it has found a feasible solution (bottom three images).
    In each set of images, the ego vehicle and its trajectory are colored in black.
    Zonotope reachable sets for \methodname{} and polynomial reachable sets for SOS-RTD are colored in green. 
    Other vehicles are obstacles and are depicted in white.
    If an obstacle is moving, then it is plotted at 3 time instances $t$, $t+0.5$ and $t+1$ with increasing transparency.
    Static vehicles are only plotted at time $t$.}
    \label{fig: simulation result 2}
    \vspace*{-0.5cm}
\end{figure*}


Figure \ref{fig: simulation result 2} illustrates the performance of the three methods in the same scene at three different time instances. 
In Figure \ref{fig: simulation result 2-flzono}, because \methodname{} gives a tight approximation of the ego vehicle's FRS using zonotopes, the ego vehicle is able to first bypass static vehicles in the top lane from $t=24$[s] to $t=30$[s], then switch to the top lane and bypass vehicles in the middle lane from $t=30$[s] to $t=36$[s].
In Figure \ref{fig: simulation result 2-sos} SOS-RTD is used for planning.
In this case the ego vehicle bypasses the static vehicles in the top lane from $t=24$[s] to $t=30$[s]. 
However because online planing becomes infeasible due to the conservatism of polynomial reachable sets, the ego vehicle executes the braking maneuver to stop itself $t=30$[s] to $t=36$[s].
In Figure \ref{fig: simulation result 2-gpops} because NMPC is used for planning, the ego vehicle drives at a faster speed and arrives at $600$[m] before the other evaluated methods.
Because the NMPC method only enforces collision avoidance constraints at discrete time instances, the ego vehicle ends up with a crash at $t=24$[s] though NMPC claims to find a feasible solution for the planning iteration at $t=21$[s].


\subsection{Real World Experiments}
\methodname{} was also implemented in C++17 and tested in the real world using a $\frac{1}{10}$th All-Wheel-Drive car-like robot, Rover, based on a Traxxax RC platform. 
The Rover is equipped with a front-mounted Hokuyo UST-10LX 2D lidar that has a sensing range of 10[m] and a field of view of 270\degree.
The Rover is equipped with a VectorNav VN-100 IMU unit which publishes data at 800Hz.
Sensor drivers, state estimator, obstacle detection, and the proposed controller are run on an NVIDIA TX-1 on-board computer.
A standby laptop with Intel i7-9750H processor and 32GB of RAM is used for localization, mapping, and solving \opt in over multiple partitions of $\P$.
The rover and the standby laptop communicate over wifi using ROS \cite{ros}. 

The desired trajectories on the Rover are parameterized with $\P= \{(p_u,p_y)\in[0.05,2.05]\times[-1.396,1.396]\mid p_u=u_0 \text{ if } p_y\neq0\}$, $\amax=-1.5$[m/sec${}^2$], $\hdes_1=\frac{20}{27}$ and $\hdes_2=\frac{27}{10}$ as described in Section \ref{subsec: desired traj}.
The duration $\tnb$ of driving maneuvers for each trajectory family is set to $1.5$[s] for speed change, $1.5$[s] for direction change, and $3$[s] for lane change, thus planning time for real world experiments is set as $\tplan=1.5$[s].
The parameter space $\P$ is evenly partitioned along its first and second dimensions into small intervals of lengths 0.25 and 0.349, respectively.
For each partition element, $\tf$ is set equal to the maximum possible value of $\tb$ as computed in \eqref{eq: tb computation} in which $\tfstop$ is by observation no greater than 0.1[s].
The FRS of the Rover for every partition element of $\P$ is overapproximated using CORA with $\Delta t = 0.01$[s].
During online planning, a waypoint is selected in real time using Dijkstra's algorithm \cite{dijkstra1959note}, and the cost function of \opt is set in the same way as we do in simulation as described in Section \ref{subsec: simulation}.
The robot model, environment sensing, and state estimation play key roles in real world experiments.
In the rest of this subsection, we describe how to bound the modeling error in \eqref{eq:highspeed_noise} and summarize the real world experiments.
Details regarding Rover model parameters, the controller parameters, how the Rover performs localization, mapping, and obstacle detection and how we perform system identification of the tire models can be found in our software implementation readme\footnote{\urlx{https://github.com/roahmlab/REFINE/blob/main/Rover_Robot_Implementation/README.md}}.

\subsubsection{State Estimation and System Identification on Model Error in Vehicle Dynamics}
\label{subsubsec: sysid}

\begin{figure}[t]
    \centering
    \includegraphics[trim={1cm, 9.4cm, 2cm, 10.2cm},clip,width=\columnwidth]{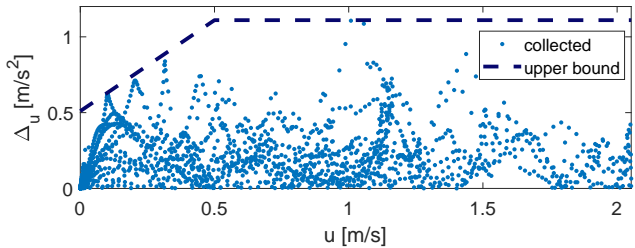}
    \caption{An illustration of the modeling error along the dynamics of $u$. 
    Collected $\Delta_u(t)$ is bounded by $M_u = 1.11$ for all time.
    Whenever $u(t)\leq\uc=0.5$, $\Delta_u(t)$ is bounded by $\bpro u(t) + \boff$ with  $\bpro = 1.2$ and $\boff = 0.51$.}
    \label{fig: Delta u}
    \vspace*{-0.5cm}
\end{figure}    

The modeling error in the dynamics \eqref{eq:highspeed_noise} arise from ignoring aerodynamic drag force and the inaccuracies of state estimation and the tire models.
We use the data collected to fit the tire models to identify the modeling errors $\Delta_u$, $\Delta_v$, and $\Delta_r$. 

  
We compute the model errors as the difference between the \emph{actual accelerations} collected by the IMU and the \emph{estimation of applied accelerations} computed via \eqref{eq:highspeed_perfect} in which tire forces are calculated via \eqref{eq: linear long tire force} and \eqref{eq: linear lat tire force}.
The estimation of applied accelerations is computed using the estimated system states via an Unscented Kalman Filter (UKF) \cite{unscented-kalman-filter}, which treats SLAM results, IMU readings, and encoding information of wheel and steering motors as observed outputs of the Rover model.
The robot dynamics that UKF uses to estimate the states is the error-free, high-speed dynamics \eqref{eq:highspeed_perfect} with linear tire models. 
Note the UKF state estimator is still applicable in the low-speed case except the estimation of $v$ and $r$ are ignored. 
To ensure $\Delta_u$, $\Delta_v$ and $\Delta_r$ are square integrable, we set $\Delta_u(t) = \Delta_v(t) = \Delta_r(t) = 0$ for all $t\geq\tb$ where $\tb$ is computed in Lemma \ref{lem:braking}.
As shown in Figure \ref{fig: Delta u} bounding parameters $M_u$, $M_v$, and $M_r$ are selected to be the maximum value of $|\Delta_u(t)|$, $|\Delta_v(t)|$, and $|\Delta_r(t)|$ respectively over all time, and $\bpro$ and $\boff$ are generated by bounding $|\Delta_u(t)|$ from above when $u(t)\leq\uc$. 

\subsubsection{Demonstration}

The Rover was tested indoors under the proposed controller and planning framework in 6 small trials and 1 loop trial\footnote{\urlx{https://drive.google.com/drive/folders/1FvGHuqIRQpDS5xWRgB30h7exmGTjRyel?usp=sharing}}. 
In every small trail, up to $11$ identical $0.3\times0.3\times0.3$[m]${}^3$ cardboard cubes were placed in the scene before the Rover began to navigate itself.
The Rover was not given prior knowledge of the obstacles for each trial.
Figure \ref{fig: hardware demo} illustrates the scene in the 6th small trial and illustrates \methodname{}'s performance.
The zonotope reachable sets over-approximate the trajectory of the Rover and never intersect with obstacles. 

In the loop trial, the Rover was required to perform 3 loops, and each loop is about 100[m] in length.
In the first loop of the loop trial, no cardboard cube was placed in the loop, while in the last two loops the cardboard cubes were randomly thrown at least 5[m] ahead of the running Rover to test its maneuverability and safety.
During the loop trial, the Rover occasionally stoped because a randomly thrown cardboard cube might be close to a waypoint or the end of an executing maneuver.
In such cases, because the Rover was able to eventually locate obstacles more accurately when it was stopped, the Rover began a new planning iteration immediately after stopping and passed the cube when a feasible plan with safety guaranteed was found.

For all 7 real-world testing trials, the Rover either safely finishes the given task, or it stops itself before running into an obstacle if no clear path is found.
The Rover is able to finish all computation of a planning iteration within 0.4021[s] on average and 0.6545[s] in maximum, which are both smaller than $\tplan=1.5$[s], thus real-time performance is achieved.

\begin{figure*}[!htb]
    \centering
    \begin{subfigure}[b]{0.49\textwidth}
         \centering
         \includegraphics[trim={0cm, 0cm, 0cm, 0cm},clip,width=\textwidth]{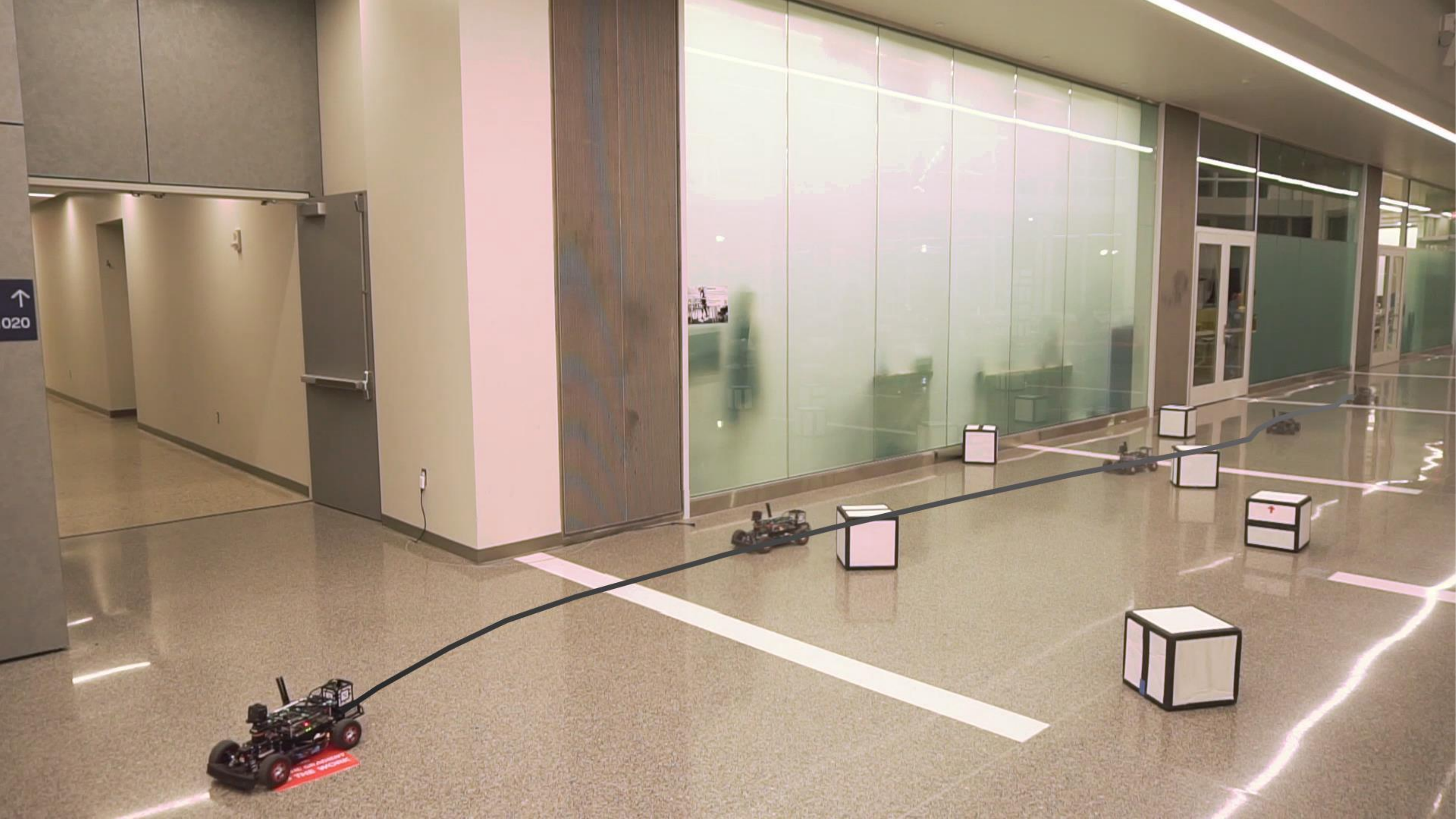}
         \caption{}
     \end{subfigure}
     \hfill
     \begin{subfigure}[b]{0.49\textwidth}
         \centering
         \includegraphics[trim={0.cm, 0cm, 0cm, 0cm},clip,width=\textwidth]{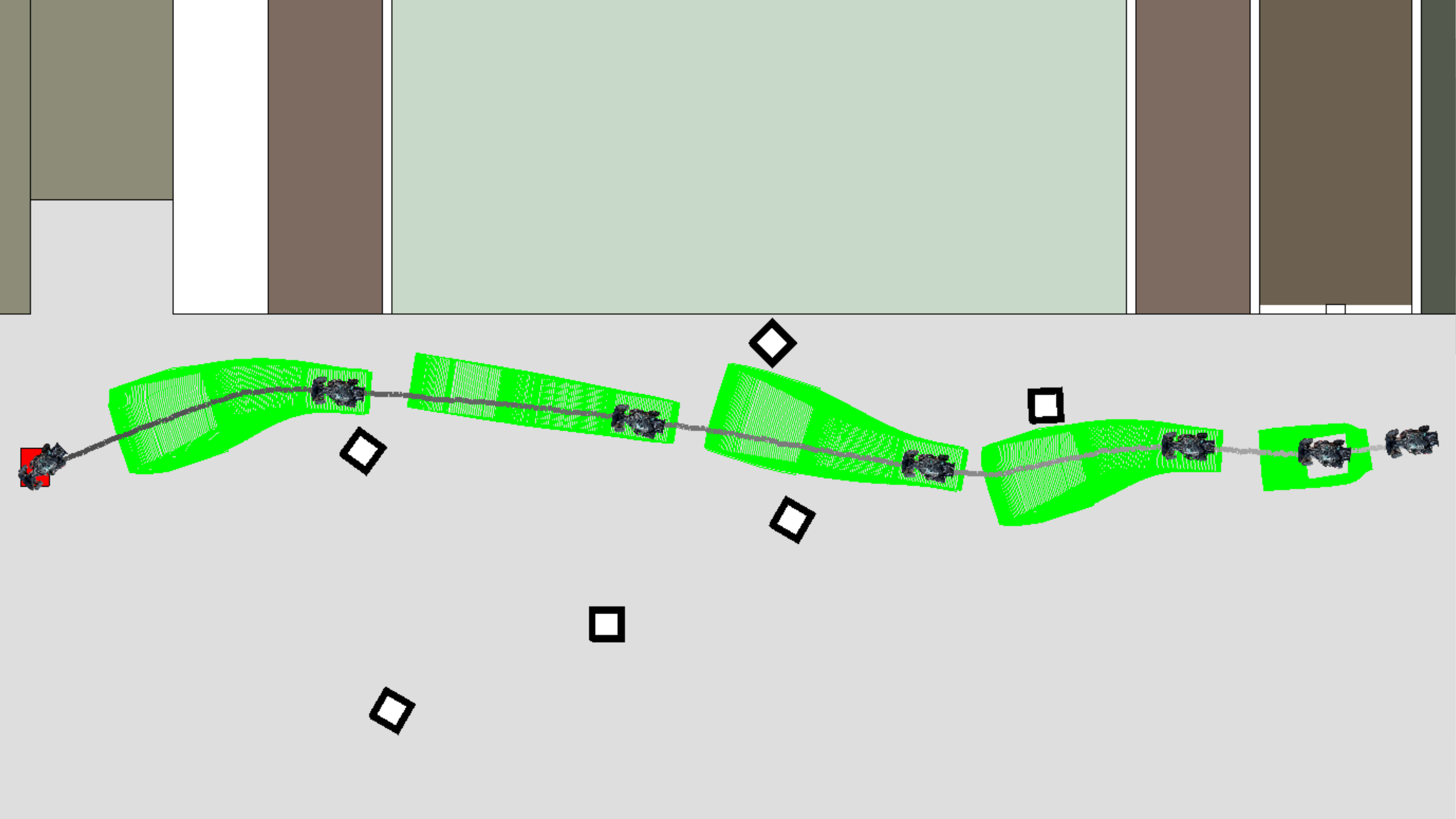}
         \caption{}
     \end{subfigure}
    \caption{An illustration of the performance of \methodname{} during the 6th real-world trial. 
    The rover was able to navigate itself to the goal in red through randomly thrown white cardboard cubes as shown in (a). Online planning using zonotope reachable sets is illustrated in (b) in which trajectory of the Rover is shown from gray to black along time, goal is shown in red, and the zonotope reachable sets at different planning iterations are colored in green.}
    \label{fig: hardware demo}
    \vspace*{-0.5cm}
\end{figure*}

\section{Conclusion}
\label{sec: conclusion}

This work presents a controller-oriented trajectory design framework using zonotope reachable sets.
A robust controller is designed to partially linearize the full-order vehicle dynamics with modeling error by performing feedback linearization on a subset of vehicle states.
Zonotope-based reachability analysis is performed on the closed-loop vehicle dynamics for FRS computation, and achieves less conservative FRS approximation than that of the traditional reachability-based approaches.
Tests on a full-size vehicle model in simulation and a 1/10th race car robot in real hardware experiments show that the proposed method is able to safely navigate the vehicle through random environments in real time and outperforms all evaluated state of the art methods.




 \printbibliography
 
 \appendices
\section{Proof of Lemma \ref{lem:braking}}
\label{app:braking}

\begin{proof}
This proof defines a Lyapunov function candidate and uses it to analyze the tracking error of the ego vehicle's longitudinal speed before time $\tstop$.
Then it describes how $u$ evolves after time $\tstop$ in different scenarios depending on the value of $u(\tstop)$.
Finally it describes how to set the time $\tb$ to guarantee $u(t)=0$ for all $t\geq\tb$.
For convenience, let $\usmall := \frac{M_u}{\kappa_{1,u}M_u+\phi_{1,u}}$,
then by assumption of the theorem $\usmall\in(0.15,\uc]$.
This proof suppresses the dependence on $p$ in $\udes(t,p)$, $\tau_u(t,p)$, $\kappa_u(t,p)$, $\phi_u(t,p)$ and $\eu(t,p)$.

Note by \eqref{eq: e_u def} and rearranging \eqref{eq: u_dot_closed_loop_eq}, 
\begin{equation}
\label{eq: eu dyn}
    \deu(t) = -K_u\eu(t)+\tau_u(t)+\Delta_u(t).
\end{equation}
Recall $\udes$ is piecewise continuously differentiable by Definition \ref{def:traj_param}, so are $\eu$ and $\tau_u$.
Without loss of generality we denote $\{t_1,t_2,\ldots,t_{k_\text{max}}\}$ a finite subdivision of $[0,\tstop)$ with $t_1=0$ and $t_{k_\text{max}}=\tstop$ such that $\udes$ is continuously differentiable over time interval $[t_k,t_{k+1})$ for all $k\in\{1,2,\ldots,k_\text{max}-1\}$.
Define $V(t):= \frac{1}{2}\eu^2(t)$ as a Lyapunov function candidate for $\eu(t)$, then for arbitrary $k\in\{1,2,\ldots,k_\text{max}-1\}$ and $t\in[t_k,t_{k+1})$, one can check that $V(t)$ is always non-negative and $V(t)=0$ only if $\eu(t)=0$.
Then
\begin{align}
    \dot V(t) &= \eu(t)\deu(t)\\
    &= -K_u\eu^2(t)+\eu(t)\tau_u(t)+\eu(t)\Delta_u(t)\\
    &= -K_u\eu^2(t)-(\kappa_u(t)M_u+\phi_u(t))\eu^2(t) + \\ & \hspace*{1cm}+\eu(t)\Delta_u(t)
\end{align}
in which the second equality comes from \eqref{eq: eu dyn} and the third equality comes from \eqref{eq: tau_u def}.
Because the integral terms in \eqref{eq: kappa_u def} and \eqref{eq: phi_u def} are both non-negative, $\kappa_u(t)\geq\kappa_{1,u}$ and $\phi_u(t)\geq\phi_{1,u}$ hold. 
Then
\begin{equation}
\label{ineq: vdot}
    \begin{split}
        \dot V(t) \leq -K_u\eu^2(t)-(\kappa_{1,u}M_u+\phi_{1,u})|\eu(t)|^2+\\
        +|\eu(t)||\Delta_u(t)|.
    \end{split}
\end{equation}
By factoring out $|\eu(t)|$ in the last two terms in \eqref{ineq: vdot}:
\begin{equation}
\label{ineq: dV <= -Ku*eu^2}
    \dot V(t) \leq -K_u\eu^2(t)<0
\end{equation}
holds when $|\eu(t)|>0$ and $|\eu(t)|\geq\frac{|\Delta_u(t)|}{\kappa_{1,u}M_u+\phi_{1,u}}$.
Note $|\eu(t)|\geq\usmall$ conservatively implies $|\eu(t)|\geq\frac{|\Delta_u(t)|}{\kappa_{1,u}M_u+\phi_{1,u}}$ given $|\Delta_u(t)|\leq M_u$ for all time by Assumption \ref{ass: dyn error bnd}.
Then when $|\eu(t)|\geq\usmall>0$ we have \eqref{ineq: dV <= -Ku*eu^2} hold, or equivalently $V(t)$ decreases.  
Therefore if $|\eu(t_k)|\geq\usmall$, $|\eu(t)|$ monotonically decreases during time interval $[t_k,t_{k+1})$ as long as $|\eu(t)|$ does not reach at the boundary of closed ball $\B(0,\usmall)$.
Moreover, if $|\eu(t')|$ hits the boundary of $\B(0,\usmall)$ at some time $t'\in[t_k,t_{k+1})$, $\eu(t)$ is prohibited from leaving the ball for all $t\in[t',t_{k+1})$ because $\dot V(t)$ is strictly negative when $|\eu(t)|=\usmall$.
Similarly $|\eu(t_k)|\leq\usmall$ implies $|\eu(t)|\leq\usmall$ for all $t\in[t_k,t_{k+1})$.

We now analyze the behavior of $\eu(t)$ for all $t\in[0,\tstop)$.
By assumption $\udes(0)=u(0)$, then $|\eu(0)|=0<\usmall$ and thus $|\eu(t)|\leq\usmall$ for all $t\in[t_1,t_2)$.
Because both $u(t)$ and $\udes(t)$ are continuous during $[0,\tstop)$, so is $\eu(t_2)$.
Thus $|\eu(t_2)|\leq\usmall$.
By iteratively applying the same reasoning, one can show that $|\eu(t)|\leq\usmall$ for all $t\in[t_k,t_{k+1})$ and for all $k\in\{1,2,\ldots,k_\text{max}-1\}$, therefore $|\eu(t)|\leq\usmall$ for all $t\in[0,\tstop)$.
Furthermore, because $\udes(t)$ converges to $\uc$ as $t$ converges to $\tstop$ from below, $u(\tstop)\in[\uc-\usmall,\uc+\usmall]$.
Note $u(\tstop)\geq0$ because $\usmall\leq\uc$.

Next we analyze how longitudinal speed of the ego vehicle evolves after time $\tstop$.
Using $V(t)=\frac{1}{2}\eu^2(t)$, we point out that \eqref{ineq: vdot} remains valid for all $t\geq\tstop$, and \eqref{ineq: dV <= -Ku*eu^2} also holds when $|\eu(t)|\geq\usmall$ with $t\geq\tstop$.
Recall $u(t) = \eu(t)$ for all $t\geq\tstop$ given $\udes(t)=0$ for all $t\geq\tstop$, then for simplicity, the remainder of this proof replaces every $\eu(t)$ by $u(t)$ in \eqref{ineq: vdot}, \eqref{ineq: dV <= -Ku*eu^2} and $V(t)$.
Because $u(0)>0$ and $u$ is continuous with respect to time, the longitudinal speed of the ego vehicle cannot decrease from a positive value to a negative value without passing 0.
However when $u(t)=0$, by Assumption \ref{ass: dyn error bnd - low speed} $\Delta_u(t)=0$, thus $\dot u(t)=0$ by \eqref{eq: u_dot_closed_loop_eq} given $\udes(t)=0$ for all $t\geq\tstop$.
In other words, once $u$ arrives at 0, it remains at 0 forever.
For the ease of expression, from now on we assume $t\geq\tstop$ and $u(t)\geq0$ for all $t\geq\tstop$.
Recall $u(\tstop)\in[\uc-\usmall,\uc+\usmall]$ and $\uc-\usmall\in[0,\uc-0.15)$.
We now discuss how $u$ evolves after time $\tstop$ by considering three scenarios, and giving an upper bound of the time at when $u$ reaches 0 for each scenario.

     \noindent \textbf{Case 1 - When $u(\tstop)\leq0.15$:} Because the longitudinal speed stays at 0 once it becomes 0, by Assumption \ref{ass: small speed can stop} the ego vehicle reaches to a full stop no later than $\tfstop + \tstop$.
    
    \noindent \textbf{Case 2 - When $0.15<u(\tstop)\leq\usmall$:} By Assumption \ref{ass: dyn error bnd - low speed}, upper bound of $\dot V(t)$ can be further relaxed from \eqref{ineq: vdot} to
        \begin{equation}
        \label{ineq: dV with qu&bu}
        \begin{split}
            \dot V(t)\leq-K_u u^2(t) - (\kappa_{1,u}M_u+\phi_{1,u}+\\-\bpro)u^2(t)+\boff u(t).
        \end{split}
        \end{equation}
        Moreover, by completing the square among the last two terms in \eqref{ineq: dV with qu&bu}, one can derive
        \begin{equation}
        \label{ineq: dV with u in [0.15,usmall]}
            \dot V(t)\leq -K_u u^2(t) + \frac{(\boff)^2}{4(\kappa_{1,u}M_u+\phi_{1,u}-\bpro)}.
        \end{equation}
        Notice $\frac{(\boff)^2}{4(\kappa_{1,u}M_u+\phi_{1,u}-\bpro)} < 0.15^2 K_u$ by assumption, thus
        \begin{equation}
            \dot V(t) < -K_u(u^2(t)-0.15^2).
        \end{equation}
        This means as long as $u(t)\in[0.15, \uc]$ with $t\geq\tstop$, we obtain $\dot V(t)<0$, or equivalently $V(t) = \frac{1}{2}u^2(t)$ decreases monotonically.
        Recall $u(\tstop)\leq\usmall\leq\uc$, then the longitudinal speed decreases monotonically from $u(\tstop)$ to 0.15 as time increases from $\tstop$.
        Suppose $u$ becomes 0.15 at time $\tb'\geq\tstop$, then $u(t)\leq 0.15$ for all $t\geq\tb'$ because of the fact that $\dot V(t)$ is strictly negative when $u(t)=0.15$.
        
        Define $q_u:=\frac{(\boff)^2}{4(\kappa_{1,u}M_u+\phi_{1,u}-\bpro)}$, then when $u(t)\in[0.15,u(\tstop)]$, \eqref{ineq: dV with u in [0.15,usmall]} can be relaxed to
        \begin{equation}
        \label{ineq: dV with u in [0.15,usmall] +}
            \dot V(t)\leq -K_u\cdot 0.15^2 + q_u.
        \end{equation}
        Integrate both sides of \eqref{ineq: dV with u in [0.15,usmall] +} from time $\tstop$ to $\tb'$ results in
        \begin{equation}
            \tb'\leq\frac{u(\tstop)^2-0.15^2}{2\cdot0.15^2K_u - 2q_u} + \tstop.
        \end{equation}
        Because $u(\tstop)\leq\usmall$, 
        \begin{equation}
        \label{ineq: tb' upper bound}
            \tb'\leq\frac{(\usmall)^2-0.15^2}{2\cdot0.15^2K_u - 2q_u} + \tstop.
        \end{equation}
        Then $u$ becomes 0 no later than time $\tfstop+\sup(\tb')$ based on Assumption \ref{ass: small speed can stop}, where $\sup(\tb')$ as the upper bound of $\tb'$ reads
        \begin{equation}
            \sup(\tb') = \frac{(\usmall)^2-0.15^2}{2\cdot0.15^2K_u - 2q_u} + \tstop.
        \end{equation}
        
     \noindent \textbf{Case 3 - When $\usmall <u(\tstop)\leq \uc+\usmall$:} Recall \eqref{ineq: dV <= -Ku*eu^2} holds given $|\eu(t)|=u(t)\geq\usmall$, then 
        \begin{equation}
        \label{ineq: dV <= -Ku * usmall^2}
            \dot V(t)\leq-K_u\eu^2(t)\leq-K_u(\usmall)^2,
        \end{equation}
        and we have the longitudinal speed monotonically decreasing from $u(\tstop)$ at time $\tstop$ until it reaches at $\usmall$ at some time $\tsmall\geq\tstop$.
        Integrating the left hand side and right hand side of \eqref{ineq: dV <= -Ku * usmall^2} from $\tstop$ to $\tsmall$ gives
        \begin{equation}
        \label{ineq: vdot integration}
            \frac{1}{2}(\usmall)^2 - \frac{1}{2}u(\tstop)^2 \leq -K_u(\usmall)^2(\tsmall - \tstop).
        \end{equation}
        Because $u(\tstop)\leq\uc+\usmall$, \eqref{ineq: vdot integration} results in 
        \begin{equation}
            \tsmall\leq\frac{(\uc+\usmall)^2 - (\usmall)^2}{2K_u(\usmall)^2}+\tstop.
        \end{equation}
        
        Once the longitudinal speed decreases to $\usmall$, we can then follow the same reasoning as in the second scenario for seeking an upper bound of some time $\tb''$ that is no smaller than $\tsmall$ and gives $u(\tb'')=0.15$.
        However, this time we need to integrate both sides of \eqref{ineq: dV with u in [0.15,usmall] +} from time $\tsmall$ to $\tb''$.
        As a result, 
        \begin{equation}
        \label{ineq: tb'' upper bound}
            \tb''\leq\frac{(\usmall)^2-0.15^2}{2\cdot0.15^2K_u - 2q_u} + \tsmall.
        \end{equation}
        Then $u$ becomes 0 no later than time $\tfstop+\sup(\tb'')$ based on Assumption \ref{ass: small speed can stop}, where $\sup(\tb'')$ as the upper bound of $\tb''$ reads
        \begin{equation}
        \begin{split}
            \sup(\tb'') = &\frac{(\usmall)^2-0.15^2}{2\cdot0.15^2K_u - 2q_u} + \\
            &+\frac{(\uc+\usmall)^2 - (\usmall)^2}{2K_u(\usmall)^2}+\tstop.
        \end{split}
        \end{equation}

Now that we have the upper bound for $u$ across these three scenarios, recall that once $u$ arrives at 0, it remains at 0 afterwards, and notice $\sup(\tb'')>\sup(\tb')>\tstop$.
Considering all three scenarios discussed above, setting $\tb$ as the maximum value among $\tfstop+\tstop$, $\tfstop+\sup(\tb')$ and $\tfstop+\sup(\tb'')$, i.e.,
\begin{equation}
\label{eq: tb computation}
    \begin{split}
        \tb = &\tfstop+\frac{(\usmall)^2-0.15^2}{2\cdot0.15^2K_u - 2q_u} + \\
        &+\frac{(\uc+\usmall)^2 - (\usmall)^2}{2K_u(\usmall)^2}+\tstop
    \end{split}
\end{equation}
guarantees that $u(t)=0$ for all $t\geq\tb$. 
\end{proof}
\section{Proof of Theorem \ref{thm: slice_is_good}}
\label{app:slicing}

\begin{proof}
    Because $\zvel_0$ and $p$ have zero dynamics in $HS$, the last $3+n_p$ dimensions in $\RR_j$ are identical to $\Zvel_0\times\P$ for all $j\in \J$. 
    A direct result of Proposition \ref{prop: sliceable} and Definition \ref{defn:slice} is $\Zvel_0\times\P = \zonocg{c_j'}{G_j'}$ where $c_j' = \big[[c_{\RR_j}]_7, [c_{\RR_j}]_8, \ldots, [c_{\RR_j}]_{(9+n_p)}\big]^\top$ and $G_j' = \diag\left(\big[[g_{\RR_{j,1}}]_7, [g_{\RR_{j,2}}]_8, \ldots, [g_{\RR_{j,(3+n_p)}}]_{(9+n_p)}   \big]\right)$ for all $j\in\J$.
    Because $\zvel_0\in \Zvel_0$ and $p\in\P$, then $\frac{[\zvel_0]_{(k-6)}-[c_{\RR_j}]_k}{[g_{\RR_j,(k-6)}]_k} \in [-1,1]$ for all $k\in\{7,8,9\}$, and $\frac{[p]_{(k-9)}-[c_{\RR_j}]_k}{[g_{\RR_j,(k-6)}]_k}\in[-1,1]$ for all $k\in\{10,11,\ldots,(9+n_p)\}$ by Definition \ref{def: zonotope}.
    $\slice( \RR_,\zvel_0,p)$ is generated by specifying the coefficients of the first $3 + n_p$ generators in $\RR_j$ via \eqref{eq: slice center}, thus $\slice( \RR_j,\zvel_0,p)\subset \RR_j$. 
    
    If a solution of $HS$ has initial velocity $\zvel_0$ and control parameter $p$, then the last $3+n_p$ dimensions in $\zaug$ are fixed at $[(\zvel_0)^\top, p^\top]^\top$ for all $t\in T_j$ because of \eqref{eq: dyn tilde_z}.
    $\RR_j$ is generated from CORA, so $\zaug(t)\in \RR_j$ for all $t \in T_j$ by Theorem \ref{thm: FRS over-approximation}, which proves the result.
\end{proof}
\section{Proof of Lemma \ref{lem:footprint}}
\label{app:footprint}

Before proving Lemma \ref{lem:footprint}, we prove the following lemma:
\begin{lem}
\label{lem: slice and rot}
    Let $\RR_j$ be the zonotope computed by CORA under the hybrid vehicle dynamics model beginning from $\Zaug_0$ for arbitrary $j\in\J$.
    Then for any $\zvel_0\in\Zvel_0$ and $p\in\P$
    \begin{equation}
    \label{eq: slice and rot}
        \begin{split}
            \slice\big(\RR_j\oplus & \ROT(\pi_h(\RR_j)), \zvel_0, p\big) = \ROT(\pi_h(\RR_j)) \oplus \\
            & \oplus \slice(\RR_j, \zvel_0, p).
        \end{split}    
    \end{equation}
\end{lem}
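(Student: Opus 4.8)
The plan is to unfold both sides of \eqref{eq: slice and rot} using the explicit zonotope formulas from Section~\ref{sec: prelim} for the Minkowski sum and from Definition~\ref{defn:slice} for the slicing operator, and then to exploit the fact that $\ROT(\pi_h(\RR_j))$ contributes exactly two generators and a center block that all vanish in the dimensions $\{7,\ldots,9+n_p\}$ on which $\slice$ acts. The consequence is that forming the sum $\RR_j\oplus\ROT(\pi_h(\RR_j))$ changes neither which generators are sliceable nor the scalar coefficients used by $\slice$ in \eqref{eq: slice center}, so $\slice$ simply carries the extra center block and the extra generators through unchanged, which is precisely the distributivity claimed.

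First I would fix notation: write $\RR_j=\zonocg{c_{\RR_j}}{G_{\RR_j}}$ with its first $3+n_p$ columns $g_{\RR_j,1},\ldots,g_{\RR_j,3+n_p}$ being the sliceable generators in the order prescribed by Definition~\ref{defn:slice}, and recall from the definition of $\ROT$ that $\ROT(\pi_h(\RR_j))$ is the zonotope in $\R^{9+n_p}$ whose center has $c_\rot\in\R^2$ in its first two coordinates and zeros in coordinates $3,\ldots,9+n_p$, and whose two generators (the columns of $G_\rot$ padded by zeros) are also zero in coordinates $3,\ldots,9+n_p$. By the Minkowski-sum rule for zonotopes, $\RR_j\oplus\ROT(\pi_h(\RR_j))$ has center $c_{\RR_j}$ plus that zero-padded $c_\rot$, and generator list $G_{\RR_j}$ followed by the two zero-padded columns of $G_\rot$. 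Because the two appended generators vanish in every dimension $k\in\{7,\ldots,9+n_p\}$, this augmented zonotope still has exactly one generator nonzero in each such dimension, namely the original $g_{\RR_j,k-6}$; hence it again satisfies the structural hypothesis of Definition~\ref{defn:slice} with the same sliceable generators occupying the same first $3+n_p$ positions in the same increasing-dimension order, so $\slice$ is legitimately applicable to it. Moreover, since the appended center block is zero in coordinates $3,\ldots,9+n_p$, we have $[\,c_{\RR_j}+(\text{zero-padded }c_\rot)\,]_k=[c_{\RR_j}]_k$ for all $k\in\{7,\ldots,9+n_p\}$, so every coefficient $\frac{[\zvel_0]_{k-6}-[c]_k}{[g_{\RR_j,k-6}]_k}$ and $\frac{[p]_{k-9}-[c]_k}{[g_{\RR_j,k-6}]_k}$ appearing in \eqref{eq: slice center} takes the same value whether computed from $\RR_j$ or from $\RR_j\oplus\ROT(\pi_h(\RR_j))$.

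Next I would evaluate $\slice$ on the left-hand side of \eqref{eq: slice and rot} directly from Definition~\ref{defn:slice}: it drops the $3+n_p$ sliceable generators and retains $g_{\RR_j,4+n_p},\ldots,g_{\RR_j,\ell_j}$ together with the two generators inherited from $\ROT(\pi_h(\RR_j))$, while its sliced center equals $c_{\RR_j}+(\text{zero-padded }c_\rot)$ plus exactly the same correction sum that appears in the center of $\slice(\RR_j,\zvel_0,p)$, by the coefficient identity just established. Thus the left-hand side is the zonotope whose center is $(\text{center of }\slice(\RR_j,\zvel_0,p))+(\text{zero-padded }c_\rot)$ and whose generators are $g_{\RR_j,4+n_p},\ldots,g_{\RR_j,\ell_j}$ followed by the two zero-padded columns of $G_\rot$ --- and this is precisely $\ROT(\pi_h(\RR_j))\oplus\slice(\RR_j,\zvel_0,p)$ by the Minkowski-sum rule again. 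Since the whole argument is a chain of substitutions into displayed formulas, the one point that genuinely requires care --- the ``main obstacle'' --- is the bookkeeping tied to the ordering convention of Definition~\ref{defn:slice}: I must make sure that placing the two $\ROT$ generators after the generators of $\RR_j$ preserves both the ``sliceable generators are the first $3+n_p$ columns'' convention and the increasing-dimension ordering, and that the single-generator-per-dimension conclusion of Proposition~\ref{prop: sliceable} survives the Minkowski sum so that $\slice$ is well-defined on $\RR_j\oplus\ROT(\pi_h(\RR_j))$.
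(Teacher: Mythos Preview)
Your proposal is correct and follows essentially the same approach as the paper: the paper's proof simply observes that $\ROT(\pi_h(\RR_j))$ contributes no sliceable generators (being zero in the last $7+n_p$ coordinates), so $\RR_j\oplus\ROT(\pi_h(\RR_j))$ has the same sliceable generators as $\RR_j$, and since $\slice$ only affects sliceable generators the identity follows. Your version unpacks this argument in full detail, including the explicit check that the center correction in \eqref{eq: slice center} is unaffected and that the ordering convention of Definition~\ref{defn:slice} is preserved, but the underlying idea is identical.
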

\begin{proof}
    Because $\ROT(\pi_h(\RR_j))$ is independent of $\zvel_0$ and $p$ by definition, $\RR_j$ shares the same sliceable generators as $\RR_j\oplus\ROT(\pi_h(\RR_j))$.
    The slice operator only affects sliceable generators, thus \eqref{eq: slice and rot} holds.
\end{proof}

Now we prove Lemma \ref{lem:footprint}:

\begin{proof}
By definition $\slice(\RR_j, \zvel_0, p)$ and $\ROT(\pi_h(\RR_j))$ are both zonotopes, thus $\slice\big(\RR_j\oplus\ROT(\pi_h(\RR_j)), \zvel_0, p\big)$ is a zonotope per \eqref{eq: slice and rot}.
For simplicity denote $\slice\big(\RR_j\oplus\ROT(\pi_h(\RR_j)), \zvel_0, p\big)$ as $\zonocg{c''}{G''}$, then $\xi(\RR_j,\zvel_0,p)$ is a zonotope because
\begin{equation}
    \pi_{xy}\big(\zonocg{c''}{G''}\big) = \left< \begin{bmatrix} [c'']_1 \\ [c'']_2 \end{bmatrix},~ \begin{bmatrix}[G'']_{1:} \\ [G'']_{2:}\end{bmatrix} \right>.
\end{equation}

Note $\pi_{xy}\big(\ROT(\pi_h(\RR_j))\big)=\rot(\pi_h(\RR_j))$, and by using the definition of $\pi_{xy}$ one can check that $\pi_{xy}(\mathcal A_1\oplus\mathcal A_2) = \pi_{xy}(\mathcal A_1)\oplus\pi_{xy}(\mathcal A_2)$ for any zonotopes $\mathcal A_1, \mathcal A_2\subset \R^{9+n_p}$.
Then by Lemma \ref{lem: slice and rot},
\begin{equation}
\label{eq: xi = pi_slice + rot}
    \xi(\RR_j,\zvel_0,p) = \pi_{xy}\big(\slice(\RR_j, \zvel_0, p)\big)\oplus\rot(\pi_h(\RR_j)).
\end{equation}

By Theorem \ref{thm: slice_is_good} for any $t\in T_j$ and $j\in\J$,  $\zaug(t)\in\slice(\RR_j,\zvel_0,p)\subset\RR_j$, then $h(t)\in\pi_h(\RR_j)$.
Because $\rot(\pi_h(\RR_j))$ by construction outer approximates the area over which $\Oego$ sweeps according to all possible heading of the ego vehicle during $T_j$,  then $\xi(\RR_j,\zvel_0,p)$ contains the vehicle footprint oriented according to $\pi_h(\RR_j)$ and centered at $\pi_{xy}(\zaug(t))$ during $T_j$.
\end{proof}
\section{Proof of Theorem \ref{thm:constraint}}
\label{app:constraint}

We first prove a pair of lemmas. 
The first lemma simplifies the expression of $\xi(\RR_j,z_0,p)$.

\begin{lem}
\label{lem: xi as a zono}
    Let $\RR_j= \zonocg{c_{\RR_j}}{[g_{\RR_j,1},g_{\RR_j,2},\ldots,g_{\RR_j,\ell_j}]}$ be the zonotope computed by CORA under the hybrid vehicle dynamics model $HS$ beginning from $\Zaug_0$ for arbitrary $j\in\J$, and let $\rot(\pi_h(\RR_j))=\zonocg{c_\rot}{G_\rot}$ be defined as \eqref{eq: def rot}.
    Then for arbitrary $\zvel_0\in\Zvel_0$ and $p\in\P$, there exist $c_\xi\in\W$, $A\in\R^{2\times n_p}$ and a real matrix $G_\xi$ with two rows such that
        $\xi(\RR_j,\zvel_0,p) = \zonocg{c_\xi+A\cdot p}{G_\xi}$.

\end{lem}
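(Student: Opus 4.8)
The plan is to build directly on the decomposition of $\xi$ obtained in the proof of Lemma~\ref{lem:footprint}. By \eqref{eq: xi = pi_slice + rot}, for any $\zvel_0 \in \Zvel_0$ and $p \in \P$,
\begin{equation*}
  \xi(\RR_j,\zvel_0,p) = \pi_{xy}\big(\slice(\RR_j,\zvel_0,p)\big) \oplus \rot\big(\pi_h(\RR_j)\big),
\end{equation*}
so it suffices to track how each summand depends on $p$. The second summand $\rot(\pi_h(\RR_j)) = \zonocg{c_\rot}{G_\rot}$ is, by \eqref{eq: def rot}, determined by $\RR_j$ alone, hence independent of both $p$ and $\zvel_0$; it will contribute the fixed center $c_\rot$ and the fixed generator block $G_\rot$.

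The $p$-dependence therefore lives entirely in the first summand, and it enters only affinely. By Definition~\ref{defn:slice}, $\slice(\RR_j,\zvel_0,p) = \zonocg{c^\text{slc}}{[g_{\RR_j,(4+n_p)},\ldots,g_{\RR_j,\ell_j}]}$: the generator matrix is obtained from $G_{\RR_j}$ by deleting the first $3+n_p$ sliceable columns, so it involves neither $p$ nor $\zvel_0$. I would then split the center $c^\text{slc}$ of \eqref{eq: slice center} as $c^\text{slc} = \tilde c(\RR_j,\zvel_0) + \tilde A\, p$, where $\tilde c$ collects $c_{\RR_j}$, the $k \in \{7,8,9\}$ sum, and the constant parts of the $k \in \{10,\ldots,9+n_p\}$ sum, and where, on writing $i = k - 9$, the matrix $\tilde A \in \R^{(9+n_p)\times n_p}$ has $i$-th column $\frac{1}{[g_{\RR_j,(i+3)}]_{i+9}}\, g_{\RR_j,(i+3)}$. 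Proposition~\ref{prop: sliceable} ensures each denominator here is nonzero and that distinct sliceable generators occupy distinct augmented coordinates, which is exactly what legitimizes this splitting.

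It then remains to assemble the pieces. Applying $\pi_{xy}$ keeps the first two rows, so $\pi_{xy}(\slice(\RR_j,\zvel_0,p))$ is the zonotope whose center is $\bar c + A\, p$, where $\bar c \in \R^2$ is the vector of the first two entries of $\tilde c$ and $A \in \R^{2 \times n_p}$ consists of the first two rows of $\tilde A$, and whose generator block is the first two rows of $[g_{\RR_j,(4+n_p)},\ldots,g_{\RR_j,\ell_j}]$. Taking the Minkowski sum with $\rot(\pi_h(\RR_j))$ adds $c_\rot$ to the center and appends $G_\rot$ to the generators, by the Minkowski-sum rule for zonotopes. Setting $c_\xi := \bar c + c_\rot$ and $G_\xi$ to be the concatenation of the two generator blocks yields $\xi(\RR_j,\zvel_0,p) = \zonocg{c_\xi + A\, p}{G_\xi}$ with $c_\xi$, $A$, and $G_\xi$ all free of $p$, as claimed.

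I expect no genuine obstacle here: once \eqref{eq: xi = pi_slice + rot} is in hand the argument is bookkeeping in linear algebra, and the only care required is matching each sliceable generator index $i+3$ with the augmented coordinate $i+9$ it occupies, so that the scalars in \eqref{eq: slice center} are read off correctly — a structure handed to us by Proposition~\ref{prop: sliceable}.
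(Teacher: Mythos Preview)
Your proposal is correct and mirrors the paper's own proof almost exactly: both start from \eqref{eq: xi = pi_slice + rot}, observe that the generator block and the $\rot$ summand are $p$-independent, split $c^\text{slc}$ into a $p$-free part plus a linear map in $p$ (your $\tilde A$ is precisely the paper's $A'$, and your $A$ coincides with the paper's \eqref{eq: A def}), and then assemble via Minkowski sum and $\pi_{xy}$. The only addition you make is the explicit appeal to Proposition~\ref{prop: sliceable} for nonvanishing denominators, which is a welcome clarification.
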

\begin{proof}
Recall $c^\text{slc}$ is defined as in \eqref{eq: slice center}, then
\begin{equation}
\label{eq: xi simplification}
    \begin{split}
        \hspace{-0.22cm} \xi(\RR_j,\zvel_0,p)   
                                        & =  \pi_{xy}\big( \zonocg{c^\text{slc}}{[g_{\RR_j,(3+n_p+1)},\ldots\\
                                        & \hspace{1.8cm}   \ldots,g_{\RR_j,\ell_j}]}  \big)\oplus \rot(\pi_h(\RR_j)) \\
                                        & = \zonocg{\pi_{xy}(c^\text{slc}) + c_\rot}{[\pi_{xy}(g_{\RR_j,(4+n_p)}),\ldots\\
                                        & \hspace{2.7cm}  \ldots,\pi_{xy}(g_{\RR_j,\ell_j}), G_\rot ]}.
    \end{split}
\end{equation}
where the first equality comes from using \eqref{eq: xi = pi_slice + rot} and \eqref{eq: slice def} and the last equality comes from denoting $\rot(\pi_h(\RR_j))$ as $\zonocg{c_\rot}{G_\rot}$ and performing Minkowski addition on two zonotopes.
$c^\text{slc}$ can be rewritten as
\begin{equation}
    \begin{split}
        c^\text{slc} = & ~ c_{\RR_j} + \sum_{k=7}^{9}\frac{[\zvel_0]_{(k-6)}-[c_{\RR_j}]_k}{[g_{\RR_j,(k-6)}]_k}g_{\RR_j,(k-6)} + \\
            &  - \sum_{k=10}^{9+n_p}\frac{[c_{\RR}]_k}{[g_{\RR,(k-6)}]_k}g_{\RR,(k-6)} + A'\cdot p 
    \end{split}
\end{equation}
with $A' = \left[ \frac{1}{[g_{\RR_j,4}]_{10}}g_{\RR_j,4}, \ldots, \frac{1}{[g_{\RR_j,(3+n_p)}]_{(9+n_p)}}g_{\RR_j,(3+n_p)} \right]$. 
Therefore by performing algebra one can find that $\xi(\RR_j,\zvel_0,p) = \zonocg{c_\xi+A\cdot p}{G_\xi}$ with some $c_\xi$, $G_\xi$ and
\begin{equation}
\label{eq: A def}
    \begin{split}
      A = &  \left[ \frac{1}{[g_{\RR_j,4}]_{10}}\pi_{xy}(g_{\RR_j,4}),\frac{1}{[g_{\RR_j,5}]_{11}}\pi_{xy}(g_{\RR_j,5}),\ldots\right.\\ 
    & \hspace{0.9cm}   \left.\ldots, \frac{1}{[g_{\RR_j,(3+n_p)}]_{(9+n_p)}}\pi_{xy}(g_{\RR_j,(3+n_p)}) \right].
    \end{split}
\end{equation}
\end{proof}

Note $\vartheta^\text{loc}(j,i,\zpos_0)$ is a zonotope by construction in \eqref{eq: vartheta local} because $\vartheta(j,i)$ is assumed to be a zonotope.
The following lemma follows from \cite[Lem. 5.1]{guibas2003zonotopes} and allows us to represent the intersection constraint in \opt.
\begin{lem}
\label{lem: set intersection is empty}
 Let $\xi(\RR_j,\zvel_0,p) = \zonocg{c_\xi+A\cdot p}{G_\xi}$ be computed as in Lemma \ref{lem: xi as a zono}, and let $\vartheta^\text{loc}(j,i,\zpos_0) = <c_\vartheta, G_\vartheta>$ be computed from Assumptions \ref{ass: obs in T} and \eqref{eq: vartheta local}.
 Then $\xi(\RR_j,\zvel_0,p)\cap\vartheta^\text{loc}(j,i,\zpos_0)\neq\emptyset$ if and only if 
     $A\cdot p\in \zonocg{c_\vartheta-c_\xi}{[G_\vartheta,G_\xi]}$.
\end{lem}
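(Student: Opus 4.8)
The plan is to prove the equivalence directly from Definition \ref{def: zonotope} and the characterization of zonotope intersection, which is the content of \cite[Lem. 5.1]{guibas2003zonotopes}; for completeness I would also include the short self-contained derivation, since it is just bookkeeping with the unit-box coefficients.

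First I would unpack the hypothesis. Write $\xi(\RR_j,\zvel_0,p) = \zonocg{c_\xi+A\cdot p}{G_\xi}$ with $G_\xi\in\R^{2\times\ell_\xi}$ and $\vartheta^\text{loc}(j,i,\zpos_0) = \zonocg{c_\vartheta}{G_\vartheta}$ with $G_\vartheta\in\R^{2\times\ell_\vartheta}$; both are genuine zonotopes by Lemma \ref{lem: xi as a zono} and by Assumption \ref{ass: obs in T} together with \eqref{eq: vartheta local}. By Definition \ref{def: zonotope}, the intersection is nonempty if and only if there exist $\beta_\xi\in[-1,1]^{\ell_\xi}$ and $\beta_\vartheta\in[-1,1]^{\ell_\vartheta}$ with $c_\xi + A\cdot p + G_\xi\beta_\xi = c_\vartheta + G_\vartheta\beta_\vartheta$.

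Next I would isolate $A\cdot p$. The displayed equation is equivalent to $A\cdot p = (c_\vartheta - c_\xi) + [G_\vartheta, G_\xi]\left[\begin{smallmatrix}\beta_\vartheta \\ -\beta_\xi\end{smallmatrix}\right]$. Since $\beta_\xi\mapsto-\beta_\xi$ is a bijection of $[-1,1]^{\ell_\xi}$ onto itself, the existence of such $\beta_\xi,\beta_\vartheta$ is equivalent to the existence of a single $\gamma\in[-1,1]^{\ell_\vartheta+\ell_\xi}$ with $A\cdot p = (c_\vartheta - c_\xi) + [G_\vartheta, G_\xi]\gamma$, which by Definition \ref{def: zonotope} is exactly the assertion $A\cdot p\in\zonocg{c_\vartheta-c_\xi}{[G_\vartheta,G_\xi]}$. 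For the converse implication one reads the same chain of equalities backwards: given such a $\gamma=(\gamma_1,\gamma_2)$, set $\beta_\vartheta=\gamma_1$ and $\beta_\xi=-\gamma_2$, and then $c_\xi + A\cdot p + G_\xi\beta_\xi = c_\vartheta + G_\vartheta\beta_\vartheta$ exhibits a common point of the two zonotopes.

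I do not expect any real obstacle here; the argument is a direct manipulation of the generator representation. The only points needing a word of care are that negating a block of the unit-box coefficient vector is a bijection (so the sign flip on $\beta_\xi$ is harmless) and that concatenating the two generator matrices in either order yields the same set, so that the ordering $[G_\vartheta, G_\xi]$ appearing in the statement matches the $[G_\vartheta,G_\xi]$ that falls out of the computation.
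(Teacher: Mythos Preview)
Your proposal is correct and takes essentially the same approach as the paper: the paper simply states that the lemma ``follows from \cite[Lem.~5.1]{guibas2003zonotopes}'' without giving any further argument, and your derivation is precisely the self-contained proof of that cited fact specialized to the present centers and generator matrices. The only difference is that you spell out the coefficient bookkeeping (the sign flip on $\beta_\xi$ and the concatenation of generators), which the paper omits.
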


Now we can finally state the proof of Theorem \ref{thm:constraint}:

\begin{proof}
Let $\xi(\RR_j,\zvel_0,p) = \zonocg{c_\xi+A\cdot p}{G_\xi}$ as computed in Lemma \ref{lem: xi as a zono}, and let $\vartheta^\text{loc}(j,i,\zpos_0) = <c_\vartheta, G_\vartheta>$ be computed from Assumption \ref{ass: obs in T} and \eqref{eq: vartheta local}.
 Because all zonotopes are convex polytopes \cite{guibas2003zonotopes}, zonotope $\zonocg{c_\vartheta-c_\xi}{[G_\vartheta,G_\xi]}\subset \W\subseteq\R^2$ can be transferred into a half-space representation $\mathcal A := \{a\in\W\mid B\cdot a-b\leq0\}$ for some matrix $B$ and vector $b$.
 To find such $B$ and $b$, we denote $c = c_\vartheta-c_\xi\in\R^2$ and $G=[G_\vartheta,G_\xi]\in\R^{2\times\ell}$ with some positive integer $\ell$, and denote $B^- = \begin{bmatrix} -[G]_{2:} \\ [G]_{1:} \end{bmatrix}\in\R^{2\times\ell}$.
 Define
\begin{equation}
      \hspace{-0.2cm}  B^+:=\left[ \frac{[B^-]_{:1}}{\|[B^-]_{:1}\|} , \frac{[B^-]_{:2}}{\|[B^-]_{:2}\|} , \ldots , \frac{[B^-]_{:\ell}}{\|[B^-]_{:\ell}\|} \right]^\top \in\R^{\ell\times 2}.
\end{equation}
Then as a result of \cite[Thm 2.1]{althoff2010reachability}, $\zonocg{c}{G}=\{a\in\W\mid B\cdot a-b\leq0\}$ with
\begin{align}
    B &= \begin{bmatrix} B^+ \\ -B^+ \end{bmatrix}\in\R^{2\ell\times 2}, \label{eq:Bdef} \\ 
     b &= \begin{bmatrix} B^+ \cdot c+|B^+ \cdot G| \cdot \mathbf{1} \\ -B^+ \cdot c+|B^+ \cdot G| \cdot \mathbf{1} \end{bmatrix}\in\R^{2\ell} \label{eq:bdef}
\end{align}
where $\mathbf{1}\in\R^\ell$ is the column vector of ones.

By Lemma \ref{lem: set intersection is empty}, $\xi(\RR_j(d),\zvel_0,p)\cap\vartheta^\text{loc}(j,i,\zpos_0)=\emptyset$ if and only if $A\cdot p\notin \zonocg{c_\vartheta-c_\xi}{[G_\vartheta,G_\xi]}$, or in other words $A\cdot p\notin \mathcal A$. 
Notice $A\cdot p\notin \mathcal A$ if and only if $\max(B\cdot A\cdot p-b)>0$.

The subgradient claim follows from \cite[Theorem 5.4.5]{polak2012optimization}.
\end{proof}

\end{document}